\DeclareMathOperator*{\argmin}{arg\,min}
\theoremstyle{plain}
\newtheorem{theorem}{Theorem}[section]
\theoremstyle{definition}
\newtheorem{assumption}[theorem]{Assumption}
\theoremstyle{remark}
\title{Mixup Regularization: A Probabilistic Perspective}
\author[1]{\href{mailto:<yousef.el-laham@jpmchase.com>?Subject=Your UAI 2025 paper}{Yousef El-Laham}{}}
\author[1]{Niccol\`{o} Dalmasso}
\author[1]{Svitlana Vyetrenko}
\author[1]{Vamsi K. Potluru}
\author[1]{Manuela Veloso}
\affil[1]{%
    J.P. Morgan AI Research\\
    New York, NY, USA
}
\begin{document}
\maketitle

\begin{abstract}
In recent years, mixup regularization has gained popularity as an effective way to improve the generalization performance of deep learning models by training on convex combinations of training data. While many mixup variants have been explored, the proper adoption of the technique to conditional density estimation and probabilistic machine learning remains relatively unexplored. This work introduces a novel framework for mixup regularization based on probabilistic fusion that is better suited for conditional density estimation tasks. For data distributed according to a member of the exponential family, we show that likelihood functions can be analytically fused using log-linear pooling. We further propose an extension of probabilistic mixup, which allows for fusion of inputs at an arbitrary intermediate layer of the neural network. We provide a theoretical analysis comparing our approach to standard mixup variants. Empirical results on synthetic and real datasets demonstrate the benefits of our proposed framework compared to existing mixup variants. 
\end{abstract}

\section{Introduction}
\label{sec:intro}
Mixup regularization has become a notable technique for improving generalization in deep learning for supervised learning tasks \citep{zhang2017mixup}. Models trained with mixup backpropagate the loss function on random convex combinations of training sample pairs, using a mixing coefficient sampled from a Beta distribution. Theoretically, mixup has been analyzed as a form of \emph{vicinal risk minimization} (VRM) \citep{chapelle2000vicinal, zhang2020does, carratino2022mixup}, contrasting with \emph{empirical risk minimization} (ERM) \citep{vapnik1999overview}. Empirical results show mixup effectively improves out-of-sample performance while maintaining competitive performance with state of the art models in data modalities beyond tabular data, such as image \citep{guo2019mixup, liu2023mixmae, islam2024diffusemix, wang2024enhance}, natural language processing \citep{guo1905augmenting, sun2020mixup, zhang2020seqmix}, graph \citep{verma2021graphmix, han2022g, jeongigraphmix} and speech \citep{tokozume2018between, zhang2022contrastive}. Overall, many variations have been introduced to address different aspects of the original strategy, including the choice of mixing coefficient distribution, modifications for different data modalities, and label/feature mixing strategies; we refer the reader to \cite{jin2024survey} for a comprehensive survey.

In tandem with these developments and the rise of large generative language models, the idea of combining or ``fusing" multiple deep learning models has gained attention as a means to improve model capacity \citep{cai2024survey}. Key frameworks include mixture of experts, which partitions the input space and assigns different models to different regions of the space, and product of experts, which combines model outputs by multiplying their probability distributions \citep{jacobs1991adaptive, hinton2002training} and have been successfully incorporated into transformer architectures to train large scale language models \citep{lepikhin2020gshard, jiang2024mixtral, liu2024deepseek}. Related to this are deep ensembling and its variants, such as deep Gaussian mixture ensembling, have been adopted to improve performance and robustness \citep{lakshminarayanan2017simple, el-laham2023deep}. These frameworks can be viewed as probabilistic fusion of multiple predictors; see \citet{koliander2022fusion} for a review on fusing probability density functions.

Our work is motivated by the success of probabilistic fusion, which has not yet found its footing in the mixup literature. Traditionally, mixup fuses data (or their embeddings) directly rather than operating on statistical manifolds, which are represented by random variables and their corresponding density functions. This limits mixup's application to random variables; adapting mixup to more general settings like conditional density estimation or probabilistic machine learning could leverage probabilistic fusion benefits. In this paper, we introduce \textit{Prob}abilistic \textit{Mix}up (\textsf{ProbMix}), a general framework for handling uncertainty in mixup regularization by adapting the methodology to a probabilistic setting using the idea of probabilistic fusion. As this is the first approach of its kind, we consider supervised learning tasks using both tabular and time series data, and comparing with the most used variants in such settings: (i) vanilla mixup \citep{zhang2017mixup}, (ii) manifold mixup \citep{verma2019manifold, el2024augment}, which constructs mixup augmentations on intermediate layers of the neural network and (iii) local mixup \citep{guo2019mixup} which performs mixup augmentations locally based on the assigned class of the data to address the problem of manifold intrusion.

The contributions of this work are: 
\begin{enumerate}
    \item We present a novel reformulation of mixup from a probabilistic perspective called \textsf{ProbMix}, that regularizes an arbitrary model by fusing likelihood functions from different training samples. 
    We show that log-linear fusion of likelihoods is analytically tractable for 
    exponential families members, allowing for easy implementation for both classification and regression settings.
    \item We propose an extension of \textsf{ProbMix} called \textsf{M-ProbMix} that allows for probabilistic fusion at any intermediate layer of the conditional density estimator. 
    \item We provide theoretical results showing that for certain choices of the fusion function, mixup and manifold mixup are special cases of \textsf{ProbMix} and \textsf{M-ProbMix}. 
    \item We demonstrate the competitive or superior performance  of \textsf{ProbMix} and \textsf{M-ProbMix} on classification and regression tasks on several real datasets in terms of uncertainty calibration on out-of-sample data.
\end{enumerate}


\section{BACKGROUND}

\subsection{Problem Setting}
\label{sec: problem_setting}
This work studies generalization in supervised learning problems from the perspective of statistical learning theory.  In supervised learning, we are interested in learning a function $f: x\in{\cal X}\subseteq\mathbb{R}^{d_x} \to y\in{\cal Y}\subseteq\mathbb{R}^{d_y}$, for some $d_x, d_y \in \mathbb{N}^+$. Suppose that the underlying random variables $X$ and $Y$  have a joint probability density function (pdf) $(X, Y)\sim p_{\rm data}(x, y)$. Our goal is to learn a function $f\in{\cal F}$ that minimizes the risk $R[f]$ defined as:
\begin{align}
    \label{eq: expected_risk}
    R[f] &= \mathbb{E}[\ell(f(x), y)] \\
    &= \int_{{\cal X}\times {\cal Y}} \ell(f(x), y) p_{\rm data}(x, y) {\rm d}x{\rm d}y,
\end{align}
where  $\ell: {\cal Y}\times{\cal Y}\rightarrow \mathbb{R}$ is a loss function that measures the discrepancy between a prediction $\hat{y}=f(x)$ and the true output $y$. Typically, one restricts $f$ to belong to a parametric family of functions $f_\theta\in {\cal F}_{\theta}$ defined by parameters $\theta\in\Theta$. In this context, the goal is to learn the parameters $\widehat\theta$ such that they solve the following optimization problem:
\begin{equation}
    \label{eq: minimizing_expected_risk}
    \widehat\theta = \argmin _{\theta \in \Theta} R(\theta), 
\end{equation}
where $R(\theta)=\mathbb{E}[\ell(f_\theta(x), y)]$. Since $p_{\rm data}(x, y)$ is unknown, it is common to find the optimal $\widehat\theta$ by minimizing an approximation of \eqref{eq: minimizing_expected_risk}. The most common approach is ERM, where parameters are learned by minimizing an approximation of the risk using the empirical distribution of a dataset of i.i.d. observations ${\cal D}=\{(x_i, y_i)\}_{i=1}^n$:
\begin{equation}
    \label{eq: erm}
    \widehat\theta_{\rm ERM} = \argmin_{\theta\in\Theta} \frac{1}{n}
    \sum_{i=1}^n \ell(f_{\theta}(x_i), y_i).
\end{equation}
The optimization problem in \eqref{eq: erm} is a result of considering the following empirical approximation to $p_{\rm data}(x, y)$:
\begin{equation}
    \label{eq: empirical_approx}
    p_{\rm data}(x, y) \approx \frac{1}{n}\sum_{i=1}^n \delta_{X, Y}(x_i, y_i),
\end{equation}
where $\delta_{X, Y}(x, y)$ is used to denote a Dirac measure centered at $(x, y)$. Importantly, for conditional density estimation tasks, when the loss is defined as $\ell(f_\theta(x), y)=-\log p_\theta(y|x)$, where $-\log p_\theta(y|x)$ denotes the negative log-likelihood (NLL),  ERM is equivalent to maximum likelihood estimation of the parameters. 


Under the assumption that the training data are independent and identically distributed (i.i.d) and reflect the distribution of out-of-sample data, the ERM principle will lead to a model that generalizes well to out-of-sample data in the limit of infinite training data. This is due to the fact that \eqref{eq: erm} will produce the same result as the true risk minimization problem in \eqref{eq: minimizing_expected_risk} as $n\rightarrow\infty$. Typically, neither of these requirements are satisfied in practice; that is to say that: (a) training data are not infinite and in some cases scarcely available, and (b) out-of-sample data may not conform exactly to the distribution of the observed dataset. To improve performance on out-of-sample data, different approximations of $p_{\rm data}(x, y)$ can be considered to regularize $f_\theta$ to have better generalization properties. 

\subsection{Vicinal Risk Minimization and Mixup}
\label{sec: background}
To combat the issue of limited data availability and out-of-sample distribution mismatch, the 
VRM principle can be utilized. VRM does not directly use the empirical density of the data to approximate the risk, but rather uses a ``perturbed" version of it. Let $\tilde{p}_{\nu, {\cal D}}(x, y)$ denote a joint pdf called the {\rm vicinal distribution} which depends on training examples $(x_i, y_i)\in {\cal D}$ and potentially some additional hyperparameters $\nu$. Then, under the joint pdf $\tilde{p}_{\nu, {\cal D}}(x, y)$, the risk can be approximated as follows:
\begin{align}
    \label{eq: expected_risk_vrm}
    \tilde{R}_{\nu}(\theta) &= \int_{{\cal X}\times {\cal Y}} \ell(f_\theta(x), y) \tilde{p}_{\nu, {\cal D}}(x, y){\rm d}x{\rm d y}
\end{align}
Note that $\tilde{p}_{\nu, {\cal D}}(x, y)$ extends the computation of the risk from the exact values of the pair $(x_i, y_i)$ -- as in $p_{\rm data}(x,y)$ in equation~\ref{eq: empirical_approx} -- to a neighborhood of $(x_i, y_i)$.
In general, regularization based on data augmentation, adversarial training, and label smoothing can be viewed as a form of VRM. To that end, it can be shown that classical mixup and its variants can be viewed as a form of VRM.

\paragraph{Vanilla Mixup.}
Mixup is a VRM technique that constructs augmented samples by taking random convex combinations of existing ones. The corresponding vicinal distribution in vanilla mixup is:
\begin{equation}
    \label{eq: vicinal_distribution_mixup}
    \tilde{p}_{\alpha, {\cal D}}(x, y) = \frac{1}{n^2} \sum_{i=1}^n \sum_{j=1}^n \mathbb{E}_{\lambda}\left[\delta_{X, Y}(\tilde{x}_{i, j, \lambda}, \tilde{y}_{i, j, \lambda})\right],
\end{equation}
where $\lambda$ is a mixing coefficient, usually assumed to follow a beta distribution ${\cal B}(\alpha, \alpha)$, with equal shape and scale $\alpha>0$. We define $\tilde{x}_{i, j, \lambda}$ and $\tilde{y}_{i, j, \lambda}$ as
\begin{align}
    &\tilde{x}_{i, j, \lambda} = \lambda x_i + (1-\lambda) x_j \\
    &\tilde{y}_{i, j, \lambda} = \lambda y_i + (1-\lambda) y_j
\end{align}
Based on this vicinal distribution, the overall loss function that is minimized in vanilla mixup is the following:
\begin{equation}
    \label{eq: standard_mixup_risk}
    \tilde{R}_{\alpha}^{\rm mix}(\theta) = \frac{1}{n^2} \sum_{i=1}^n \sum_{j=1}^n \mathbb{E}_{\lambda}\left[\ell(f_\theta(\tilde{x}_{i, j, \lambda}), \tilde{y}_{i, j, \lambda})\right]
\end{equation}
As $\alpha\rightarrow\infty$, the random variable $\lambda$ converges to a degenerate random variable such that $\mathbb{P}(\lambda=\frac{1}{2})=1$. In contrast, as $\alpha\rightarrow 0$, the random variable $\lambda$ converges to a Bernoulli random variable such that $\mathbb{P}(\lambda=0)=\mathbb{P}(\lambda=1)=\frac{1}{2}$, in which case, mixup reduces to ERM as in \eqref{eq: erm}. 
We refer the reader to Appendix \ref{app: review_mixup} for a detailed review of manifold mixup and local mixup.

\section{Our Methodology}
\label{sec: methodology}
In this section, we introduce a general methodology for extending mixup regularization to statistical manifolds which we call \emph{\textbf{ Prob}abilistic \textbf{Mix}up} (\textsf{ProbMix}). We begin by discussing how to apply mixup to the likelihood functions obtained via a generic conditional density estimator for both regression and classification using linear and log-linear fusion.  Following the methodology for fusing likelihood functions, we present an extension of our approach, called \emph{\textbf{M}anifold \textbf{Prob}abilistic \textbf{Mix}up} (\textsf{M-ProbMix}), that allows for probabilistic mixup at any intermediate layer by introducing a conditional density mapping at the desired layer. As long as the density mapping is an exponential family member, we show that applying \textsf{ProbMix} to an intermediate layer of a neural network is analytically tractable. Figure \ref{fig: fusion_mixup_vs_probmix} provides a summary of the training forward pass of \textsf{ProbMix} as compared to mixup-based approaches.

\begin{figure*}[t]
    \centering
    \includegraphics[trim={0cm, 13.5cm, 0cm, 0cm},clip, width=\linewidth]{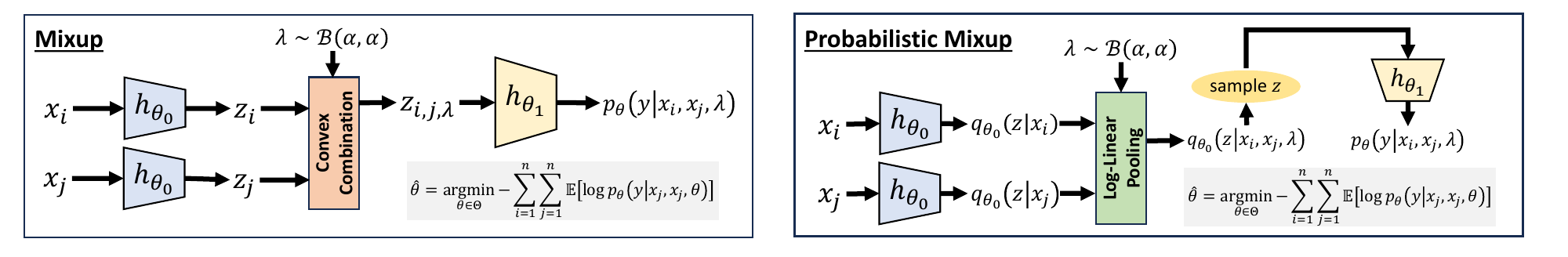}
    \caption{Summary of training forward passes for both mixup vs. \textsf{ProbMix}.}
    \label{fig: fusion_mixup_vs_probmix}
\end{figure*}

\subsection{Data Generating Process}
\label{sec: dgm_prob_mixup}
The key distinction between \textsf{ProbMix} and other mixup regularization techniques is the assumption that the responses $y$ are assumed to have been independently generated from an interpolated conditional density function
\begin{equation}
    \label{eq: fusion_prob_mixup}
    \tilde{p}_\theta(y|x_i, x_j, \lambda)=g_\lambda^x(p_\theta(y|x_i), p_\theta(y|x_j))
\end{equation}
obtained by some fusion function $g_\lambda^x$, rather than the conditional density $p_\theta(y|x_{i, j, \lambda})$,   
which conditions based on interpolated features. Let $p(\lambda; \alpha)$ denote a fixed and known pdf over $\lambda$ with tunable hyperparameters $\alpha$ and let ${\cal G}=({\cal D}, {\cal E}, {\cal W})$ denote a weighted graph defined over the observed dataset. We consider the following data generating process in \textsf{ProbMix}:
\begin{align}
    &\lambda \sim p(\lambda; \alpha), \quad (i, j) \sim {\cal G} \\
    &y \sim \tilde{p}_\theta(y|x_i, x_j, \lambda)
\end{align}
where we use the notation $(i, j)\sim {\cal G}$ to denote the generation of an edge $(i, j)\in {\cal E}$ from the graph ${\cal G}$ based on the weights ${\cal W}$. Furthermore, for a given edge $(i, j)$, we assume that the observed label $\tilde{y}$ has a vicinal density
\begin{equation}
    \tilde{y} \sim p_\beta(\tilde{y}|y_i, y_j, \lambda)=g_\lambda^y(s_\beta(\tilde{y}|y_i), s_\beta(\tilde{y}|y_j)),
\end{equation}
obtained by some fusion function $g_\lambda^y$, where $s_{\beta}(\tilde{y}|y)$ denotes a kernel centered around $y$ with hyperparameter $\beta$. 
To summarize, the modeling assumption of \textsf{ProbMix} differs from vanilla mixup in the following two ways:
\begin{enumerate}
    \item {\bf Data generating process}: In \textsf{ProbMix} the responses are generated from a fusion of conditional densities (guided by a fusion function $g_\lambda^x$) each of which are conditioned on distinct features $x_i$ and $x_j$, respectively. In vanilla mixup, 
    the responses are generated from a single density conditioned on a mixture of features. 
    \item {\bf Observed responses}: In \textsf{ProbMix}, the responses are assumed to be latent, but with known density $g_\lambda^y(s_\beta(\tilde{y}|y_i), s_\beta(\tilde{y}|y_j))$. In mixup, the responses are observed and assumed to be a convex combination of the corresponding edge that is being mixed. 
\end{enumerate}

\subsection{Optimizing Model Parameters}
\label{subsec: probmixup_optimization_criteria}
The goal of \textsf{ProbMix} is to maximize the expected log-likelihood of the latent observations $\tilde{y}$, which can be expressed using the law of iterated expectations as:
\begin{align*}
    l(\theta; &\alpha, {\cal G}) = \mathbb{E}_{\lambda, \tilde{y}}\left[\log \tilde{p}_\theta(\tilde{y}|x_i, x_j, \lambda)\right] \\ 
    &= \mathbb{E}_{\lambda}\left[\mathbb{E}_{\tilde{y}}\left[\sum_{(i, j)\in {\cal E}} w_{i, j} \log \tilde{p}_\theta(\tilde{y}|x_i, x_j, \lambda) \Bigg| \lambda \right]\right]\\
    &= \sum_{(i, j)\in{\cal E}} w_{i, j} \mathbb{E}_{\lambda}\left[\mathbb{E}_{\tilde{y}}[\log \tilde{p}_\theta(\tilde{y}|x_i, x_j, \lambda)|\lambda]\right],
\end{align*}
where the inner expectation is taken w.r.t. ${p}_\beta(\tilde{y}|y_i, y_j, \lambda)$  and the outer expectation is taken w.r.t. $p(\lambda; \alpha)$. Equivalently, we can obtain the corresponding loss function from a VRM perspective by replacing the loss function in \eqref{eq: local_mixup_loss} with the expected NLL of the fusion function:
\begin{equation}
    \label{eq: risk_prob_mixup}
    \tilde{R}_{\alpha, {\cal G}}^{\mathbb{P}}(\theta) = -\hspace{-0.3cm}\sum_{(i, j)\in{\cal E}} w_{i,j} \mathbb{E}_{\lambda}[\mathbb{E}_{\tilde{y}}[\log p_\theta(\tilde{y}|x_i, x_j, \lambda)|\lambda]]
\end{equation}
Depending on the choice of fusion functions $g_\lambda^x$ and $g_{\lambda}^y$, the overall loss function in \eqref{eq: risk_prob_mixup} will exhibit different regularization effects.  We remark that one can also maximize the logarithm of the expected likelihood, which is related to the former by Jensen's inequality. We provide more details about this alternative optimization criterion in Appendix \ref{app: loss_discussion}. 

\subsubsection{Monte Carlo Approximation} 
To minimize $\tilde{R}_{\alpha, {\cal G}}^{\mathbb{P}}(\theta)$, we can use the Monte Carlo approach to obtain stochastic gradients where instead of taking gradients with respect to $\tilde{R}_{\alpha, {\cal G}}^{\mathbb{P}}(\theta)$, we take gradients of an estimator given by:
\begin{equation}
    \label{eq: risk_prob_mixup_monte_carlo}
    \tilde{R}_{\alpha, {\cal G}}^{\mathbb{P}}(\theta) \approx -\sum_{(i, j)\in{\cal E}}\frac{w_{i, j}}{K}\sum_{k=1}^K\log p_\theta(\tilde{y}^{(k)}|x_i, x_j, \lambda^{(k)}),
\end{equation}
where $\lambda^{(k)} \sim p(\lambda; \alpha)$ and $\tilde{y}^{(k)}\sim p_\beta(\tilde{y}|y_i, y_j, \lambda^{(k)})$ for $k=1,\ldots, K$. Typically, a single sample ($K=1$) is used.

\subsubsection{Linear vs. Log-Linear Fusion}\label{sec:linear-log-linear-fusion}
The \textsf{ProbMix} framework requires choosing the fusion functions $g_\lambda^x$ and $g_\lambda^y$. Two popular choices often found in the probabilistic fusion literature are linear pooling:
\begin{equation}
    \label{eq: linear_pooling}
    g_\lambda^x(p_\theta(y|x_i), p_\theta(y|x_j)) = \lambda p_\theta(y|x_i) + (1-\lambda)p_\theta(y|x_j)
\end{equation}
and log-linear pooling:
\begin{equation}
    \label{eq: log-linear_pooling}
    g_\lambda^x(p_\theta(y|x_i), p_\theta(y|x_j)) \propto  \left[p_\theta(y|x_i)\right]^\lambda\left[p_\theta(y|x_j)\right]^{1-\lambda}
\end{equation}
In this work, we utilize the log-linear pooling function, since in the case of exponential family members, the fusion result also belongs to exponential family of probability distributions. Important special cases of this result include the categorical distribution and the Gaussian distribution, which are often the assumed statistical models in classification and regression tasks, respectively. The detailed proof and relevant analytical expressions related to this result can be found in Appendix~\ref{thm: exponential_families}.

\subsubsection{Why ProbMix over Vanilla Mixup?} \label{sss: illustrative_example}
Here, we provide an intuitive example demonstrating a scenario where \textsf{ProbMix} is preferred over vanilla mixup. Consider the following data generating process: 
\begin{equation}
    \label{eq: illustrative_example_dgp}
    y = x^3 + (0.5x^2+1)\epsilon, \quad \epsilon\sim {\cal N}(0, 1),
\end{equation}
where the ground truth mean and variance functions are $\mu_\theta(x)=x^3$ and $\sigma_\theta^2(x)=(0.5x^2+1)^2$, respectively. Suppose two samples $(x_1, y_1)=(5, 130)$ and $(x_2, y_2)=(-5, -120)$ are observed. We would like to understand the regularization effect of both mixup and \textsf{ProbMix} on the ground truth mean and variance functions based on the two observed samples for a mixing coefficient of $\lambda=0.8$.  In the case of vanilla mixup, the predicted conditional is: 
\begin{equation}
    \label{eq: illustrative_example_mixup}
    p_{\sf Mix}(\tilde{y}|x_1, x_2,\lambda=0.8) = {\cal N}(\tilde{y}|27, 30.25),
\end{equation}
while for \textsf{ProbMix} it is:
\begin{equation}
    \label{eq: illustrative_example_probmixup}
    p_{\sf ProbMix}(\tilde{y}|x_1, x_2,\lambda=0.8) = {\cal N}(\tilde{y}|75, 182.25).
\end{equation}
Assuming $\beta\rightarrow 0$ for \textsf{ProbMix}, the interpolated observation $\tilde{y}=\lambda y_1+(1-\lambda)y_2$ for which the likelihood is evaluated is the same for both approaches and is $\tilde{y}=80$. Since the interpolated observation $\tilde{y}=80$ lies in the right tail of $p_{\sf Mix}$, gradient updates made to the mean and variance functions based on mixup will substantially alter the model, despite the fact that $\mu_\theta(x)$ and $\sigma_\theta^2(x)$ are the ground truth mean and variance functions. In contrast, \textsf{ProbMix}'s fused density is much better calibrated to the interpolated target. This demonstrates that in this example, mixup enforces a strong linear bias on the mean and variance functions, which can be inappropriate when the true relationship is nonlinear or when the input features being fused are far apart. By working on the statistical manifold when mixing, \textsf{ProbMix} avoids this strong bias. We provide details of our calculations and conditional density plots related to this illustrative example in Appendix \ref{app: illustrative_example}.

\subsection{Manifold Probabilistic Mixup} 
We now discuss an extension of \textsf{ProbMix}, called \textsf{M-ProbMix}, that allows for probabilistic fusion in an arbitrary embedding defined by an intermediate layer of the neural network. The idea behind the approach is to consider that our predictor  $f_\theta = h_{\theta_1}\circ h_{\theta_0}$ is the composition of the mappings $h_{\theta_0}$ and $h_{\theta_1}$. The mapping $h_{\theta_0}$ maps the input of the predictor to the parameters of the density of an embedding $z$ (e.g., Gaussian distribution with diagonal covariance matrix), while $h_{\theta_1}$ maps from random embedding to the conditional density of the response. Mapping the inputs to a density function in the intermediate layers enables the use of probabilistic fusion to mix samples at the embedding level. 

Let $q_{\theta_0}(z|x)$ denote the parametric density of an embedding $z$ given some input feature $x$. The expected log-likelihood in \textsf{M-ProbMix} can be determined as:
\begin{align}
    &l(\theta; \alpha, {\cal G}, L_{mix}) = \mathbb{E}[\log \tilde{p}_\theta(y|x_i, x_j, \lambda)] \\
    &=\mathbb{E}_\lambda\left[\mathbb{E}_{\tilde y}\left[\sum_{(i, j)\in {\cal E}} w_{i, j}\log p_\theta(\tilde{y}|x_i, x_j, \lambda)\Bigg| \lambda \right]\right],
\end{align}
where the density $ p_\theta(\tilde{y}|x_i, x_j, \lambda)$ is given by:
\begin{equation}
    \label{eq: int_manifold_prob_mix}
   p_\theta(\tilde{y}|x_i, x_j, \lambda) = \int p_{\theta_1}(\tilde{y}|z)q_{\theta_0}(z|x_i, x_j, \lambda) {\rm d}z
\end{equation}
and the density of $z$ given $x_i$, $x_j$ and $\lambda$ is the fusion of the densities of the random embeddings according to some fusion function $g_{\lambda}^z$:
\begin{equation}
    q_{\theta_0}(z|x_i, x_j, \lambda) = g_\lambda^z\left(q_{\theta_0}(z|x_i), q_{\theta_0}(z|x_j)\right).
\end{equation}
We can readily obtain the corresponding risk as:
\begin{equation}
    \tilde{R}_{\alpha, {\cal G}}^{\mathbb{P}, {\cal M}}(\theta) = -\sum_{(i, j)\in {\cal E}} w_{i, j}  \mathbb{E}_{\lambda}[\mathbb{E}_{\tilde{y}}[\log p_{\theta}(\tilde{y}|x_{i}, x_{j}, \lambda)|\lambda]]
\end{equation}
Put simply, \textsf{M-ProbMix} can be viewed as a probabilistic extension to manifold mixup, whereby an intermediate layer maps to a random variable rather than a fixed transformation of the input features.  Just like \textsf{ProbMix}, the risk of \textsf{M-ProbMix} can be approximated using a Monte Carlo estimate, whereby sampling is additionally done at the embedding level in order to approximate the integral in 
\eqref{eq: int_manifold_prob_mix}. Importantly, if one chooses $q_{\theta_0}(z|x)$ to belong to a member of the exponential family, then log-linear pooling can be readily applied. A standard and convenient choice is $q_{\theta_0}(z|x)$ is a Gaussian distribution with diagonal covariance matrix, since the reparameterization trick \citep{kingma2013auto} can be readily applied. Finally, we highlight that the choice of the embedding distribution is agnostic to the learning task, as \textsf{M-ProbMix} focuses on fusing the distributions of the underlying embeddings, rather than the likelihoods themselves.   

\section{Theoretical Insights}\label{sec: theory}
In this section, we provide theoretical insights by comparing the proposed \textsf{ProbMix} and \textsf{M-ProbMix} with mixup and manifold mixup. For simplicity in the presentation, we assume the following:

\begin{assumption}[Negative Log-Likelihood Loss]\label{assumption: nll}
    The loss function $\ell(f_\theta(x), y)=-\log p_\theta(y|x)$. In the case of classification, $\log p_\theta(y|x)$ corresponds to the log probabilities of each class, 
    while in the case of regression, $\log p_\theta(y|x)$ is assumed to be either a homoscedastic or heteroscedastic Gaussian log-likelihood function. 
\end{assumption}

\begin{assumption}[Expected Loss over Labels]\label{assumption: loss-over-labels}
    In the case of classification, label mixing is done by mixing one-hot-encoded vectors, and the risk is taken by taking the expected value over the mixed label probabilities. That is,
    \begin{align*}
        &\mathbb{E}_\lambda[\log p_\theta(y_{i,j, \lambda}|x_{i,j, \lambda})] = \\ &\mathbb{E}_\lambda[\lambda\log p_\theta(y_i|x_{i, j, \lambda})] + \mathbb{E}_\lambda[(1-\lambda)\log p_\theta(y_j|x_{i, j, \lambda})] 
    \end{align*}
    In the case of \textsf{ProbMix}, this corresponds to using the following perturbation distribution for the responses:
    \begin{equation*}
        g_\lambda^y(s_{\beta}(\tilde{y}|y_i), s_{\beta}(\tilde{y}|y_j)) = \lambda \delta_{y_i} + (1-\lambda)\delta_{y_j}
    \end{equation*}
\end{assumption}

Under Assumption~\ref{assumption: loss-over-labels}, the expectation term over the labels for the expected risk of \textsf{ProbMix} and \textsf{M-ProbMix} breaks down into two separate terms due to linearity of expectation. 
Hence, when comparing mixup and \textsf{ProbMix} across different settings, we can simply focus on comparing log-likelihoods terms, i.e., $\log p_\theta(y|x_{i, j, \lambda})$ for mixup methods and $\log p_\theta(y|x_i, x_j, \lambda)$ for probabilistic mixup methods.

\subsection{\textsf{ProbMix} Operates Mixup on the Outputs}

Theorems~\ref{theo: mixup-logits-class} and \ref{theo: mixup-output-regr} show that, under log-linear fusion $g^x_\lambda$ of the likelihoods,  \textsf{ProbMix} can be thought of as operating mixup on the output layers in both classification and regression tasks.

\begin{theorem}[\textsf{ProbMix} as mixup of Logits]\label{theo: mixup-logits-class}
    Under Assumptions~\ref{assumption: loss-over-labels} and \ref{assumption: nll} and log-linear fusion $g^x_\lambda$ of categorical likelihoods, \textsf{ProbMix} is equivalent to vanilla mixup on the logits.
\end{theorem}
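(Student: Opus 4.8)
The plan is to reduce the claim to the algebraic fact that log-linear pooling of categorical likelihoods amounts to a convex combination of logits, and then to match the resulting loss to that of logit-level mixup. First I would invoke the exponential-family result of Appendix~\ref{thm: exponential_families}, which states that log-linear pooling of two members of the same exponential family yields another member of that family whose natural parameter is the convex combination $\lambda\eta_i + (1-\lambda)\eta_j$ of the individual natural parameters $\eta_i,\eta_j$. Specializing this to the categorical distribution is the crux: I would write $p_\theta(y|x)$ in softmax form with logits $z_c(x)$, so that $\log p_\theta(y=c|x) = z_c(x) - \log\sum_{c'}\exp(z_{c'}(x))$, and identify the natural parameters with the logits.

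Second I would carry out the fusion explicitly. Applying \eqref{eq: log-linear_pooling} and taking logarithms gives $\log\tilde{p}_\theta(y=c|x_i,x_j,\lambda) = \lambda z_c(x_i) + (1-\lambda)z_c(x_j) - \lambda A_i - (1-\lambda)A_j - \log Z$, where $A_i,A_j$ are the log-partition terms for the two inputs and $Z$ is the fusion normalizer. Since the terms $-\lambda A_i - (1-\lambda)A_j$ do not depend on the class $c$, they are absorbed into $Z$, and I would conclude that $\tilde{p}_\theta(\cdot|x_i,x_j,\lambda) = \mathrm{softmax}(\tilde z)$ with mixed logits $\tilde z_c = \lambda z_c(x_i) + (1-\lambda)z_c(x_j)$. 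This is precisely the statement that \textsf{ProbMix} fuses by mixing logits.

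Finally I would translate this into the loss. Under \cref{assumption: loss-over-labels} the label perturbation is $g^y_\lambda = \lambda\delta_{y_i} + (1-\lambda)\delta_{y_j}$, so the inner expectation over $\tilde y$ in \eqref{eq: risk_prob_mixup} splits as $\lambda\log\tilde p_\theta(y_i|x_i,x_j,\lambda) + (1-\lambda)\log\tilde p_\theta(y_j|x_i,x_j,\lambda)$. Substituting the fused softmax and invoking \cref{assumption: nll}, this equals the cross-entropy between $\mathrm{softmax}(\tilde z)$ and the mixed one-hot target $\lambda y_i + (1-\lambda)y_j$, which is exactly the vanilla mixup loss applied at the logit layer rather than at the input. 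Matching the two expressions edge by edge over ${\cal G}$ completes the equivalence.

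I expect the main obstacle to be the categorical specialization itself: because the categorical law is a constrained exponential family (its probabilities must sum to one), one must handle the over-complete logit parameterization with care and verify that the class-independent normalization terms genuinely cancel under renormalization, so that the shift-invariance of the softmax makes fusion coincide \emph{exactly} with logit mixing rather than merely up to a class-dependent distortion.
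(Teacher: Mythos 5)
Your proposal is correct and follows essentially the same route as the paper: the paper's proof likewise shows that the log-linear fusion of the two categorical likelihoods and the softmax of the convexly combined logits are both proportional (up to class-independent normalizers) to $\lambda\log(e^{-f_\theta^k(x_i)}) + (1-\lambda)\log(e^{-f_\theta^k(x_j)})$, so the two normalized distributions coincide. Your extra framing via exponential-family natural parameters and the explicit loss-matching under Assumption~\ref{assumption: loss-over-labels} are consistent with, and already anticipated by, the paper's remark that under that assumption it suffices to compare the log-likelihood terms.
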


\begin{theorem}[\textsf{ProbMix} as Mixup of Means] \label{theo: mixup-output-regr}
    Under Assumptions~\ref{assumption: loss-over-labels} and \ref{assumption: nll} and log-linear fusion $g^x_\lambda$ of likelihoods, 
    \textsf{ProbMix} is equivalent to vanilla mixup of the output means.
\end{theorem}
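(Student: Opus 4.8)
The plan is to reduce the statement to the behavior of log-linear pooling on Gaussian likelihoods and to invoke the exponential-family closure property established in Appendix~\ref{thm: exponential_families}. Under Assumption~\ref{assumption: nll}, in the regression setting the per-sample likelihoods are Gaussian, so I would write $p_\theta(y|x_i)={\cal N}(y\,|\,\mu_\theta(x_i),\sigma_\theta^2(x_i))$ and analogously for $x_j$. The first step is to express each Gaussian in its natural (exponential-family) parameterization, with natural parameters $\eta(x)=\bigl(\mu_\theta(x)/\sigma_\theta^2(x),\,-1/(2\sigma_\theta^2(x))\bigr)$ and sufficient statistic $T(y)=(y,y^2)$.

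The second step applies log-linear pooling as in \eqref{eq: log-linear_pooling}: raising each density to its power and multiplying, the log of the pooled density is affine in the natural parameters, which immediately shows that the fused likelihood $g_\lambda^x(p_\theta(y|x_i),p_\theta(y|x_j))$ is again Gaussian with natural parameter $\tilde\eta=\lambda\,\eta(x_i)+(1-\lambda)\,\eta(x_j)$. This is exactly the closure result of Appendix~\ref{thm: exponential_families}, so no explicit completing-the-square computation is needed beyond reading off the pooled natural parameters.

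The third step converts $\tilde\eta$ back to mean/variance coordinates. Solving the two natural-parameter equations yields a fused variance satisfying $1/\tilde\sigma^2=\lambda/\sigma_\theta^2(x_i)+(1-\lambda)/\sigma_\theta^2(x_j)$ together with a precision-weighted fused mean. In the homoscedastic case, where $\sigma_\theta^2(x_i)=\sigma_\theta^2(x_j)=\sigma^2$, the second natural parameter is shared, so $\tilde\sigma^2=\sigma^2$ is unchanged and the fused mean collapses to the convex combination $\tilde\mu=\lambda\,\mu_\theta(x_i)+(1-\lambda)\,\mu_\theta(x_j)$. This is precisely the mixup operation applied to the output means $\mu_\theta(x_i),\mu_\theta(x_j)$ rather than to the inputs, which is what vanilla mixup of the output means produces. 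Combining this with Assumption~\ref{assumption: loss-over-labels}, under which the label expectation factors out so that only the log-likelihood terms $\log p_\theta(y|x_i,x_j,\lambda)$ and $\log p_\theta(y|x_{i,j,\lambda})$ need be compared, establishes the claimed equivalence.

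The main obstacle, and the point I would treat carefully, is the heteroscedastic case: there the shared-variance simplification fails, $\tilde\mu$ becomes a precision-weighted average rather than a pure $\lambda$-convex combination, and $\tilde\sigma^2$ genuinely changes with the inputs. Hence the clean equivalence to vanilla mixup of the means holds exactly only when the variance is input-independent; in the heteroscedastic setting the statement should be read as mixup of the means in the natural (precision-weighted) geometry induced by log-linear pooling. I would make this scope explicit to avoid overclaiming, mirroring how the logit computation in Theorem~\ref{theo: mixup-logits-class} produces an exact linear combination only after absorbing the softmax normalizer.
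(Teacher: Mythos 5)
Your proposal is correct and follows essentially the same route as the paper's proof: the paper directly expands the quadratic forms of the two homoscedastic Gaussian log-likelihoods under log-linear pooling and shows both $\log p_\theta(y|x_i,x_j,\lambda)$ and the mixup-of-outputs log-likelihood are proportional to $y^\intercal y - 2y^\intercal(\lambda f_\theta(x_i)+(1-\lambda)f_\theta(x_j))$, which is exactly your natural-parameter computation read off in mean/variance coordinates. Your caveat about the heteroscedastic case is well taken and worth keeping: the paper's proof also silently restricts to homoscedastic likelihoods (the fused mean becomes precision-weighted otherwise, matching the general Gaussian fusion formula in Appendix~\ref{thm: exponential_families}), so the equivalence to a pure $\lambda$-convex combination of the means holds exactly only in that setting.
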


\begin{proof}[Proof Sketch.] For both theorems, the proof proceeds constructively showing that the likelihoods of \textsf{ProbMix} and mixup on the output layers are proportional to the same quantities.
\end{proof}

\subsection{Mixup as a Special Case of \textsf{ProbMix}} 

Theorems~\ref{theo: mixup-log-reg} and \ref{theo: mixup-lin-reg} show that in both the classification and regression tasks, \textsf{ProbMix} when using a linear mapping $f_\theta$ and log-linear pooling $g^x_\lambda$ reduces to vanilla mixup. Theorem~\ref{theo: manifold-prob} shows that log-linear pooling of homoscedastic Gaussian embeddings makes \textsf{M-ProbMix} reducing to manifold mixup, as long as embedding means are propagated during training. 
We refer the reader to Appendix~\ref{app: proof} for full proofs.

\begin{theorem}[Mixup and \textsf{ProbMix} for Multiclass Logistic Regression]\label{theo: mixup-log-reg}
    In classification tasks, under Assumptions~\ref{assumption: loss-over-labels} and \ref{assumption: nll}, when setting $g_\lambda^x$ as log-linear pooling of categorical distributions and using a multi-class logistic regression learner  $f_\theta$, \textsf{ProbMix} reduces to vanilla Mixup.
\end{theorem}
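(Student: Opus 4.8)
The plan is to prove equivalence by showing that, for a linear logistic regression model, the fused categorical density produced by \textsf{ProbMix} is \emph{identical} to the categorical density that vanilla mixup evaluates at the interpolated feature $x_{i,j,\lambda} = \lambda x_i + (1-\lambda)x_j$. Once the two conditional densities coincide, Assumption~\ref{assumption: loss-over-labels} guarantees that the label expectation decomposes in the same way for both methods, so the two risks agree term by term and the reduction to the mixup loss of \eqref{eq: standard_mixup_risk} follows immediately.

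First I would write the model in logit form: for each class $c$ the logit is $z_c(x) = w_c^\top x + b_c$, which is \emph{affine} in $x$, and the class probability is $p_\theta(y=c\mid x)\propto \exp(z_c(x))$. I would then appeal to Theorem~\ref{theo: mixup-logits-class}, which (under Assumptions~\ref{assumption: loss-over-labels} and \ref{assumption: nll}) already establishes that log-linear pooling of categorical likelihoods is equivalent to vanilla mixup on the logits; concretely, the fused log-probability of class $c$ equals $\lambda z_c(x_i) + (1-\lambda)z_c(x_j)$ up to an additive term that is independent of $c$ and is therefore absorbed by the softmax normalization.

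The key step is to exploit the affineness of the logit map to commute the convex combination with the model. Because $z_c$ is affine, we have
\[
\lambda z_c(x_i) + (1-\lambda)z_c(x_j) = w_c^\top\bigl(\lambda x_i + (1-\lambda)x_j\bigr) + b_c = z_c\bigl(x_{i,j,\lambda}\bigr),
\]
so the fused logits are exactly the model logits evaluated at the mixed input. This yields the identity $\tilde{p}_\theta(y\mid x_i,x_j,\lambda) = p_\theta(y\mid x_{i,j,\lambda})$. Substituting this into the \textsf{ProbMix} risk $\tilde{R}_{\alpha,{\cal G}}^{\mathbb{P}}(\theta)$ and using Assumption~\ref{assumption: loss-over-labels} to split the label expectation into the two $\lambda$- and $(1-\lambda)$-weighted terms recovers precisely the vanilla mixup risk.

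I expect the only subtlety — rather than a genuine obstacle — to be bookkeeping of the class-independent normalization constant: the log-linear pool is defined only up to proportionality, so I must verify that the terms dropped in forming the fused logits are the \emph{same} for every class $c$ and hence vanish under the softmax. The essential mathematical content is entirely the affineness of $z_c$; without the linear-model assumption the interpolation would not commute with the logit map, and the equivalence would weaken to the statement of Theorem~\ref{theo: mixup-logits-class} (mixup on logits, not on features).
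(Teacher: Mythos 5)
Your proposal is correct and follows essentially the same route as the paper's proof: both arguments rest on the affineness of the logit map $z_c(x)=w_c^\top x+b_c$, which makes $e^{-z_c(x_{i,j,\lambda})}$ factor as $\left[e^{-z_c(x_i)}\right]^{\lambda}\left[e^{-z_c(x_j)}\right]^{1-\lambda}$ up to a class-independent normalization absorbed by the softmax. The only cosmetic difference is that you route the argument through Theorem~\ref{theo: mixup-logits-class} before commuting the interpolation with the affine map, whereas the paper establishes the proportionality $p_k^{\rm mix}\propto p_{i,k}^{\lambda}p_{j,k}^{1-\lambda}$ directly in one chain of equalities.
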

\begin{proof}[Proof Sketch]
Consider $f_\theta=h_1\circ h_{\theta}$, where $h_1(z)$ 
is the softmax function and $h_{\theta}(x)=Ax+b$ is a linear function with $A=[a_1, \ldots, a_y]^\intercal\in\mathbb{R}^{d_y\times d_x}$ and $b\in\mathbb{R}^{d_y}$. The proof proceeds constructively by showing that the probability of the $k^{th}$ class is identical for both \textsf{ProbMix} in the settings above and vanilla Mixup.
\end{proof}

\begin{theorem}[Mixup and \textsf{ProbMix} for Linear Regression]\label{theo: mixup-lin-reg}
    In regression tasks, under Assumptions~\ref{assumption: loss-over-labels} and \ref{assumption: nll}, when setting $g_\lambda^x$ as log-linear pooling of homoscedastic Gaussian distributions and using a linear regression learner  $f_\theta(x)=Ax+b$, \textsf{ProbMix} reduces to vanilla mixup. 
\end{theorem}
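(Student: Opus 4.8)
The plan is to show that, under the stated assumptions, the fused conditional density of \textsf{ProbMix} coincides \emph{exactly} with the conditional density that vanilla mixup evaluates on the interpolated feature, i.e. $\tilde{p}_\theta(y|x_i, x_j, \lambda) = p_\theta(y|\tilde{x}_{i,j,\lambda})$ with $\tilde{x}_{i,j,\lambda} = \lambda x_i + (1-\lambda) x_j$. Once this identity holds, the reduction noted in \Cref{sec: theory} applies: under \Cref{assumption: loss-over-labels} the label expectation splits into the two terms weighting $\log p_\theta(y_i|\cdot)$ and $\log p_\theta(y_j|\cdot)$ by $\lambda$ and $1-\lambda$ identically for both methods, so it suffices to compare the log-likelihood terms $\log \tilde{p}_\theta(y|x_i,x_j,\lambda)$ and $\log p_\theta(y|\tilde{x}_{i,j,\lambda})$. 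Hence the entire argument collapses to a statement about the two conditional densities.

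First I would instantiate the homoscedastic Gaussian model from \Cref{assumption: nll}, writing $p_\theta(y|x) = {\cal N}(y|\mu_\theta(x), \sigma^2)$ with $\mu_\theta(x) = Ax + b$ and constant variance $\sigma^2$. Substituting into the log-linear pooling definition \eqref{eq: log-linear_pooling} and taking logarithms gives, up to a $y$-independent additive constant, $-\tfrac{1}{2\sigma^2}[\lambda (y - \mu_\theta(x_i))^2 + (1-\lambda)(y-\mu_\theta(x_j))^2]$. Completing the square in $y$ rewrites the bracket as $(y - \bar\mu)^2 + \lambda(1-\lambda)(\mu_\theta(x_i)-\mu_\theta(x_j))^2$ with $\bar\mu = \lambda\mu_\theta(x_i) + (1-\lambda)\mu_\theta(x_j)$; the second summand is independent of $y$ and is absorbed into the normalizing constant. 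Crucially, because the pooling exponents $\lambda$ and $1-\lambda$ sum to one, the quadratic coefficient in $y$ is left unchanged, so after renormalization the fused density is exactly ${\cal N}(y|\bar\mu, \sigma^2)$. This recovers \Cref{theo: mixup-output-regr}: \textsf{ProbMix} produces a Gaussian whose mean is the convex combination of the two output means and whose variance equals the common $\sigma^2$.

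The remaining step invokes linearity of the learner. Since $\mu_\theta(x) = Ax+b$, we have $\bar\mu = \lambda(Ax_i+b) + (1-\lambda)(Ax_j+b) = A(\lambda x_i + (1-\lambda)x_j) + b = \mu_\theta(\tilde{x}_{i,j,\lambda})$, so $\tilde{p}_\theta(y|x_i,x_j,\lambda) = {\cal N}(y|\mu_\theta(\tilde{x}_{i,j,\lambda}), \sigma^2) = p_\theta(y|\tilde{x}_{i,j,\lambda})$, which is precisely the vanilla mixup conditional. Combined with the label decomposition of \Cref{assumption: loss-over-labels}, the \textsf{ProbMix} risk $\tilde{R}^{\mathbb{P}}_{\alpha,{\cal G}}(\theta)$ then equals the vanilla mixup risk term by term, completing the proof.

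I expect the substantive point — rather than a genuine obstacle — to be this linearity step: it is exactly here that the convex combination of means equals the mean evaluated at the convex combination of inputs, $\lambda\mu_\theta(x_i) + (1-\lambda)\mu_\theta(x_j) = \mu_\theta(\tilde{x}_{i,j,\lambda})$. For a nonlinear mean function this identity fails (cf. the cubic example of \Cref{sss: illustrative_example}), so linearity of $f_\theta$ is essential rather than a simplifying convenience. A secondary technical check worth flagging is that variance preservation relies on the pooling exponents summing to one; otherwise the fused variance would shrink and the two conditionals would no longer coincide even in the linear case.
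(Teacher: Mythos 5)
Your proposal is correct and follows essentially the same route as the paper's proof: both reduce to showing $p_\theta(y|\tilde{x}_{i,j,\lambda}) \propto [p_\theta(y|x_i)]^{\lambda}[p_\theta(y|x_j)]^{1-\lambda}$ via the quadratic-form identity that holds because the exponents sum to one and the mean is affine, differing only in direction (you complete the square on the fused density to recover the mixup Gaussian, while the paper expands the mixup quadratic to recover the log-linear pool, absorbing the same $y$-independent $\lambda(1-\lambda)\|f_\theta(x_i)-f_\theta(x_j)\|^2$ term into the normalizer). Your closing remarks on where linearity and the unit-sum exponents are used are accurate and consistent with the paper's discussion.
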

\begin{proof}[Proof Sketch]
The proof proceeds constructively by showing that the log-likelihood is proportional to the same quantities for both \textsf{ProbMix} in the settings above and traditional mixup.
\end{proof}

In the case of probabilistic manifold mixup with the choice of the hidden distribution as a homoscedastic Gaussian,
\begin{theorem}[\textsf{M-ProbMix} under Homoscedastic Gaussian Approximation is Manifold Mixup]\label{theo: manifold-prob}
    Assume a learner $f_{\theta, \phi} = d_\phi \circ h_\theta$, where $h_\theta$ and $d_\phi$ are encoder and decoder, respectively, and means are propagated directly during both training and inference, i.e., $f_{\theta, \phi} = d_\phi(h_\theta(x))$. Then, under Assumptions~\ref{assumption: loss-over-labels} and \ref{assumption: nll}, \textsf{M-ProbMix} using a log-linear fusion a homoscedastic Gaussian embeddings is equivalent to manifold mixup.
\end{theorem}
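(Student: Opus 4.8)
The plan is to collapse the two-level \textsf{M-ProbMix} construction into a single deterministic forward pass and then match it term-by-term against the manifold mixup risk. First I would specialize the embedding model to the homoscedastic Gaussian family, writing $q_{\theta_0}(z \mid x) = \mathcal{N}(z \mid h_\theta(x), \Sigma)$ with a fixed covariance $\Sigma$ shared across all inputs, so that $\mu_i = h_\theta(x_i)$ and $\mu_j = h_\theta(x_j)$ are the only input-dependent quantities. The crux is the log-linear pool \eqref{eq: log-linear_pooling} of these two Gaussians: taking logarithms, the fused log-density equals $\lambda \log q_{\theta_0}(z\mid x_i) + (1-\lambda)\log q_{\theta_0}(z\mid x_j)$ up to an additive normalizer, which is a quadratic form in $z$. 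I would collect terms and complete the square. Because the exponents $\lambda$ and $1-\lambda$ sum to one, the quadratic coefficient is $\lambda \Sigma^{-1} + (1-\lambda)\Sigma^{-1} = \Sigma^{-1}$, so the fused precision is unchanged, and the linear term yields mean $\lambda \mu_i + (1-\lambda)\mu_j$. Hence the fused embedding density is exactly
\begin{equation*}
q_{\theta_0}(z \mid x_i, x_j, \lambda) = \mathcal{N}\!\left(z \;\middle|\; \lambda h_\theta(x_i) + (1-\lambda) h_\theta(x_j),\; \Sigma\right).
\end{equation*}

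Next I would invoke the mean-propagation hypothesis $f_{\theta, \phi} = d_\phi(h_\theta(x))$: rather than integrating \eqref{eq: int_manifold_prob_mix} against the fused Gaussian, the decoder is evaluated at the fused mean, so the integral collapses and
\begin{equation*}
p_\theta(\tilde{y} \mid x_i, x_j, \lambda) = p_{\theta_1}\!\left(\tilde{y} \;\middle|\; \lambda h_\theta(x_i) + (1-\lambda) h_\theta(x_j)\right).
\end{equation*}
The argument $\lambda h_\theta(x_i) + (1-\lambda) h_\theta(x_j)$ is precisely the convex combination of hidden representations that manifold mixup forms at the chosen layer. Under Assumption~\ref{assumption: nll} the negative log of this density is the manifold mixup loss evaluated at the mixed embedding, and under Assumption~\ref{assumption: loss-over-labels} the expectation over the latent label splits, by linearity, into a $\lambda$-weighted and a $(1-\lambda)$-weighted log-likelihood of $y_i$ and $y_j$ at the same mixed embedding, matching the label-mixing convention of manifold mixup. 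Summing the per-edge contributions with the complete-graph uniform weights $w_{i,j} = 1/n^2$ then makes $\tilde{R}^{\mathbb{P}, \mathcal{M}}_{\alpha, \mathcal{G}}(\theta)$ coincide with the manifold mixup risk.

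I expect the main obstacle to be isolating exactly where homoscedasticity is essential, which is the step that keeps the theorem honest. The Gaussian completion of the square in the first paragraph goes through for arbitrary (heteroscedastic) covariances and still returns a Gaussian, but then the fused mean becomes the precision-weighted average $(\lambda \Sigma_i^{-1} + (1-\lambda)\Sigma_j^{-1})^{-1}(\lambda \Sigma_i^{-1}\mu_i + (1-\lambda)\Sigma_j^{-1}\mu_j)$ rather than the plain convex combination $\lambda \mu_i + (1-\lambda)\mu_j$. Only when $\Sigma_i = \Sigma_j = \Sigma$ do the precision weights cancel and the mean reduce to the manifold mixup interpolation. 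The proof must therefore make clear that the equivalence holds for this specific homoscedastic instance and that propagating the mean (rather than sampling $z$) is what removes the residual predictive variance that would otherwise distinguish the fused density from the deterministic manifold mixup forward pass.
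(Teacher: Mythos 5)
Your proposal is correct and follows essentially the same route as the paper: establish that log-linear pooling of two homoscedastic Gaussian embeddings yields a Gaussian with unchanged covariance and mean $\lambda h_\theta(x_i) + (1-\lambda)h_\theta(x_j)$ (the paper cites its exponential-family lemma where you complete the square directly), then use mean propagation to collapse the decoder integral into $p_{\theta_1}(\tilde y \mid \lambda h_\theta(x_i) + (1-\lambda)h_\theta(x_j))$, which is the manifold mixup likelihood. Your added remarks on where homoscedasticity is essential (precision-weighted versus plain convex combination of means) and on mean propagation removing the residual predictive variance are a useful elaboration of the same argument rather than a different one.
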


\begin{proof}[Proof Sketch]
As log-linear fusion of Gaussians is Gaussian (see Section~\ref{sec:linear-log-linear-fusion} and Appendix~\ref{thm: exponential_families}), and the means are propagated during training and inference, one can show the log-likelihoods of \textsf{M-ProbMix} and manifold mixup are equal.
\end{proof}

\section{Practical Considerations}


%
\paragraph{Sampling strategies:} When selecting pairs of samples for training, one option is to uniformly sample across all possible pairs in the training data, as in classical mixup. This corresponds to uniformly sampling edges in a fully connected graph where training samples are nodes. However, the edges between a pair of points  $(x_i, x_j)$ can be assigned weights $w_{i, j}$ to enforce non-uniform sampling strategies. In this work,   we consider two types of sampling graphs: a \emph{fully-connected graph}, such that $w_{i, j}=\frac{1}{n^2}$ for all edges $(i, j)\in{\cal E}$; and a \emph{nearest neighbors graph}, where $w_{i, j} \propto 1$ for all $(i, j)\in {\cal E}$ such that $x_j \in {\cal C}_K(x_i)$, where ${\cal C}_K(x_i)$ denotes the set of $K$ nearest neighbors of $x_i$. We refer to variants of our methods that utilize the fully-connected graph as $\textsf{ProbMix}$ and $\textsf{M-ProbMix}$ and the methods that utilize the nearest neighbors graph as \textsf{Loc$^\mathsf{K}$ProbMix} and $\textsf{Loc$^\mathsf{K}$M-ProbMix}$. By default, we utilize $K=5$ neighbors, unless otherwise specified.

\paragraph{Mixing and perturbation distributions:} In this work, we consider $\lambda\sim {\cal B}(\alpha, \alpha)$, where $\alpha\in (0, 1)$. With regards to the choice of the response perturbation distribution $s_\beta(\tilde{y}|y_i)$, since we are utilizing log-linear pooling, we need to guarantee that the distribution $s_\beta(\tilde{y}|y_i)$ is positive (i.e., has nonzero probability density over ${\cal Y}$). 

In regression tasks we use an isotropic Gaussian with variance $\beta>0$, i.e., 
$$s_\beta(\tilde{y}|y_i)={\cal N}(\tilde{y}|y_i, \beta\mathbf{I}_{d_y}).$$
In classification tasks, we bias the probabilities of each label (treated as a one-hot-encoded vector) by a positive constant $\beta>0$ such that the resulting distribution satisfies 
$$\mathbb{P}(\tilde{y}=k|y_i)\propto \mathbb{P}(y_i=k)+\beta.$$ 
We note that in classification settings we normalize for the perturbed response distribution to be a valid probability distribution. We conducted ablation studies on both a toy regression and classification dataset to explore different settings of $\alpha$ and $\beta$: more details can be found in Appendix \ref{app: toy_data}.

\paragraph{Mixing layer and embedding distribution:} For \textsf{M-ProbMix}, the choice of the embedding and its distributional form are important considerations. In this work, we choose the embedding layer to be the first layer after feature extraction. In the case of simple regression or classification tasks, this could be after the first of second layer of a fully connected network. For more complex architectures, such as LSTMs and transformers, we use the flattened output (potentially transformed to a lower dimensional space) as the embedding layer. In terms of the distributional form, we utilize a diagonal Gaussian parameterized as follows:
\begin{equation}
    q_{\theta_0}(z|x) = {\cal N}(z|h_{\theta_0}(z), \mathrm{diag}(\sigma_{\theta_0}^2)),
\end{equation}
where $\sigma^2_{\theta_0}:{\cal X}\rightarrow \mathbb{R}^{d_z}$ is a variance network that maps the input data to the diagonal covariance matrix of the embedding. Importantly, for computational efficiency, $\sigma^2_{\theta_0}$ should share parameters with $h_{\theta_0}(x)$, especially for larger architectures. Finally, we highlight that for this choice of $q_{\theta_0}(z|x)$, the reparameterization trick can be applied to reduce the variance of stochastic gradients during training. 

\section{Experiments}
In this section, we provide an empirical evaluation comparing our approaches \textsf{ProbMix}, \textsf{M-ProbMix}, \textsf{Loc$^\mathsf{K}$ProbMix}, and \textsf{Loc$^\mathsf{K}$M-ProbMix} with \textsf{ERM} and classical variants of mixup regularization: vanilla mixup (\textsf{Mix}), manifold mixup (\textsf{M-Mix}), and local mixup (\textsf{Loc$^\mathsf{K}$Mix}). We additionally compare to a combination of manifold mixup and local mixup (\textsf{Loc$^\mathsf{K}$M-Mix}). We note that local variants of both \textsf{ProbMix} and \textsf{Mix} assume a nearest neighbors sampling graph with $K=5$ neighbors. We also split the training dataset into 80\% train and 20\% validation, unless noted otherwise. Once training is complete, the model parameters that provide the smallest loss on the validation dataset is used for evaluation. 

\subsection{Toy Datasets}

\paragraph{Toy regression:} Consider the following data generating process for the toy regression dataset:
\begin{equation}
    y_i = x_i^3 + \epsilon_i,
\end{equation}
where $\epsilon_i\sim{\cal N}(0, 9)$. We generate $n=100$ training examples such that $x_i\sim{\cal U}(-4, 4)$. Test examples are generated by randomly sampling features $x_i\sim {\cal U}(4, 6)$, which are considered out-of-distribution with respect to training data. This experimental setup allows us to test the extrapolation capabilities of each method.  For each method, we use a two-layer multi-layer perceptron (MLP) with 128 and 64 hidden units per layer and train with full-batch gradient descent for $E=500$ epochs with a learning rate of $\eta=0.01$. For manifold-based approaches, we set the mixing layer as the first layer of the MLP (i.e., $d_z=128$).  Figure~\ref{fig: main_regression_examples} shows a comparison of the extrapolation performance across the different methods. More detailed box plots showing the performance of each method with respect to both mean-squared error (MSE) and negative log-likelihood (NLL) averaged over 10 runs can be found in Appendix~\ref{app: toy_regression_data}. 

\begin{figure*}[t]
    \centering
    \begin{subfigure}{0.24\textwidth}
        \centering
        \includegraphics[trim={0, 0, 0, 1.25cm}, clip, width=\textwidth]{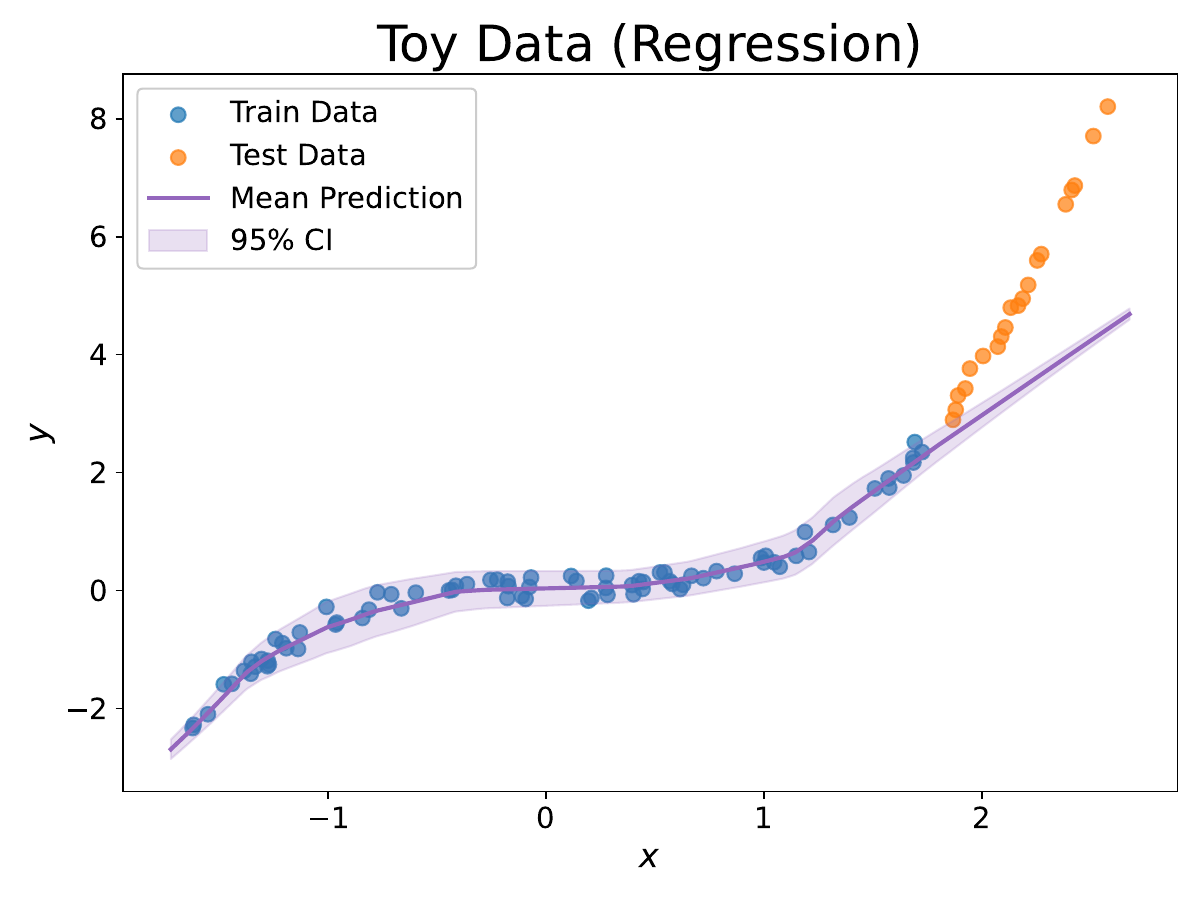}
        \caption{\textsf{Mix}.}
    \end{subfigure}%
    \hfill
    \begin{subfigure}{0.24\textwidth}
        \centering
        \includegraphics[trim={0, 0, 0, 1.25cm}, clip, width=\textwidth]{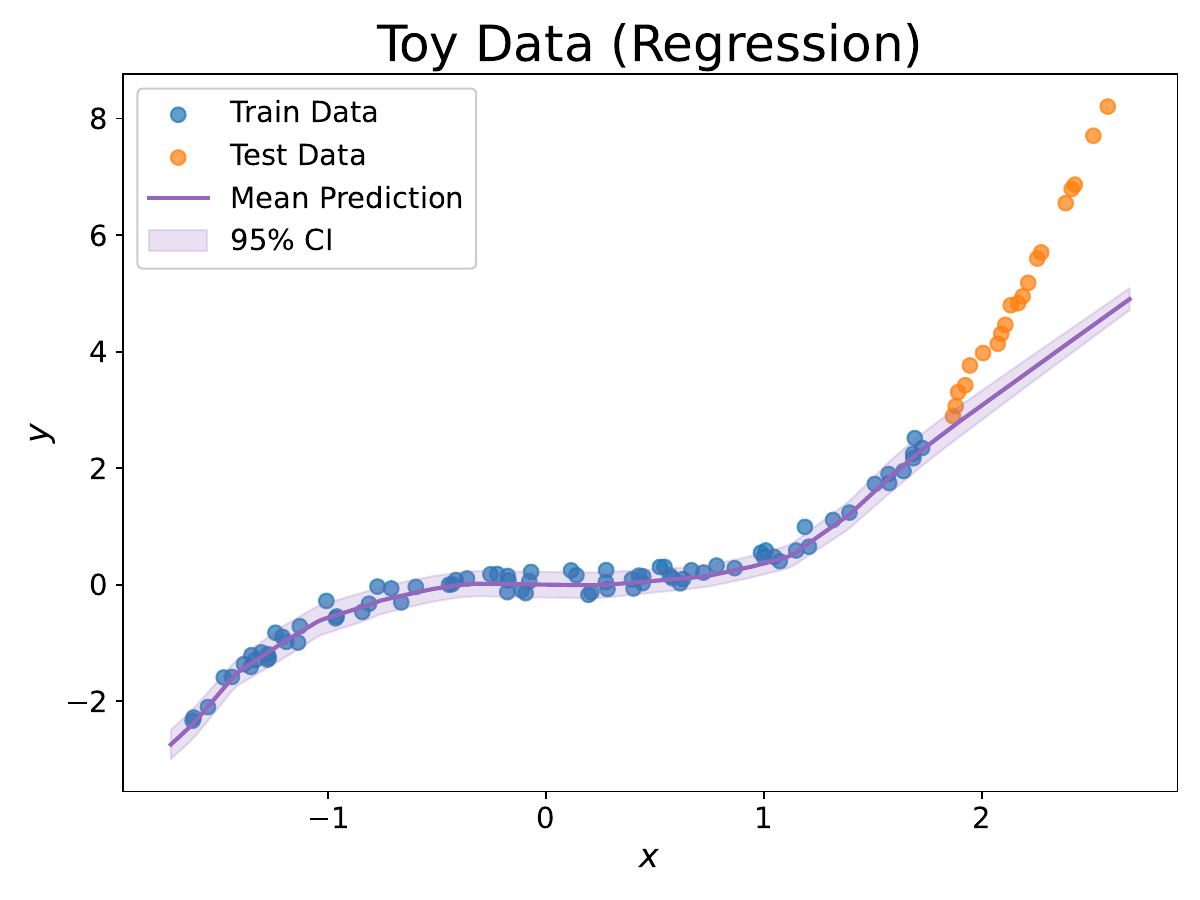}
        \caption{\textsf{Loc$^\mathsf{K}$Mix}.}
    \end{subfigure}%
    \begin{subfigure}{0.24\textwidth}
        \centering
        \includegraphics[trim={0, 0, 0, 1.25cm}, clip, width=\textwidth]{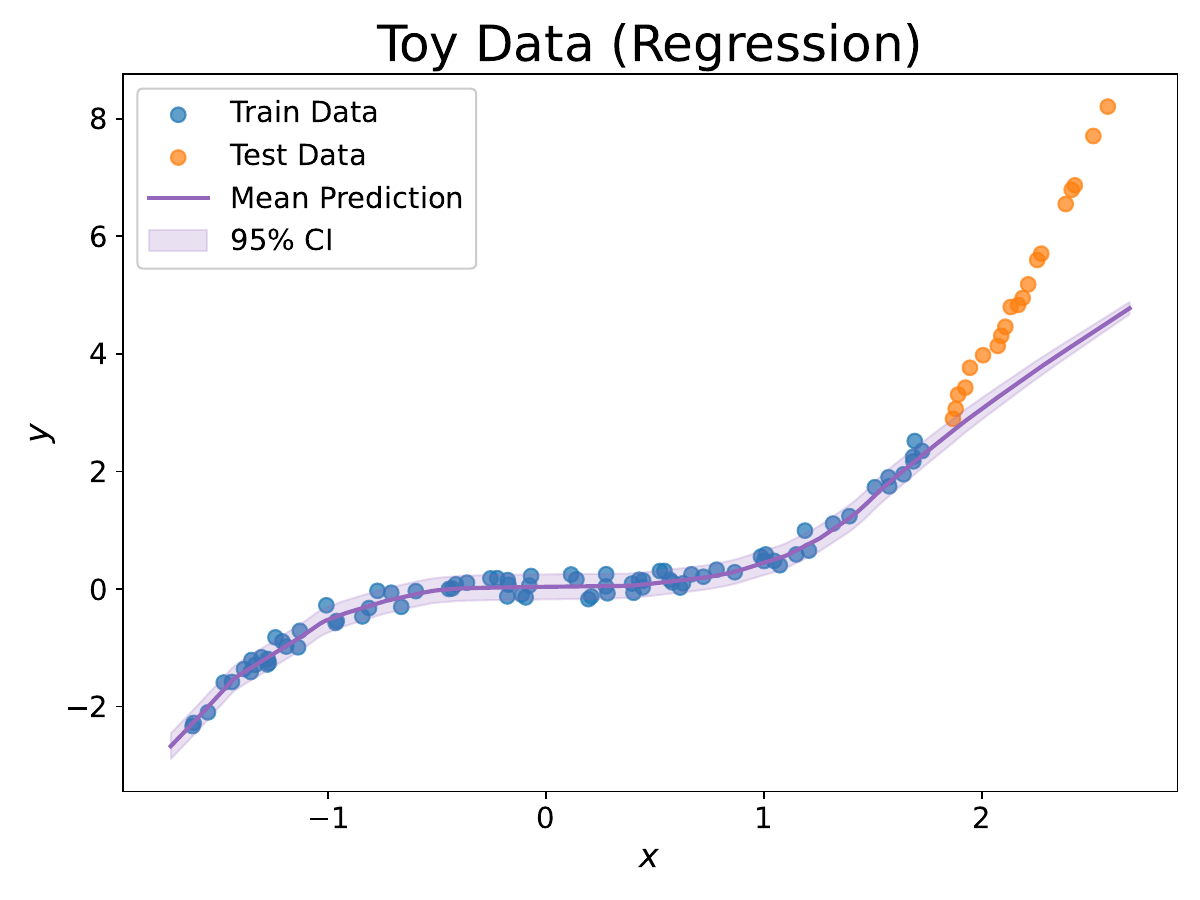}
        \caption{\textsf{ProbMix}.}
    \end{subfigure}%
    \hfill
    \begin{subfigure}{0.24\textwidth}
        \centering
        \includegraphics[trim={0, 0, 0, 1.25cm}, clip, width=\textwidth]{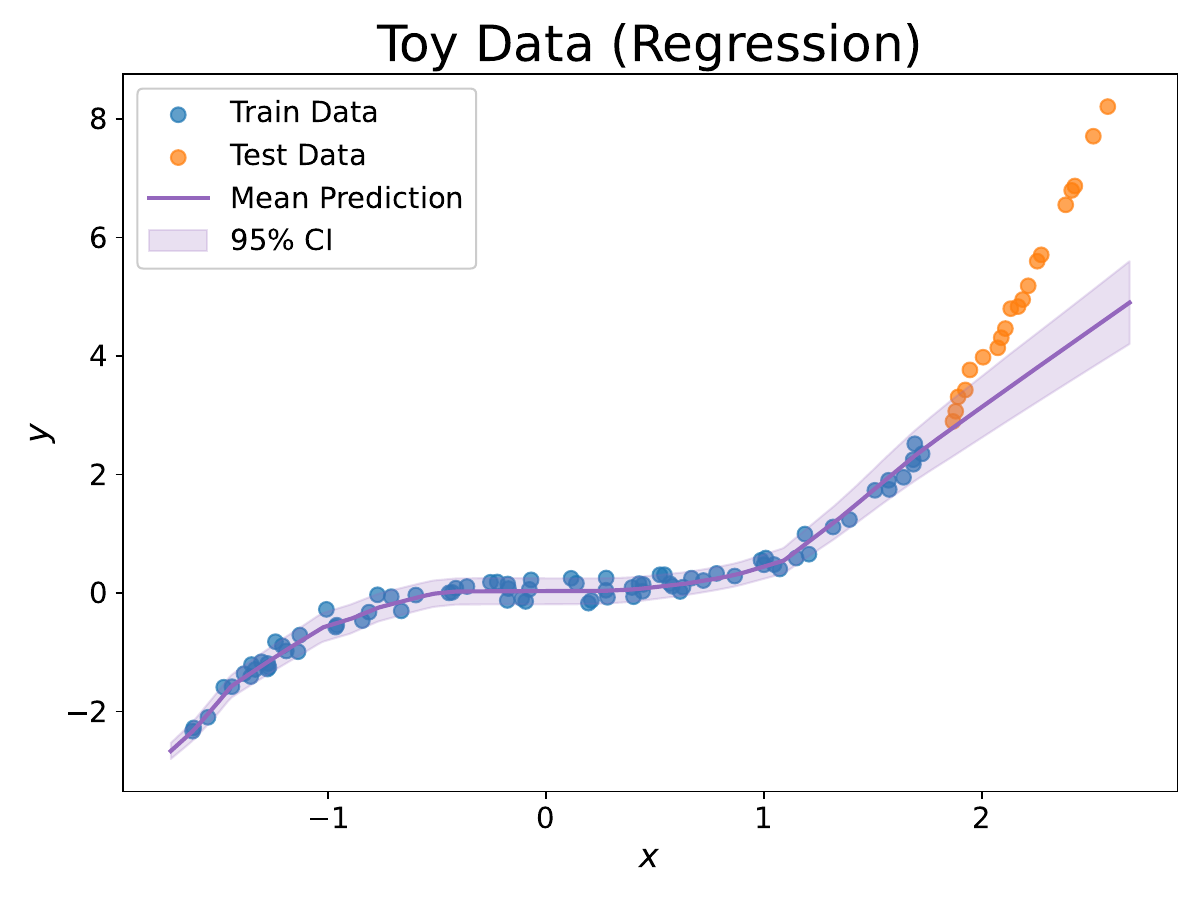}
        \caption{\textsf{Loc$^\mathsf{K}$ProbMix}.}
    \end{subfigure}%
    \caption{Visual example of different approaches for toy regression. \textsf{Mix} and \textsf{Loc$^\mathsf{K}$Mix} show poor performance on test samples, while \textsf{Loc$^\mathsf{K}$ProbMix}, although still mismatching in terms of the mean, outperforms mixup variants in terms of uncertainty calibration on out-of-sample data.}
    \label{fig: main_regression_examples}
\end{figure*}

In this example, while \textsf{Mix} achieves lower MSE than \textsf{ERM}, the NLL is much larger on average, implying worse calibration of the conditional density estimates. The other variants of mixup like $\textsf{M-Mix}$, \textsf{Loc$^\mathsf{K}$Mix}, and $\textsf{Loc$^\mathsf{K}$Mix}$ achiever similar MSE and NLL to \textsf{ERM} on average, but with much higher variance. This also implies that these other variants of mixup variation do not add any value to out-of-distribution extrapolation, but rather only introduce variability in training. We see that \textsf{ProbMix} and \textsf{Loc$^\mathsf{K}$ProbMix} achieve similar performance to the mixup variants, noting that in both cases of classical mixup regularization and probabilistic mixup, introducing locality in the sampling graph improves performance vastly in terms of NLL. This can be attributed to the fact that without locality, likelihoods for dissimilar (distant) features and their corresponding responses are fused, introducing a large bias. We observe the best performance in terms of both MSE and NLL with \textsf{M-ProbMix} and \textsf{Loc$^\mathsf{K}$M-ProbMix} methods.

Additionally, we provide results comparing the performance of linear and log-linear pooling function for \textsf{ProbMix} in Appendix \ref{app: exp_pooling_function}, as well as a comparison to latent variable modeling without mixup regularization in Appendix \ref{app: exp_latent_variable_modeling}. 

\paragraph{Toy classification:} We consider a three class toy classification problem from noisy ring-shaped distribution, where the features $x_i\in\mathbb{R}^2$ with label $y_i=k$ is generated according to:
\begin{equation}
    x_i = \begin{bmatrix} r_k\cos(\omega_i) \\ r_k\sin(\omega_i)\end{bmatrix} 
  +\epsilon_i,
\end{equation}
where $\omega_i\sim\mathcal{U}[0, 2\pi]$ and $\epsilon_i\sim {\cal N}(0, 0.09\mathbf{I}_2)$. We generate $n=100$ training examples from this generating process, where approximately an equal number of samples are generated per class with class radii defined by $r_1=0.5$, $r_2=1.5$, and $r_3=2.5$. We generate test samples from the same distribution as training data and utilize the same architecture and training hyperparameters as the toy regression dataset. Detailed results comparing the decision boundaries, test accuracy and NLL, averaged over 10 runs can be found in Appendix \ref{app: toy_classification_data}. 

In terms of test accuracy, we observe that all methods perform similarly, with \textsf{ERM} and \textsf{Loc$^\mathsf{K}$M-ProbMix} performing best on average. \textsf{M-Mix} and \textsf{ProbMix} (with $\beta=0$) achieve the worst performance; however, as evidence by the NLL, \textsf{ProbMix} achieves competitive performance in terms of uncertainty calibration. We highlight that \textsf{M-Mix} introduces large bias in the model, as the NLL is about $2\times$ as large as that of \textsf{ERM}. By varying the $\beta$ parameter, we notice that \textsf{ProbMix}'s performance improves in terms of both accuracy and NLL. Our probabilistic mixup framework shows best performance when $\beta=0.01$, as in this case, all variants of our approach are competitive or outperform baseline approaches.

\subsection{UCI Regression}

For the regression experiments, we use a two-layer MLP, with $128$ and $32$ units per layer, respectively. We use a set of regressions datasets from UCI, with a 90\%/10\% train/test split, and a further 20\% of the training data used as validation data to select the best performing model over $1000$ epochs. We utilize the 20 train/test splits made available by UCI, as in \cite{el-laham2023deep}. The models are trained by running the Adam optimizer for $E=1000$ epochs and a learning rate of $\eta=0.005$.

Table~\ref{tab: NLL_uci_full} summarizes the uncertainty quantification results in terms of negative log-likelihood (NLL), while Table~\ref{tab: RMSE_uci_full} in Appendix~\ref{app: experiments} provides the prediction performance in terms of root mean-square error (RMSE). Results indicate that \textsf{ProbMix} and its variants improve the uncertainty quantification capabilities of the model in almost all datasets, also providing a better prediction accuracy in terms of RMSE. As we observed the uncertainty quantification generally improves when considering \textsf{M-ProbMix} and various, we have also included a further refinement, which we have indicated as \textsf{M-ProbMix$^\star$}. In \textsf{M-ProbMix$^\star$},  a separate network is trained to predict the embedding variance, only during inference. For fairness of comparison with other methods, for \textsf{M-ProbMix$^\star$} we halved the size of the first embedding, so that the total size of the MLP would be comparable with the other experiments (i.e., the MLP is of size $64$ and $32$, with an additional variance embedding of size $64$). Result show that \textsf{M-ProbMix$^\star$} provides competitive results in terms of uncertainty quantification, but predictions performance might degrade in terms of RMSE, likely due to the reduced overall capacity of the prediction model.

\begin{table*}[t]
\centering
\resizebox{\textwidth}{!}{%
\begin{tabular}{@{}c|c|cccc|cccccc@{}}
\cmidrule{3-12}
\multicolumn{2}{c|}{} & \multicolumn{4}{c|}{Mixup Methods} & \multicolumn{6}{c}{ProbMixup Methods} \\ \midrule
Dataset & \textsf{ERM} & \textsf{Mix.} & \textsf{Loc$^\mathsf{K}$Mix.} & \textsf{M-Mix.} & Loc$^\mathsf{K}$M-Mix. & \textsf{ProbMix.} & \textsf{Loc$^\mathsf{K}$ProbMix.} & \textsf{M-ProbMix.} & \textsf{Loc$^\mathsf{K}$M-ProbMix.} & \textsf{M-ProbMix.}$^\star$ & \textsf{Loc$^\mathsf{K}$M-ProbMix.}$^\star$ \\ \midrule
bostonHousing & 3.37 $\pm$ 0.47 & 3.32 $\pm$ 1.17 & 3.46 $\pm$ 0.76 & 3.72 $\pm$ 2.32 & 3.31 $\pm$ 0.68 & 3.21 $\pm$ 1.07 & 3.26 $\pm$ 0.66 & 3.12 $\pm$ 0.56 & 3.09 $\pm$ 0.70 & 2.57 $\pm$ 0.41 & \textbf{2.52 $\pm$ 0.21} \\
energy & 1.24 $\pm$ 1.37 & 1.14 $\pm$ 1.00 & 1.08 $\pm$ 1.18 & 1.25 $\pm$ 1.52 & 1.00 $\pm$ 0.88 & 0.77 $\pm$ 0.32 & 0.82 $\pm$ 0.52 & \textbf{0.74 $\pm$ 0.42} & 0.93 $\pm$ 0.65 & 1.38 $\pm$ 0.18 & 1.13 $\pm$ 0.17 \\
wine-quality-red & 1.96 $\pm$ 2.08 & 1.84 $\pm$ 1.19 & 1.70 $\pm$ 1.43 & 1.87 $\pm$ 2.18 & 1.99 $\pm$ 2.01 & 1.50 $\pm$ 0.57 & 1.37 $\pm$ 0.32 & 1.26 $\pm$ 0.24 & 1.15 $\pm$ 0.18 & 1.16 $\pm$ 0.63 & \textbf{1.12 $\pm$ 0.22} \\
concrete & 4.50 $\pm$ 2.72 & 3.71 $\pm$ 0.64 & 3.75 $\pm$ 0.77 & 3.81 $\pm$ 0.95 & 3.69 $\pm$ 1.14 & 3.50 $\pm$ 0.58 & 3.67 $\pm$ 0.71 & 3.35 $\pm$ 0.48 & 3.37 $\pm$ 0.47 & 3.13 $\pm$ 0.22 & \textbf{3.07 $\pm$ 0.16} \\
power-plant & 2.86 $\pm$ 0.18 & 2.82 $\pm$ 0.11 & 2.84 $\pm$ 0.09 & 2.84 $\pm$ 0.12 & 2.85 $\pm$ 0.09 & \textbf{2.79 $\pm$ 0.06} & 2.84 $\pm$ 0.16 & 2.82 $\pm$ 0.06 & 2.84 $\pm$ 0.07 & 2.85 $\pm$ 0.05 & 2.86 $\pm$ 0.05 \\
yacht & 0.44 $\pm$ 0.94 & 1.70 $\pm$ 0.63 & 0.78 $\pm$ 0.38 & 0.49 $\pm$ 0.66 & 0.32 $\pm$ 0.29 & 1.39 $\pm$ 0.42 & 1.45 $\pm$ 0.88 & 0.27 $\pm$ 0.36 & \textbf{0.26 $\pm$ 0.25} & 0.92 $\pm$ 0.24 & 1.07 $\pm$ 0.28 \\
kin8nm$^\dagger$ & -1.14 $\pm$ 0.10 & -1.01 $\pm$ 0.13 & -1.14 $\pm$ 0.11 & -1.16 $\pm$ 0.09 & -1.07 $\pm$ 0.13 & -1.12 $\pm$ 0.10 & -1.15 $\pm$ 0.08 & \textbf{-1.20 $\pm$ 0.07} & -1.18 $\pm$ 0.05 & -1.17 $\pm$ 0.05 & -1.14 $\pm$ 0.05 \\
naval-propulsion-plant$^\dagger$ & -5.89 $\pm$ 0.88 & -6.15 $\pm$ 0.13 & -6.46 $\pm$ 0.45 & -6.28 $\pm$ 0.49 & \textbf{-6.47 $\pm$ 0.44} & -6.36 $\pm$ 0.63 & -6.12 $\pm$ 0.89 & -6.13 $\pm$ 0.30 & -5.44 $\pm$ 0.63 & -5.03 $\pm$ 0.04 & -4.92 $\pm$ 0.06 \\
\bottomrule
\end{tabular}
}
\caption{Average NLL for UCI regression datasets. In all datasets (except naval-propulsion-plant), probabilistic mixup variants obtain the best performance as compared to ERM and different mixup variants. ( $^\dagger$ indicates normalization to a single integer digit, while $^\star$ indicates the use of a separate variance networks, see text for details.)}
\label{tab: NLL_uci_full}
\end{table*}

\begin{table*}[t]
\centering
\resizebox{\textwidth}{!}{
\begin{tabular}{@{}clllllllll@{}}
\toprule
\multicolumn{2}{c}{LSTM}                                               & \multicolumn{4}{c}{Mixup Methods}                                                                                                                                              & \multicolumn{4}{c}{ProbMixup Methods}                                                                                                                                                         \\ \midrule
\multicolumn{1}{c|}{Dataset} & \multicolumn{1}{c|}{\textsf{ERM}}       & \multicolumn{1}{c}{\textsf{Mix}} & \multicolumn{1}{c}{\textsf{Loc$^\mathsf{K}$Mix}} & \multicolumn{1}{c}{\textsf{M-Mix}} & \multicolumn{1}{c|}{\textsf{Loc$^\mathsf{K}$M-Mix}} & \multicolumn{1}{c}{\textsf{ProbMix}} & \multicolumn{1}{c}{\textsf{Loc$^\mathsf{K}$ProbMix}} & \multicolumn{1}{c}{\textsf{M-ProbMix}} & \multicolumn{1}{c}{\textsf{Loc$^\mathsf{K}$M-ProbMix}} \\ \midrule
\multicolumn{1}{c|}{GME}     & \multicolumn{1}{l|}{$158.67 \pm 14.80$} & $106.54 \pm 5.36$                & $505.33 \pm 15.76$                               & $170.99 \pm 9.25$                  & \multicolumn{1}{l|}{$541.32 \pm 50.82$}             & $257.74 \pm 17.71$                   & $411.59 \pm 48.22$                                   & $\mathbf{67.32 \pm 3.99}$                     & $143.75 \pm 8.33$                                      \\
\multicolumn{1}{c|}{GOOG}    & \multicolumn{1}{l|}{$36.52 \pm 6.26$}   & $148.71 \pm 21.49$               & $110.14 \pm 3.66$                                & $102.01 \pm 35.98$                 & \multicolumn{1}{l|}{$80.34 \pm 15.73$}              & $141.25 \pm 40.68$                   & $52.11 \pm 4.27$                                     & $\mathbf{5.83 \pm 1.65}$                        & $11.08 \pm 2.47$                                       \\
\multicolumn{1}{c|}{NVDA}    & \multicolumn{1}{l|}{$3.17 \pm 0.73$}    & $5.43 \pm 1.10$                  & $49.79 \pm 8.65$                                 & $9.06 \pm 1.93$                    & \multicolumn{1}{l|}{$61.44 \pm 15.22$}              & $5.19 \pm 1.14$                      & $61.32 \pm 13.26$                                    & $1.67 \pm 0.14$                        & $\mathbf{1.11 \pm 0.08}$                                        \\
\multicolumn{1}{c|}{RCL}     & \multicolumn{1}{l|}{$14.22 \pm 0.96$}   & $\mathbf{0.74 \pm 0.08}$                  & $128.31 \pm 22.15$                               & $61.59 \pm 10.45$                  & \multicolumn{1}{l|}{$55.22 \pm 6.59$}               & $18.32 \pm 1.33$                     & $56.42 \pm 6.33$                                     & $2.22 \pm 0.30$                        & $6.22 \pm 0.74$                                        \\ \midrule
\multicolumn{2}{c|}{Transformer}                                       & \multicolumn{4}{c|}{Mixup Methods}                                                                                                                                             & \multicolumn{4}{c}{ProbMixup Methods}                                                                                                                                                         \\ \midrule
\multicolumn{1}{c|}{Dataset} & \multicolumn{1}{c|}{\textsf{ERM}}       & \multicolumn{1}{c}{\textsf{Mix}} & \multicolumn{1}{c}{\textsf{Loc$^\mathsf{K}$Mix}} & \multicolumn{1}{c}{\textsf{M-Mix}} & \multicolumn{1}{c|}{\textsf{Loc$^\mathsf{K}$M-Mix}} & \multicolumn{1}{c}{\textsf{ProbMix}} & \multicolumn{1}{c}{\textsf{Loc$^\mathsf{K}$ProbMix}} & \multicolumn{1}{c}{\textsf{M-ProbMix}} & \multicolumn{1}{c}{\textsf{Loc$^\mathsf{K}$M-ProbMix}} \\ \midrule
\multicolumn{1}{c|}{GME}     & \multicolumn{1}{l|}{$684.79 \pm 83.17$} & $507.55 \pm 28.04$               & $650.72 \pm 59.74$                               & $788.60 \pm 70.42$                 & \multicolumn{1}{l|}{$551.78 \pm 27.83$}             & $389.45 \pm 17.00$                   & $466.82 \pm 25.67$                                   & $\mathbf{374.15 \pm 41.05}$                     & $398.97 \pm 16.66$                                     \\
\multicolumn{1}{c|}{GOOG}    & \multicolumn{1}{l|}{$114.32 \pm 23.27$} & $147.85 \pm 10.57$               & $57.37 \pm 16.75$                                & $164.20 \pm 26.53$                 & \multicolumn{1}{l|}{$64.75 \pm 12.21$}              & $203.44 \pm 31.69$                   & $\mathbf{35.21 \pm 7.98}$                                     & $111.34 \pm 21.59$                     & $86.94 \pm 15.56$                                      \\
\multicolumn{1}{c|}{NVDA}    & \multicolumn{1}{l|}{$2.29 \pm 0.32$}    & $1.32 \pm 0.16$                  & $3.23 \pm 0.44$                                  & $2.38 \pm 0.13$                    & \multicolumn{1}{l|}{$1.90 \pm 0.11$}                & $7.77 \pm 0.67$                      & $1.78 \pm 0.03$                                      & $\mathbf{0.91 \pm 0.10}$                        & $1.81 \pm 0.08$                                        \\
\multicolumn{1}{c|}{RCL}     & \multicolumn{1}{l|}{$10.04 \pm 2.17$}   & $\mathbf{-0.05 \pm 0.02}$                 & $10.26 \pm 1.17$                                 & $8.46 \pm 1.39$                    & \multicolumn{1}{l|}{$4.85 \pm 0.78$}                & $7.46 \pm 0.55$                      & $4.82 \pm 0.65$                                      & $3.46 \pm 0.34$                        & $4.36 \pm 0.85$                                        \\ \bottomrule
\end{tabular}
}
    \caption{NLL on stock datasets for time series forecasting for LSTM model (above) and transformer model (below). Numerical values have been normalized in each dataset so that the best performing method has one or two integer digits. Probabilistic mixup variants outperforms all other methods except on the the RCL stock, for which \textsf{Mix} performs best.}
    \label{tab: full_nll_timeseries}
\end{table*}

\subsection{Financial Time Series Forecasting}
Finally, we demonstrate the performance of probabilistic mixup on a time-series forecasting task. Specifically, we used historical data obtained 
for the following stocks: Google (GOOG), Gamestop (GME), NVIDIA (NVDA), and Royal Carribean (RCL).  \footnote{Data available for download via the yfianace API \url{https://pypi.org/project/yfinance/}} . 
Let $S=[S_1, \ldots, S_T]\in\mathbb{R}^{4\times T}$ denote a stock time series, where each $S_t$ corresponds to the open, high, low, and close price for day $t$. We use the sliding window technique to extract price time-series $x_i=[S_i, \ldots, S_{i+W}]\in\mathbb{R}^{4\times W}$ and its corresponding forecast of the close price $y_i=S_{4, i+W+H}$, where $W=42$ denotes the window size and $H=21$ denotes the forecast horizon. We utilize data from the period: 01/01/2019 to 01/01/2021 and consider two different time series architectures for this forecasting task: an LSTM architecture with one hidden layer with $64$ hidden units; and a transformer architecture with 2 attention heads, 2 encoder/decoder hidden layers with 64 hidden units per layer. For both architectures, the extracted features are flattened and linearly compressed to an embedding of size $d_z=64$ (which corresponds to the embedding layer). We train the models using the Adam optimizer for $E=2000$ epochs with a learning rate of $\eta=0.0005$. Results in terms of NLL can be found for the LSTM and transformer models in Table \ref{tab: full_nll_timeseries}, respectively. Results in terms of test RMSE can be found in Table~\ref{tab: RMSE_stock_full}, while the average training NLL and average training RMSE can be found in Tables \ref{tab: NLL_stock_full_training} and \ref{tab: RMSE_stock_full_training} in Appendix~\ref{app: stocks}.  

Results show that in 3 out of the 4 stocks, probabilistic mixup variants outperform \textsf{ERM} and standard mixup variants in terms of NLL for both the LSTM and transformer architecture. We can see for the case of GME, both architectures overfit to the training data, reflected in the substantially worse performance. While \textsf{Mix} improves performance as compared to \textsf{ERM} for this dataset, we can see that probabilistic mixup approaches can achieve lower NLL, indicating better calibration of the conditional density estimates on out-of-distribution samples. We attribute this performance gain to the fact that probabilistic mixup approaches provide more conservative predictions of the variance of the conditional density (see sample plots in Appendix \ref{app: stocks}).

\section{Conclusions and future work}
In this work, we proposed a novel formulation of mixup regularization tailored for conditional density estimation tasks. Specifically, we introduced a mixup regularization scheme called \textsf{ProbMix}, which operates by probabilistically fusing the conditional densities of different input features within the model. Additionally, we proposed an extension called \textsf{M-ProbMix}, which involves fusing on a statistical manifold defined at an intermediate layer of the network. Our theoretical results demonstrate that many instances of classic mixup regularization can be viewed as specific cases within our proposed framework. Empirical results show that both \textsf{ProbMix} and \textsf{M-ProbMix} produce conditional density estimates that are significantly better calibrated for out-of-sample data. 

While the focus of this work was to devise a framework to extend mixup in the context of conditional density estimations, in future work we plan on extending \textsf{ProbMix} applications to more complex data modalities (such as images, text and speech), as well as including a comparison with existing approaches for quantifying uncertainty in deep learning models, and whether the inclusion of \textsf{ProbMix} could further improve performance (e.g, if every model in the deep ensemble approach in \citealt{lakshminarayanan2017simple} is trained instead with probabilistic mixup).
Potential future research directions include exploring the applicability of our methods to generative frameworks, such as variational autoencoders or denoising diffusion probabilistic models. Finally, we envision a theoretical analysis on the generalization performance of our proposed method could be pursued in a similar fashion as in \cite{NEURIPS2022_1626be0a}.

\begin{acknowledgements}
The authors would like to thank Nelson Vadori for providing feedback during the research ideation stage. \\ \\
\textbf{Disclaimer:} This paper was prepared for informational purposes by the Artificial Intelligence Research group of JPMorgan Chase \& Co. and its affiliates ("JP Morgan'') and is not a product of the Research Department of JP Morgan. JP Morgan makes no representation and warranty whatsoever and disclaims all liability, for the completeness, accuracy or reliability of the information contained herein. This document is not intended as investment research or investment advice, or a recommendation, offer or solicitation for the purchase or sale of any security, financial instrument, financial product or service, or to be used in any way for evaluating the merits of participating in any transaction, and shall not constitute a solicitation under any jurisdiction or to any person, if such solicitation under such jurisdiction or to such person would be unlawful.
\end{acknowledgements}


\appendix

\onecolumn

\title{Mixup Regularization: A Probabilistic Perspective\\(Appendix)}
\maketitle

\section{Review of Mixup Variants}
\label{app: review_mixup}

\subsection{Manifold Mixup.}
\label{app: manifold_mixup}
Manifold mixup is a variant of mixup that instead constructs augmented samples by mixing the features in some hidden layer of the predictor. For discussing manifold mixup, it is helpful to think of the neural network predictor as a composition of two functions $f_\theta = h_{\theta_1} \circ h_{\theta_0}$, where $h_{\theta_0}$ is referred to as the \emph{feature extractor} and $h_{\theta_1}$ is referred to as the \emph{predictor} and $\theta=\{\theta_0, \theta_1\}$. Mathematically, the loss function for manifold mixup is given by:
\begin{equation}
    \label{eq: manifold_mixup_risk}
    \tilde{R}_{\alpha, L{\rm mix}}^{{\cal M}}(\theta) = \frac{1}{n^2} \sum_{i=1}^n \sum_{j=1}^n \mathbb{E}_{\lambda}\left[\ell(h_{\theta_1}(\tilde{z}_{i, j, \lambda}), \tilde{y}_{i, j, \lambda})\right]
\end{equation}
where $\tilde{z}_{i, j, \lambda}$ is defined as a mixture of the features in after passing the input features through the feature extractor $h_{\theta_0}$:
\begin{equation}
    \tilde{z}_{i, j, \lambda} = \lambda h_{\theta_0}(x_i) + (1-\lambda) h_{\theta_0}(x_j),
\end{equation}

with the mixing parameter $\lambda$ following the same setting as in vanilla mixup. We note that manifold mixup introduces an additional hyperparameter, namely in which layer the features are mixed, i.e., how to construct the feature extractor $h_{\theta_0}$ and predictor $h_{\theta_1}$. Finally, manifold mixup selects pairs within the manifold by sampling data points uniformly at random, hence inducing a fully connected graph with all edges having the same weights.


\subsection{Local Mixup.} 
\label{app: local_mixup}
While manifold mixup may seem more principled than vanilla mixup, the problem of manifold intrusion can arise in both approaches, as the  manifold learned by the feature extractor may not have desirable properties. To combat this, several variants of mixup have been proposed that instead construct mixup augmentations locally based on a weighted graph. Consider a graph ${\cal G}=({\cal D}, {\cal E}, {\cal W})$, where ${\cal D}$ are the vertices of the graph, while ${\cal E}$ and ${\cal W}$ denote edges and weights of the graph, respectively. We use  $w_{i, j}\in {\cal W}$ to denote the weight of the edge between $(x_i, y_i)$ and $(x_j, y_j)$. For example, local mixup performs vanilla mixup augmentations locally based on some weighted graph defined over the observed dataset, thereby reducing the risk of mixing samples from different classes that lie on different manifolds. In essence local mixup considers a more general form of the vicinal distribution than vanilla mixup: 
\begin{equation}
    \label{eq: vicinal_distribution_local_mixup}
    \tilde{p}_{\alpha, {\cal G}}(x, y) =  \sum_{(i, j)\in{\cal E}} w_{i, j}\mathbb{E}_{\lambda}\left[\delta_{X, Y}(\tilde{x}_{i, j, \lambda}, \tilde{y}_{i, j, \lambda})\right],
\end{equation}
which leads to a weighted version of the vanilla mixup loss function:
\begin{equation}
    \label{eq: local_mixup_loss}
    \tilde{R}_{\alpha, {\cal G}}^{\rm local}(\theta) = \sum_{(i, j)\in {\cal E}} w_{i, j}\mathbb{E}_{\lambda}\left[\ell(f_\theta(\tilde{x}_{i, j, \lambda}), \tilde{y}_{i, j, \lambda})\right]
\end{equation}
This idea can also be trivially extended to manifold mixup.

\section{Log-Expected Likelihood Criterion}
\label{app: loss_discussion}
An alternative optimization criterion is to maximize the logarithm of the expected likelihood:
\begin{align}
    l^{\rm up}(\theta; \alpha, {\cal G}) &= \log \mathbb{E}\left[\tilde{p}_\theta(\tilde{y}|x_i, x_j, \lambda)\right] \\
    &= \log\left(\sum_{(i, j)\in{\cal E}} w_{i, j} \mathbb{E}_{\lambda}\left[\mathbb{E}_{\tilde{y}}[\tilde{p}_\theta(\tilde{y}|x_i, x_j, \lambda)|\lambda]\right]\right)
\end{align}
We note that since the logarithm is outside the expectation operator, this loss function does not have an interpretation from a VRM standpoint. Moreover, it is easy to see that, by Jensen's inequality, $l^{\rm up}(\theta; \alpha, {\cal G})$ upper bounds $l(\theta; \alpha, {\cal G})$ 
\begin{align}
    l^{\rm up}(\theta; \alpha, {\cal G}) &= \log \mathbb{E}\left[\tilde{p}_\theta(y|x_i, x_j, \lambda)\right] \\
    &\geq \mathbb{E}\left[\log \tilde{p}_\theta(y|x_i, x_j, \lambda)\right] = l(\theta; \alpha, {\cal G})
\end{align}
When $K=1$, we note that Monte Carlo estimates of $l^{\rm up}(\theta; \alpha, {\cal G})$ and $l(\theta; \alpha, {\cal G})$ are identical, meaning that from a practical perspective, gradient updates will be the same for both. Importantly, when $K>1$, the expected log-likelihood and logarithm of the expected likelihood will yield different stochastic gradient updates.

\section{Log-Linear Pooling for Exponential Family Members}
\label{thm: exponential_families}

\begin{theorem}[Log-Linear Pooling of Exponential Families]
Let $g_\lambda^x$ denote the log-linear pooling function. Suppose that $p_\theta(y|x)$ belongs to the exponential family of probability distributions, that is,
\begin{equation}
    p_{\theta}(y|x) = h(y)\exp\left(\phi_{\theta}(x)T(y) - A_\theta(x)\right),
\end{equation}
Then for two inputs $x_i$ and $x_j$, the log-linear fusion of $p_\theta(y|x_i)$ and $p_\theta(y|x_j)$ belongs to the same exponential family member. That is,
\begin{align}
    &g_\lambda^x(p_\theta(y|x_i), p_\theta(y|x_j)) =  h(y)\exp\left(\tilde\phi_{\theta, \lambda}(x_i, x_j)T(y) - \tilde{A}_{\theta, \lambda}(x_i, x_j)\right)
\end{align}
with 
\begin{align}
    &\tilde\phi_{\theta, \lambda}(x_i, x_j) = \lambda \phi_\theta(x_i)+ (1-\lambda)\phi_\theta(x_j) \\
    &\tilde{A}_{\theta, \lambda}(x_i, x_j) = \xi+\lambda A_\theta(x_i) + (1-\lambda)A_\theta(x_j), 
\end{align}
where $\xi$ denotes a normalization constant.
\end{theorem}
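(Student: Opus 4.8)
The plan is to prove the claim by direct substitution of the exponential-family form into the definition of log-linear pooling and then collecting terms according to their dependence on $y$. First I would write out the unnormalized fusion,
\begin{equation*}
\left[p_\theta(y|x_i)\right]^\lambda \left[p_\theta(y|x_j)\right]^{1-\lambda} = h(y)^\lambda h(y)^{1-\lambda} \exp\left( \left[\lambda\phi_\theta(x_i) + (1-\lambda)\phi_\theta(x_j)\right] T(y) - \lambda A_\theta(x_i) - (1-\lambda) A_\theta(x_j)\right).
\end{equation*}
The base measure collapses as $h(y)^\lambda h(y)^{1-\lambda} = h(y)$, and the coefficient of $T(y)$ is exactly $\tilde\phi_{\theta,\lambda}(x_i,x_j) = \lambda\phi_\theta(x_i)+(1-\lambda)\phi_\theta(x_j)$, which already matches the claimed natural parameter. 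The remaining terms $\lambda A_\theta(x_i) + (1-\lambda)A_\theta(x_j)$ are constant in $y$ and factor out of the exponent.

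Next I would address normalization. Since log-linear pooling is defined only up to a proportionality constant, the fused density must be divided by its integral over ${\cal Y}$. Writing the normalizing integral as
\begin{equation*}
Z = \int_{{\cal Y}} h(y)\exp\left(\tilde\phi_{\theta,\lambda}(x_i,x_j) T(y)\right) {\rm d}y,
\end{equation*}
the normalized fusion becomes $h(y)\exp\left(\tilde\phi_{\theta,\lambda}(x_i,x_j)T(y) - \log Z\right)$, which is precisely the claimed exponential-family form with log-partition function $\tilde{A}_{\theta,\lambda}(x_i,x_j) = \log Z$. Matching this against the stated expression identifies the normalization constant as $\xi = \log Z - \lambda A_\theta(x_i) - (1-\lambda) A_\theta(x_j)$, i.e., the gap between the true log-partition function of the fused natural parameter and the convex combination of the two original log-partition functions.

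The main obstacle is ensuring that $Z$ is finite, so that the fused object is genuinely a member of the family rather than a non-integrable form. This follows from the convexity of the natural parameter space of an exponential family: because $\phi_\theta(x_i)$ and $\phi_\theta(x_j)$ are valid natural parameters, their convex combination $\tilde\phi_{\theta,\lambda}(x_i,x_j)$ lies in the same (convex) domain for every $\lambda\in[0,1]$, hence $Z<\infty$ and $\tilde{A}_{\theta,\lambda}$ is well defined. Everything else is bookkeeping. As a closing step I would instantiate the general result for the two cases used in the experiments, verifying that $\tilde\phi_{\theta,\lambda}$ reduces to a convex combination of logits in the categorical case and of the canonical parameters $\bigl(\mu/\sigma^2,\,-1/(2\sigma^2)\bigr)$ in the Gaussian case, thereby yielding the closed-form fused densities used for classification and regression.
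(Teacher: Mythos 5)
Your proof is correct and follows essentially the same route as the paper's: direct substitution of the exponential-family form into the log-linear pooling definition and collecting the $T(y)$-dependent and constant terms, with the leftover absorbed into a normalization constant $\xi$. Your explicit identification of $\xi$ as the gap between the fused log-partition function and the convex combination of the originals, together with the convexity argument guaranteeing $Z<\infty$, is a small but welcome tightening of a point the paper leaves implicit.
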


\begin{proof}
In the general case of the likelihood function of an example $(x, y)$ belonging to the exponential family,  we have:
\begin{align}
    p_{\theta}(y|x) = h(y)\exp\left(\phi_{\theta}(x)T(y) - A(\phi_{\theta}(x))\right),
\end{align}
withe corresponding log-likelihood:
\begin{align}
    \log p_{\theta}(y|x) = \log h(y) + \phi_{\theta}(x)T(y) - A(\phi_{\theta}(x))
\end{align}
A log-linear pooling of the likelihood function of $\theta$ for the examples $(x_i, y)$ and $(x_j, y)$ is given by:
\begin{align}
    &\log \tilde{p}_{\theta}(y|x_i, x_j) = \xi(x_i, x_j, \theta) + \lambda \log p_{\theta}(y|x_i) + (1-\lambda)\log p_{\theta}(y|x_j)\\
    &=\xi(x_i, x_j, \theta) + \lambda \left(\log h(y) + \phi_{\theta}(x_i)T(y) - A(\phi_{\theta}(x_i))\right) + (1-\lambda)\left(\log h(y) + \phi_{\theta}(x_j)T(y) - A(\phi_{\theta}(x_j))\right)\\
    &= \xi(x_i, x_j, \theta) + \log h(y) + \left(\lambda \phi_\theta(x_i)+ (1-\lambda)\phi_\theta(x_j)\right)T(y) - \left(\lambda A(\phi_\theta(x_i)) + (1-\lambda)A(\phi_\theta(x_j))\right) \\
    &= \log h(y) + \tilde\phi_\theta(x_i, x_j, \lambda) T(y) - \tilde{A}(\phi_\theta(x_i), \phi_\theta(x_j), \lambda)
\end{align}
where we define:
\begin{align}
    &\tilde\phi_\theta(x_i, x_j, \lambda) = \lambda \phi_\theta(x_i)+ (1-\lambda)\phi_\theta(x_j) \\
    &\tilde{A}(\phi_\theta(x_i), \phi_\theta(x_j), \lambda) = \lambda A(\phi_\theta(x_i)) + (1-\lambda)A(\phi_\theta(x_j)) -  \xi(x_i, x_j, \theta)
\end{align}
Thus, in general, the log-linear pooling function applied to two exponential family members from the same family results in a exponential family member. Furthermore, since the sufficient statistic $T(y)$ is preserved in the fusion result, the log-linear pooling function will always yield the same exponential family family member. Many important distributions belong to the exponential family of probability distributions. These include normal distributions, beta, gamma, categorical, and Poisson distributions, amongst others.
\end{proof}

\subsection{Log-Linear Pooling for Gaussian Regression}
In the classification setting, we have the following likelihood function for each example $(x, y)$:
\begin{align}
    p_{\theta}(y|x) = \frac{1}{\sqrt{2\pi\sigma^2_\theta(x)}}\exp\left(-\frac{(y-\mu_{\theta}(x))^2}{2\sigma_{\theta}^2(x)}\right)
\end{align}
with corresponding log-likelihood function:
\begin{align}
    \log p_{\theta}(y|x) = -\frac{1}{2}\log(2\pi)-\frac{1}{2}\log \sigma^2_\theta(x) -\frac{(y-\mu_{\theta}(x))^2}{2\sigma_{\theta}^2(x)}
\end{align}
A log-linear pooling of the likelihood function of $\theta$ for the examples $(x_i, y)$ and $(x_j, y)$ is given by:
\begin{align}
    &\log \tilde{p}_{\theta}(y|x_i, x_j) = \xi_0(x_i, x_j, \theta) + \lambda \log p_{\theta}(y|x_i) + (1-\lambda)\log p_{\theta}(y|x_j)\\
    &=\xi_1(x_i, x_j, \theta) -\frac{\lambda}{2}\log \sigma^2_\theta(x_i) -\frac{\lambda(y-\mu_{\theta}(x_i))^2}{2\sigma_{\theta}^2(x_i)} -\frac{1-\lambda}{2}\log \sigma^2_\theta(x_j) -\frac{(1-\lambda)(y-\mu_{\theta}(x_i))^2}{2\sigma_{\theta}^2(x_i)} \\
    &=\xi_1(x_i, x_j, \theta)-\frac{1}{2}\log\left(\left(\sigma^2_\theta(x_i)\right)^{\lambda}\left(\sigma^2_\theta(x_j)\right)^{1-\lambda}\right) -\frac{\lambda\sigma_{\theta}^2(x_j)(y-\mu_{\theta}(x_i))^2+(1-\lambda)\sigma_{\theta}^2(x_i)(y-\mu_{\theta}(x_j))^2}{2\sigma_{\theta}^2(x_i)\sigma_{\theta}^2(x_j)} \\
    &\implies \\
    & \tilde{p}_\theta(y|x_i, x_j) = {\cal N}\left(y\Bigg| \left(\frac{\lambda}{\sigma^2_\theta(x_i)}+ \frac{1-\lambda}{\sigma^2_\theta(x_j)}\right)^{-1}\left(\lambda\left(\frac{\mu_\theta(x_i)}{\sigma_\theta^2(x_i)}\right)+(1-\lambda)\left(\frac{\mu_\theta(x_j)}{\sigma^2_\theta(x_j)}\right)\right), \left(\frac{\lambda}{\sigma^2_\theta(x_i)}+ \frac{1-\lambda}{\sigma^2_\theta(x_j)}\right)^{-1}\right)
\end{align}

\subsection{Log-Linear Pooling for Classification}
In the classification setting, we have the following likelihood function for each example $(x, y)$:
\begin{align}
    p_{\theta}(y|x) = \prod_{k=1}^K \pi_{k, \theta}(x)^{\mathbf{1}(y=c_k)}
\end{align}
with corresponding log-likelihood:
\begin{align}
    \log p_{\theta}(y|x) = \sum_{k=1}^K \mathbf{1}(y=c_k) \log \pi_{k, \theta}(x)
\end{align}
A log-linear pooling of the likelihood function of $\theta$ for the examples $(x_i, y)$ and $(x_j, y)$ is given by:
\begin{align}
    \log \tilde{p}_{\theta}(y|x_i, x_j) &= \xi(x_i, x_j, \theta) + \lambda \log p_{\theta}(y|x_i) + (1-\lambda)\log p_{\theta}(y|x_j)\\
    &= \xi(x_i, x_j, \theta) + \lambda \left(\sum_{k=1}^K \mathbf{1}(y=c_k) \log \pi_{k, \theta}(x_i)\right) + (1-\lambda)\left(\sum_{k=1}^K \mathbf{1}(y=c_k) \log \pi_{k, \theta}(x_j)\right)\\
    &= \xi(x_i, x_j, \theta) +\left(\sum_{k=1}^K \mathbf{1}(y=c_k) \log \pi_{k, \theta}^{\lambda}(x_i)\right) + \left(\sum_{k=1}^K \mathbf{1}(y=c_k) \log \pi_{k, \theta}^{1-\lambda}(x_j)\right)\\
    &= \xi(x_i, x_j, \theta) + \sum_{k=1}^K \mathbf{1}(y=c_k) \left(\log \pi_{k, \theta}^{\lambda}(x_i) + \log \pi_{k, \theta}^{1-\lambda}(x_j)\right),
\end{align}
where $\xi(x_i, x_j, \theta)$ is a normalizing constant that depends on $x_i$, $x_j$, and $\theta$.This result implies that log-linear pooling of two categorical distributions is itself a categorical distribution. In practice, one would just need to combine the logits using a weighted arithmetic average to obtain the fused likelihood function.

\section{Illustrative Example: Heteroscedastic Gaussian Regression}\label{app: illustrative_example}
Here, we elaborate on the illustrative example from Section \ref{sss: illustrative_example}. In this illustrative example, we consider heteroscedastic Gaussian predictors of the form:
\begin{equation}
    \label{eq: heteroscesdatic_gaussian_predictor}
    y = \mu_\theta(x) + \sqrt{\sigma_\theta^2(x)} \epsilon, \quad \epsilon \sim{\cal N}(0, 1),
\end{equation}
where $\mu_\theta(x)$ denotes the \emph{mean function} and $\sigma_\theta^2(x)$ denotes the \emph{variance function}. For two inputs $(x_i, y_i)$ and $(x_j, y_j)$, vanilla mixup regularizes the model by learning to calibrate the random interpolations $\tilde{y}=\lambda y_i + (1-\lambda)y_j$ to the following:
\begin{align}
    \tilde{x} &= \lambda x_i + (1-\lambda)x_j \\
    p_{\theta}(\tilde{y}|\tilde{x})&={\cal N}(\tilde{y}|\mu_\theta(\tilde{x}), \sigma_\theta^2(\tilde{x}))
\end{align}
Note that in Mixup, the input to both $\mu_\theta$ (mean function) and $\sigma_\theta^2$ (variance function) is the interpolated feature $\tilde{x}=\lambda x_i + (1-\lambda)x_j$. Essentially, vanilla mixup biases both the mean function $\mu_\theta(x)$ and variance function $\sigma_\theta^2(x)$ to behave linearly between observed samples.
In contrast, \textsf{ProbMix} learns to calibrate the random interpolations $\tilde{y}$ to a (log-linear) fusion of the predicted conditional densities directly:
\begin{equation}
    p_{\theta}(\tilde{y}|x_i, x_j, \lambda) = {\cal N}(\tilde{y}|\mu_{\star}, \sigma_\star^2)
\end{equation}
where
\begin{align}
    \sigma_\star^2 &= \left(\frac{\lambda}{\sigma^2_\theta(x_i)}+ \frac{1-\lambda}{\sigma^2_\theta(x_j)}\right)^{-1} \\
    \mu_\star &= \sigma_\star^2 \left(\lambda\left(\frac{\mu_\theta(x_i)}{\sigma_\theta^2(x_i)}\right)+(1-\lambda)\left(\frac{\mu_\theta(x_j)}{\sigma^2_\theta(x_j)}\right)\right)
\end{align}
\textsf{ProbMix} combines statistical information based on two sources of information ($x_i$ and $x_j$) and depending on the degree of confidence placed in each source (determined by the value of $\lambda$ that is sampled), we obtain a different aggregated prediction (fused conditional density). While vanilla mixup biases the mean and variance functions to behave linearly between samples, \textsf{ProbMix} instead biases the model so that predicted conditional densities themselves behave log-linearly between samples.

In the context of the numerical example provided in Section \ref{sss: illustrative_example}, we consider the following data generating process:
\begin{equation}
    y = x^3 + (0.5x^2+1)\epsilon, \quad \epsilon\sim {\cal N}(0, 1).
\end{equation}
A perfectly fit model would learn mean and variance functions such that $\mu_\theta(x)=x^3$ and $\sigma^2_\theta(x)=(0.5x^2+1)^2$. Supposing that we observe two samples $(x_1, y_1)=(5, 130)$ and $(x_2, y_2)=(-5, -120)$, we would like to understand the regularization effect of both mixup and \textsf{ProbMix}. Consider a sample of the mixing coefficient $\lambda=0.8$.  For this value of $\lambda$, vanilla mixup's
mean and variance functions are determined by the interpolated feature $\tilde{x}=0.8\times 5 + 0.2\times-5=3$, and are given by
\begin{align}
    &\mu_\theta(\tilde{x})=\mu_\theta(3)=3^3=27 \\
    &\sigma^2_\theta(\tilde{x}) = \sigma^2_\theta(3) = (0.5\times 3^2+1)^2 = 5.5^2=30.25
\end{align}
and thus the predicted conditional density given by: 
\begin{equation}
    \label{eq: illustrative_example_mixup}
    p_{\sf Mix}(\tilde{y}|x_1, x_2,\lambda=0.8) = {\cal N}(\tilde{y}|27, 30.25).
\end{equation}
In \textsf{ProbMix}, the conditional densities are fused and result in another Gaussian conditional density, with variance and mean given by:
\begin{align}
    &\sigma^2_{\star}= \left(\frac{0.8}{\sigma^2_\theta(5)}+ \frac{0.2}{\sigma^2_\theta(-5)}\right)^{-1} =\left(\frac{0.8}{182.25}+ \frac{0.2}{182.25}\right)^{-1} = 182.25 \\
    &\mu_\star = \sigma_\star^2 \left(0.8\left(\frac{\mu_\theta(5)}{\sigma_\theta^2(5)}\right)+0.2\left(\frac{\mu_\theta(-5)}{\sigma^2_\theta(-5)}\right)\right) = 182.25 \left(0.8\left(\frac{125}{182.25}\right)+0.2\left(\frac{-125}{182.25}\right)\right) = 75
\end{align}
and thus,
\begin{equation}
    \label{eq: illustrative_example_probmixup}
    p_{\sf ProbMix}(\tilde{y}|x_1, x_2,\lambda=0.8) = {\cal N}(\tilde{y}|75, 182.25),
\end{equation}
Taking a perturbation of $\beta\rightarrow0$ for the latent observations, we can consider that both mixup and \textsf{ProbMix} evaluate the likelihood function of the model parameters for an observation of $\tilde{y}=\lambda y_1 + (1-\lambda)y_2=80$. The negative log-likelihood of the model parameters in this setting is given by:
\begin{align}
    &-\log p_{\sf Mix}(\tilde{y}=80|x_1, x_2,\lambda=0.8) \approx  49.05\\
    &-\log p_{\sf ProbMix}(\tilde{y}=80|x_1, x_2,\lambda=0.8) \approx 3.59
\end{align}
 Since the value of the interpolated observation lies in the right tail of $p_{\sf Mix}$, gradient updates based on mixup will substantially alter the model parameters, despite the fact that $\mu_\theta(x)$ and $\sigma_\theta^2(x)$ are already the ground truth functions. In contrast, \textsf{ProbMix}'s fused density is better calibrated to the interpolated observation $\tilde{y}=80$, leading to more stable and appropriate gradient updates. This demonstrates that mixup enforces a strong linear bias on the mean and variance functions, which becomes problematic when the true relationship is nonlinear or when the input features being mixed are relatively far apart. By mixing on a statistical manifold rather than the input space of the features, \textsf{ProbMix} avoids over-biasing the model. We provide plots of the data generating process, along with the conditional densities of both mixup and \textsf{ProbMix} in Figure \ref{fig: illustrative_example}.

\begin{figure}[t]
    \centering
    \includegraphics[width=\linewidth]{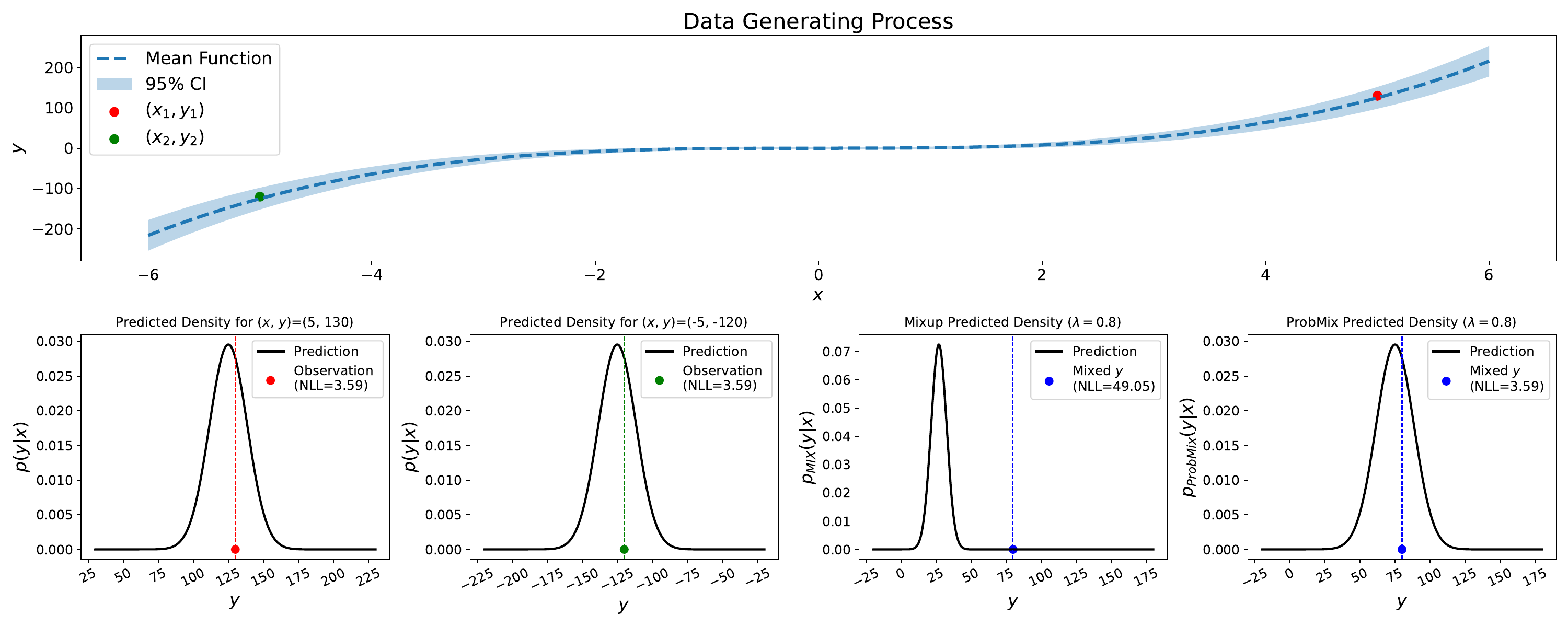}
    \caption{{\bf Top panel}: Data generating process considered in the illustrative example with mean function (blue dashed) and 95\% uncertainty band (blue shaded between 2.5\% and 97.5\% percentiles). {\bf Bottom panel far left}: Conditional density for the input $x=5$ (under $y=130$, NLL$\approx3.59$). {\bf Bottom panel center left}: Conditional density for the input $x=-5$ (under $y=120$, NLL$\approx3.59$).  {\bf Bottom panel center right}: Conditional density for vanilla mixup (under $\tilde{y}=80$, NLL$\approx49.05$).{\bf Bottom panel far right}: Conditional density for \textsf{ProbMix} (under $\tilde{y}=80$, NLL$\approx3.59$).}
    \label{fig: illustrative_example}
\end{figure}
\section{Theoretical Insights} \label{app: proof}
This section lists the proofs for the theorems in Section~\ref{sec: theory}. We first include the form of the expected risk for mixup, manifold mixup, \textsf{ProbMix} and \textsf{M-ProbMix} for completeness.

\paragraph{Mixup:} 
\begin{align*}
    \tilde{R}_{\alpha, {\cal G}}^{\rm mix}(\theta) = -\sum_{(i, j)\in {\cal E}} w_{i, j} \mathbb{E}_{\lambda}\left[\log p_\theta(y_{i, j, \lambda}|x_{i, j, \lambda})\right],
\end{align*}
where $y_{i, j, \lambda} = \lambda y_i + (1-\lambda) y_j$ and $x_{i, j, \lambda}=\lambda x_i + (1-\lambda)x_j$. 

\paragraph{Manifold Mixup:} 
\begin{align*}
    \tilde{R}_{\alpha, L_{\rm mix}}^{\rm {\cal M}}(\theta) = -\sum_{(i, j)\in {\cal E}} w_{i, j}\mathbb{E}_{\lambda}\left[\log p_{\theta_1}(y_{i, j, \lambda}|z_{i, j, \lambda})\right],
\end{align*}
where $y_{i, j, \lambda} = \lambda y_i + (1-\lambda) y_j$ and $z_{i, j, \lambda}=\lambda h_{\theta_0}(x_i) + (1-\lambda)h_{\theta_0}(x_j)$. 

\paragraph{\textsf{ProbMix}:}
\begin{align*}
    \tilde{R}_{\alpha, {\cal G}}^{\mathbb{P}}(\theta) = -\sum_{(i, j)\in {\cal E}} w_{i, j}  \mathbb{E}_{\lambda}[\mathbb{E}_{y_{i, j}}[\log p_\theta(y_{i, j}|x_i, x_j, \lambda)]]
\end{align*}
where $p_\theta(y_{i, j}|x_i, x_j, \lambda) = g_\lambda^x(p_\theta(y_{i, j}|x_i), p_\theta(y_{i, j}|x_j))$

\paragraph{\textsf{M-ProbMix}}
\begin{align*}
    \tilde{R}_{\alpha, {\cal G}}^{\mathbb{P}, {\cal M}}(\theta) = -\sum_{(i, j)\in {\cal E}} w_{i, j}  \mathbb{E}_{\lambda}[\mathbb{E}_{y_{i, j}}[\log p_{\theta}(y_{i, j}|x_{i}, x_{j}, \lambda)]]
\end{align*}
where $p_{\theta}(y_{i, j}|x_{i}, x_{j}, \lambda)$ is given by:
\begin{align*}
    p_{\theta}(y_{i, j}|x_{i}, x_{j}, \lambda) = \int p_{\theta_1}(y_{i, j}|z_{i, j})q_{\theta_0}(z_{i, j}|x_{i}, x_{j}, \lambda)dz_{i, j}
\end{align*}
and $q_{\theta_0}(z_{i, j}|x_{i}, x_{j}, \lambda) = g_{\lambda}^z(q_{\theta_0}(z_i|x_i), q_{\theta_0}(z_j|x_j))$ for some fusion function $g_\lambda^z$.

\begin{proof}[Proof of Theorem~\ref{theo: mixup-logits-class}]
    We show the likelihoods of the two approaches are proportional to the same quantity, hence equivalent. Without loss of generality, consider the likelihood for the $k^{th}$ class. For \textsf{ProbMix}, the likelihood is proportional to:
    \begin{align*}
        \log p_\theta^k(y|x_{i, j, \lambda}) &= \log g_\lambda^x(p^k_\theta(y|x_i),  p^k_\theta(y|x_j)) \\
        &= \lambda \log(p^k_\theta(y|x_i)) + (1-\lambda) \log(p^k_\theta(y|x_j) \\
        &= \lambda \log \left[\frac{e^{-f_\theta^k(x_i)}}{\sum_l e^{-f_\theta^l(x_i)}} \right] + (1-\lambda) \log \left[\frac{e^{-f_\theta^k(x_j)}}{\sum_l e^{-f_\theta^l(x_j)}}\right] \\
        &\propto \lambda \log \left( e^{-f_\theta^k(x_i)} \right) + (1-\lambda)  \log \left( e^{-f_\theta^k(x_j)}\right)
    \end{align*}
    Now, considering mixup on the logits, we obtain:

    \begin{align*}
        \log p_\theta(y|x_{i, j, \lambda}) &= \log p_\theta(y|\lambda f_\theta(x_i) + (1-\lambda)f_\theta(x_j)) \\
        &= \log \left( \frac{e^{-\lambda f_\theta^k (x_i) - (1-\lambda) f_\theta^k(x_j)}}{\sum_l e^{-\lambda f_\theta^l (x_i) - (1-\lambda) f_\theta^l(x_j)}} \right) \\
        & \propto \log \left[\left(e^{- f_\theta^k (x_i)}  \right)^\lambda \right] + \log \left[\left(e^{- f_\theta^k (x_j)}  \right)^{(1-\lambda)} \right] \\
        &= \lambda \log \left( e^{-f_\theta^k(x_i)} \right) + (1-\lambda)  \log \left( e^{-f_\theta^k(x_j)}\right)
    \end{align*}
    
\end{proof}

\begin{proof}[Proof of Theorem~\ref{theo: mixup-output-regr}]
    The key is to show the likelihood of both settings is equivalent. Considering \textsf{ProbMix} under log-linear fusion of homoscedastic Gaussian likelihoods, we have:

    \begin{align*}
        \log p_\theta(y|x_{i, j, \lambda}) &= \log g_\lambda^x(p_\theta(y|x_i),  p_\theta(y|x_j)) \\
        &= \log\left({\cal N}(y|f_\theta(x_i), \sigma^2\mathbf{I}_{d_y})^\lambda \cdot {\cal N}(y|f_\theta(x_j), \sigma^2\mathbf{I}_{d_y})^{(1-\lambda)}\right)\\
        & \propto -\lambda \frac{(y - f_\theta(x_i))^T (y - f_\theta(x_i))}{2\sigma^2} - \quad (1-\lambda) \frac{(y - f_\theta(x_j))^T (y - f_\theta(x_j))}{2\sigma^2} \\
        &\propto y^Ty - 2y^T \left(\lambda f_\theta(x_i) + (1-\lambda) f_\theta(x_j)\right)
    \end{align*}

    Now, considering the likelihood for traditional mixup when interpolating the output layer:
    \begin{align*}
        \log p_\theta(y|x_{i, j, \lambda}) &= \log p_\theta(y|\lambda f_\theta(x_i) + (1-\lambda)f_\theta(x_j)) \\
        &= C_\sigma^{d_y} - \Big[ \frac{(y - \lambda f_\theta(x_i) - (1-\lambda) f_\theta(x_j))^T \quad (y - \lambda f_\theta(x_i) - (1-\lambda) f_\theta(x_j))}{2\sigma^2}  \Big] \\
        &\propto y^Ty - 2y^T \left(\lambda f_\theta(x_i) + (1-\lambda) f_\theta(x_j)\right)
    \end{align*}

\end{proof}

\begin{proof}[Proof of Theorem~\ref{theo: mixup-log-reg}]
Consider $f_\theta=h_1\circ h_{\theta}$, where $h_1(z)=\left[\frac{e^{-z_1}}{\sum_{j=1}^{d_y} e^{-z_j}}, \ldots, \frac{e^{-z_{d_y}}}{\sum_{j=1}^{d_y} e^{-z_j}}\right]^\intercal$ is the softmax function and $h_{\theta}(x)=Ax+b$ is a linear function with $A=[a_1, \ldots, a_y]^\intercal\in\mathbb{R}^{d_y\times d_x}$ and $b\in\mathbb{R}^{d_y}$. For two inputs $x_i$ and $x_j$, the probability of the $k$th class, denoted by $p_{i, k}$ and $p_{j, k}$, respectively, can readily be determined as:
\begin{align}
    &p_{i, k} = \frac{e^{-(a_k^\intercal x_i + b_k)}}{\sum_{k'=1}^{d_y}e^{-(a_{k'}^\intercal x_i + b_{k'})}} \\
    &p_{j, k} = \frac{e^{-(a_k^\intercal x_j + b_k)}}{\sum_{k'=1}^{d_y}e^{-(a_{k'}^\intercal x_j + b_{k'})}}
\end{align}
For the mixup approach, consider the mixed input $x_{i, j, \lambda}=\lambda x_i + (1-\lambda)x_j$. The output of $f_\theta$ is given by:
\begin{align}
    f_\theta(x_{i, j, \lambda}) &= h_1(h_{\theta}(x_{i, j, \lambda})) \\
    &= h_1\left(A x_{i, j, \lambda} + b\right) \\
    &= h_1(\underbrace{\lambda A x_i+ (1-\lambda)A{x_j} + b}_{z_{i, j, \lambda}})
\end{align}
For each dimension $k=1, \ldots, d_y$ of $z_{i, j, \lambda}$, denoted by $z_{i, j, \lambda, k}=\lambda a_k^\intercal x_i + (1-\lambda) a_k^\intercal x_j + b_k$ we have that the output of $h_1$, which is the probability of class $k$, is given by:
\begin{align}
    p_k^{\rm mix} &= h_1(z_{i, j, \lambda, k}) \\
    &\propto e^{-z_{i, j, \lambda, k}} \\
    &= e^{-(\lambda a_k^\intercal x_i + (1-\lambda) a_k^\intercal x_j + b_k)} \\
    &= e^{-\lambda a_k^\intercal x_i}e^{-(1-\lambda) a_k^\intercal x_j} e^{-b_k} \\
    &= e^{-\lambda (a_k^\intercal x_i + b_k)}e^{-(1-\lambda)(a_k^\intercal x_j +b_k)} \\
    &= \left[e^{-(a_k^\intercal x_i + b_k)}\right]^\lambda\left[e^{-(a_k^\intercal x_j + b_k)}\right]^{1-\lambda} \\
    &\propto \left[h_{1}(z_{i, k})\right]^\lambda\left[h_{1}(z_{j, k})\right]^{1-\lambda} \\
    & = p_{i, k}^\lambda p_{j, k}^{1-\lambda}
\end{align}
Clearly $p_k^{\rm mix}\propto p_{i, k}^\lambda p_{j, k}^{1-\lambda}$. Therefore, in this model setting, mixup and \textsf{ProbMix} (under log-linear pooling) are equivalent.
\end{proof}

\begin{proof}[Proof of Theorem~\ref{theo: mixup-lin-reg}]
Consider that $f_\theta(x)=Ax+b$, where $A\in\mathbb{R}^{d_y\times d_x}$ and $b\in\mathbb{R}^{d_y}$. For two inputs $x_i$ and $x_j$, the log-likelihood function of $\theta$ based on an observed label $y$ is given by:
\begin{align}
    &\log p_\theta(y|x_i) = \log {\cal N}(y|f_\theta(x_i), \sigma^2\mathbf{I}_{d_y}) \\
    &\log p_\theta(y|x_j) = \log {\cal N}(y|f_\theta(x_j), \sigma^2\mathbf{I}_{d_y})
\end{align}
where so we have that:
\begin{align}
    \log p_\theta(y|x) = C_{\sigma}^{d_y}-\frac{\left(y-f_\theta(x)\right)^\intercal\left(y-f_\theta(x)\right)}{2\sigma^2},
\end{align}
where $C_{\sigma}^{d_y}=-\frac{d_y}{2}\log 2\pi\sigma^2$. For mixup, consider mixed input $x_{i, j, \lambda}=\lambda x_i + (1-\lambda)x_j$. Then, the log-likelihood function of an observation $y$ is given by:
\begin{align}
    \log p_\theta(y|x_{i, j, \lambda}) &= \log p_\theta(y|\lambda x_i + (1-\lambda)x_j) \\
    &=C_\sigma^{d_y} -\frac{\left(y-f_\theta(x_{i, j, \lambda})\right)^\intercal\left(y-f_\theta(x_{i, j, \lambda})\right)}{2\sigma^2}
\end{align}
We can expand the quadratic as: 
\begin{align*}
\left(y-f_\theta(x_{i, j, \lambda})\right)^\intercal\left(y-f_\theta(x_{i, j, \lambda})\right) =&y^\intercal y - 2\underbrace{y^\intercal f(x_{i, j, \lambda})}_{y^\intercal Ax_{i, j, \lambda} + y^\intercal b} + \underbrace{f(x_{i, j, \lambda})^\intercal f(x_{i, j, \lambda})}_{x_{i, j, \lambda}^\intercal A^\intercal Ax_{i, j, \lambda} + 2 b^\intercal Ax_{i, j, \lambda} + b^\intercal b} \\
&=y^\intercal y - 2y^\intercal A(\lambda x_i + (1-\lambda)x_j)  +  \cdots \\
&\vdots \\
&= \lambda \left(y-f_\theta(x_{i})\right)^\intercal\left(y-f_\theta(x_{i})\right) + (1-\lambda) \left(y-f_\theta(x_{j})\right)^\intercal\left(y-f_\theta(x_{j})\right)
\end{align*}
Therefore, one can readily show that:
\begin{equation}
    \log p_\theta(y|x_{i,j, \lambda}) = Z +\lambda\log p_\theta(y|x_i)+(1-\lambda)\log p_\theta(y|x_j)
\end{equation}
or equivalently that:
\begin{equation}
     p_\theta(y|x_{i,j, \lambda}) \propto \left[p_\theta(y|x_i)\right]^\lambda\left[p_\theta(y|x_j)\right]^{1-\lambda}
\end{equation}
Therefore, in this model setting, mixup and \textsf{ProbMix} (under log-linear pooling) are equivalent.
\end{proof}

\begin{proof}[Proof of Theorem~\ref{theo: manifold-prob}]

As noted in Section~\ref{sec:linear-log-linear-fusion} and Appendix~\ref{thm: exponential_families}, log-linear fusion of homoscedastic Gaussian embeddings results in a Gaussian distribution with the same variance $\sigma$ and mean $\mu_\theta$ equal to:

\begin{equation*}
    \mu_\theta(x_i, x_j) = \lambda h_\theta(x_i) + (1-\lambda) h_\theta(x_j).
\end{equation*}

If the mean is propagated during training and inference, this implies that the log-likelihood for \textsf{M-ProbMix} is:

\begin{align*}
    \log p(y|x_{i,j,k}) = \log p\left(y|d_\phi(\lambda h_\theta(x_i) + (1-\lambda) h_\theta(x_j)) \right),
\end{align*}

which is equal to the manifold mixup log-likelihood.
    
\end{proof}

\section{Experimental Results}
\label{app: experiments}

\subsection{Toy Datasets}
\label{app: toy_data}
Here, we include additional results related to the toy regression and toy classification datasets.

\subsubsection{Toy Regression Ablations}
\label{app: toy_regression_data}
Figure \ref{fig: all_regression_examples} shows the a plot of the conditional density estimator obtain via each of the baseline methods (\textsf{Mix}, \textsf{M-Mix}, and \textsf{Loc$^\mathsf{K}$Mix}, and \textsf{Loc$^\mathsf{K}$M-Mix}) as compared to the proposed appraoches (\textsf{ProbMix}, \textsf{M-ProbMix}, and \textsf{Loc$^\mathsf{K}$ProbMix}, and \textsf{Loc$^\mathsf{K}$M-ProbMix}). We can see that methods like \textsf{Mix} and \textsf{M-Mix} actually attenuate uncertainty in the out-of-sample region, which can be problematic for risk-sensitive applications. \textsf{ProbMix} also performs poorly on out-of-sample data; we attribute this to the fact that fusing the log-likelihoods during training imposes a large bias for non-neighboring samples in a regression task. This flaw is resolved in all other variants of probabilistic mixup as the density plots in Fig. \ref{fig: all_regression_examples} show that \textsf{M-ProbMix}, \textsf{Loc$^\mathsf{K}$ProbMix}, and \textsf{Loc$^\mathsf{K}$M-ProbMix} perform best in terms of capturing uncertainty on out-of-sample data. 

As an ablation study, we tested a grid of hyperparameter values defined by $\alpha\in\{0.01, 0.05, 0.10, 0.5, 1.0\}$ and $\beta\in\{0, 0.01\}$. We show a series of box plots comparing the MSE and the NLL across different methods. Results show that in the case of the regression experiment, manifold probabilistic mixup approaches (\textsf{M-ProbMix} and \textsf{Loc$^\mathsf{K}$M-ProbMix}) achieve best performance in terms of both MSE and NLL. Please refer to Figures \ref{fig: toy_regression_alpha_0p01_beta_0}-\ref{fig: toy_regression_alpha_1p0_beta_0p01} for more details on the results. 

\begin{figure*}[t]
    \centering
    \begin{subfigure}{0.24\textwidth}
        \centering
        \includegraphics[trim={0, 0, 0, 1.25cm}, clip, width=\textwidth]{figures/visual_examples/regression_examples/mix.pdf}
        \caption{\textsf{Mix}.}
        \label{fig: app_erm_toy_regression_mix}
    \end{subfigure}%
    \hfill
    \begin{subfigure}{0.24\textwidth}
        \centering
        \includegraphics[trim={0, 0, 0, 1.25cm}, clip, width=\textwidth]{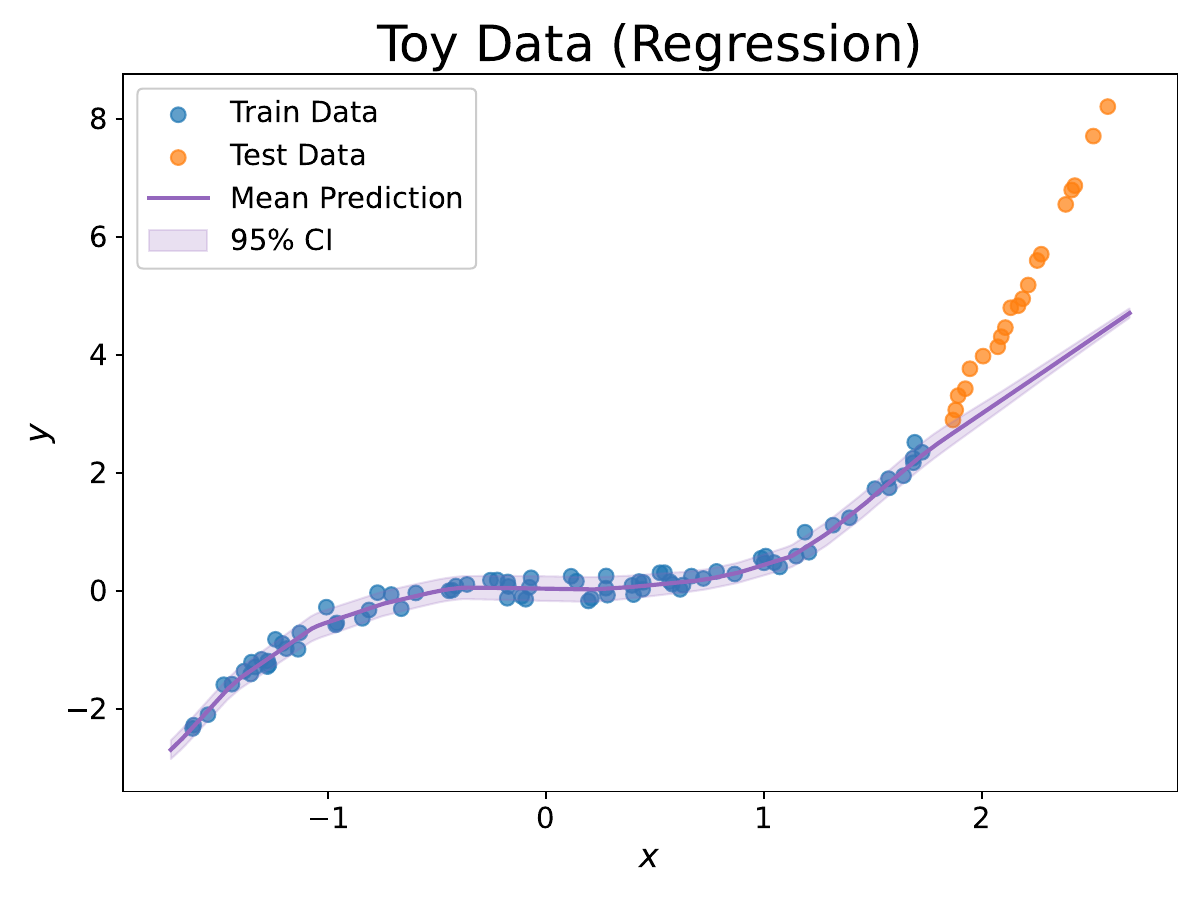}
        \caption{\textsf{M-Mix}.}
        \label{fig: app_erm_toy_regression_mmix}
    \end{subfigure}%
    \hfill
    \begin{subfigure}{0.24\textwidth}
        \centering
        \includegraphics[trim={0, 0, 0, 1.25cm}, clip, width=\textwidth]{figures/visual_examples/regression_examples/mix_local.pdf}
        \caption{\textsf{Loc$^\mathsf{K}$Mix}.}
        \label{fig: app_erm_toy_regression_lockmix}
    \end{subfigure}%
    \hfill
    \begin{subfigure}{0.24\textwidth}
        \centering
        \includegraphics[trim={0, 0, 0, 1.25cm}, clip, width=\textwidth]{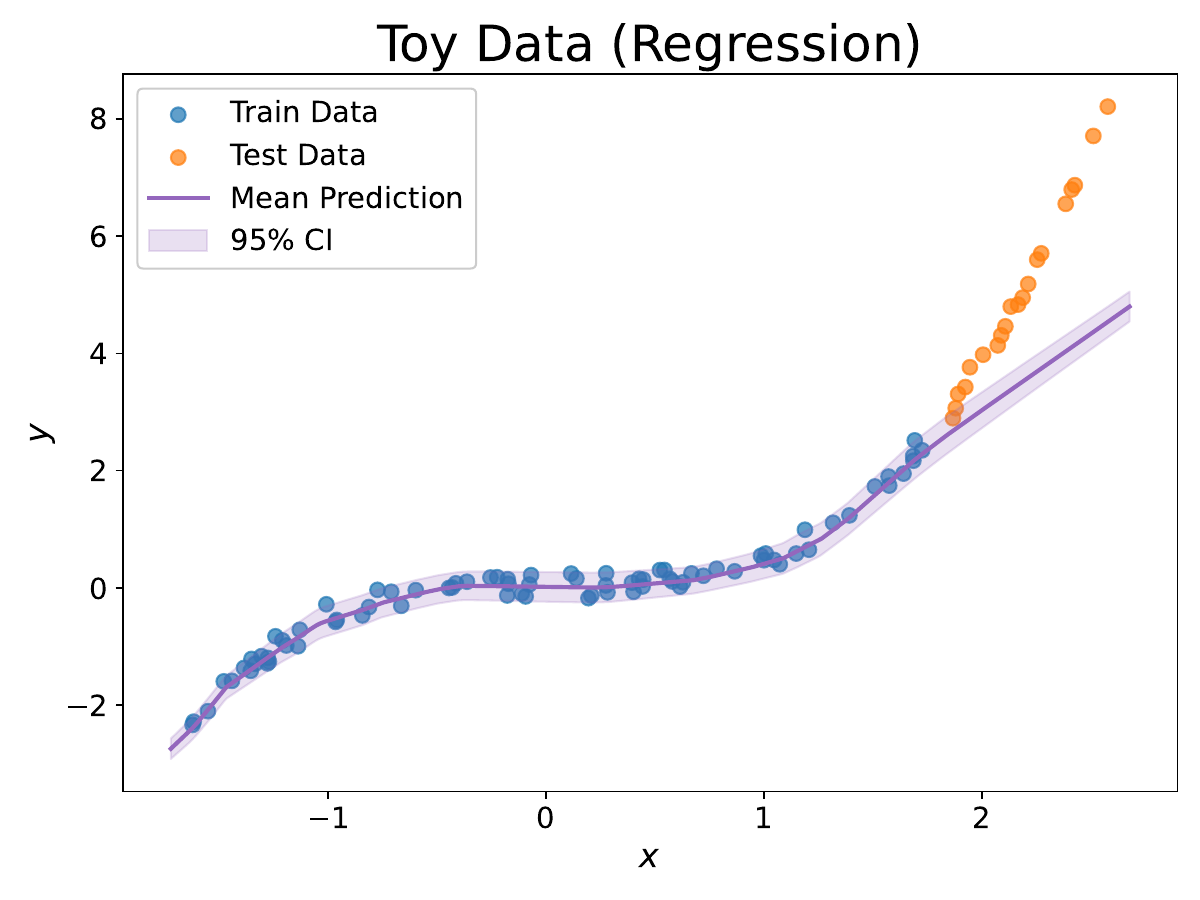}
        \caption{\textsf{Loc$^\mathsf{K}$M-Mix}.}
        \label{fig: app_erm_toy_regression_lockmmix}
    \end{subfigure}%
    \vspace{0.1cm}
    \begin{subfigure}{0.24\textwidth}
        \centering
        \includegraphics[trim={0, 0, 0, 1.25cm}, clip, width=\textwidth]{figures/visual_examples/regression_examples/probmix.pdf}
        \caption{\textsf{ProbMix}.}
        \label{fig: app_erm_toy_regression_probmix}
    \end{subfigure}%
    \hfill
    \begin{subfigure}{0.24\textwidth}
        \centering
        \includegraphics[trim={0, 0, 0, 1.25cm}, clip, width=\textwidth]{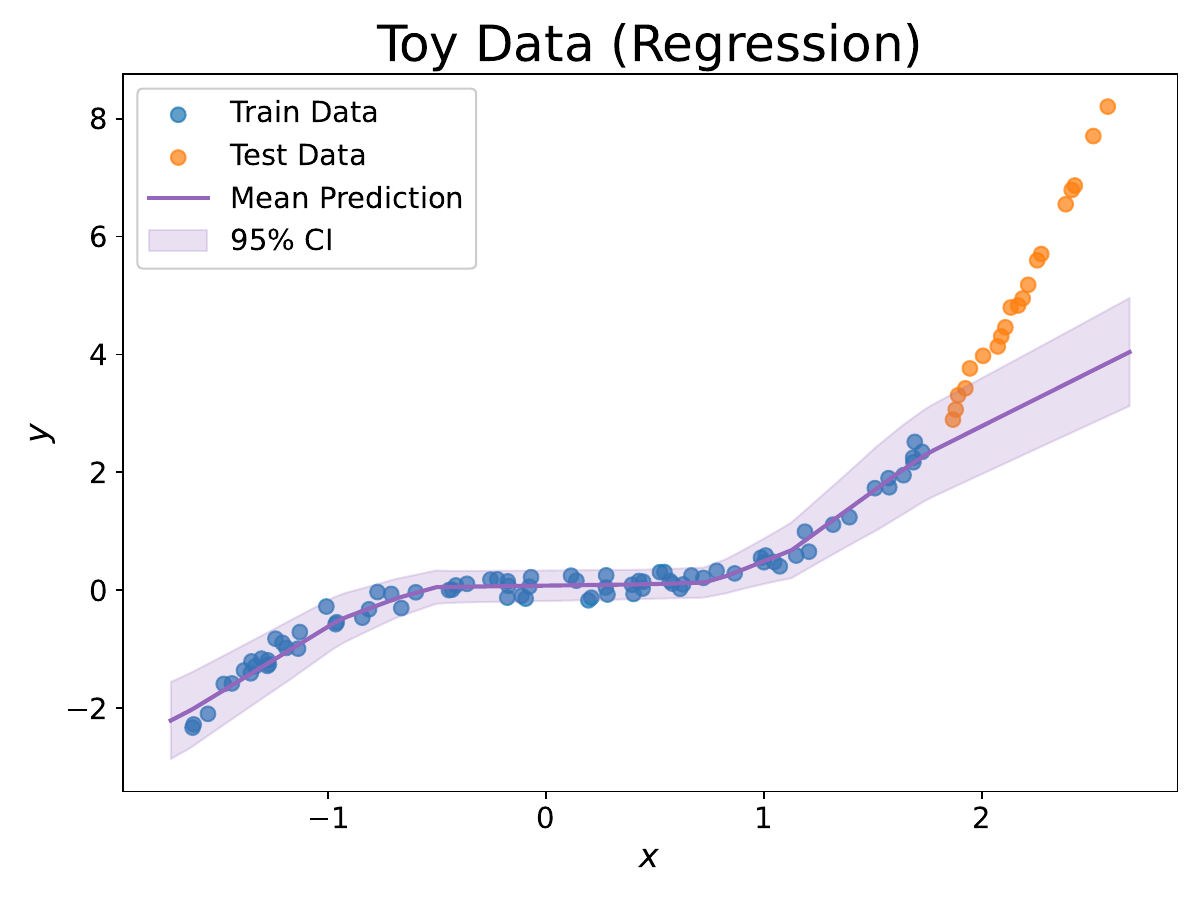}
        \caption{\textsf{M-ProbMix}.}
        \label{fig: app_erm_toy_regression_mprobmix}
    \end{subfigure}%
    \hfill
    \begin{subfigure}{0.24\textwidth}
        \centering
        \includegraphics[trim={0, 0, 0, 1.25cm}, clip, width=\textwidth]{figures/visual_examples/regression_examples/probmix_local.pdf}
        \caption{\textsf{Loc$^\mathsf{K}$ProbMix}.}
        \label{fig: app_erm_toy_regression_lockprobmix}
    \end{subfigure}%
    \hfill
    \begin{subfigure}{0.24\textwidth}
        \centering
        \includegraphics[trim={0, 0, 0, 1.25cm}, clip, width=\textwidth]{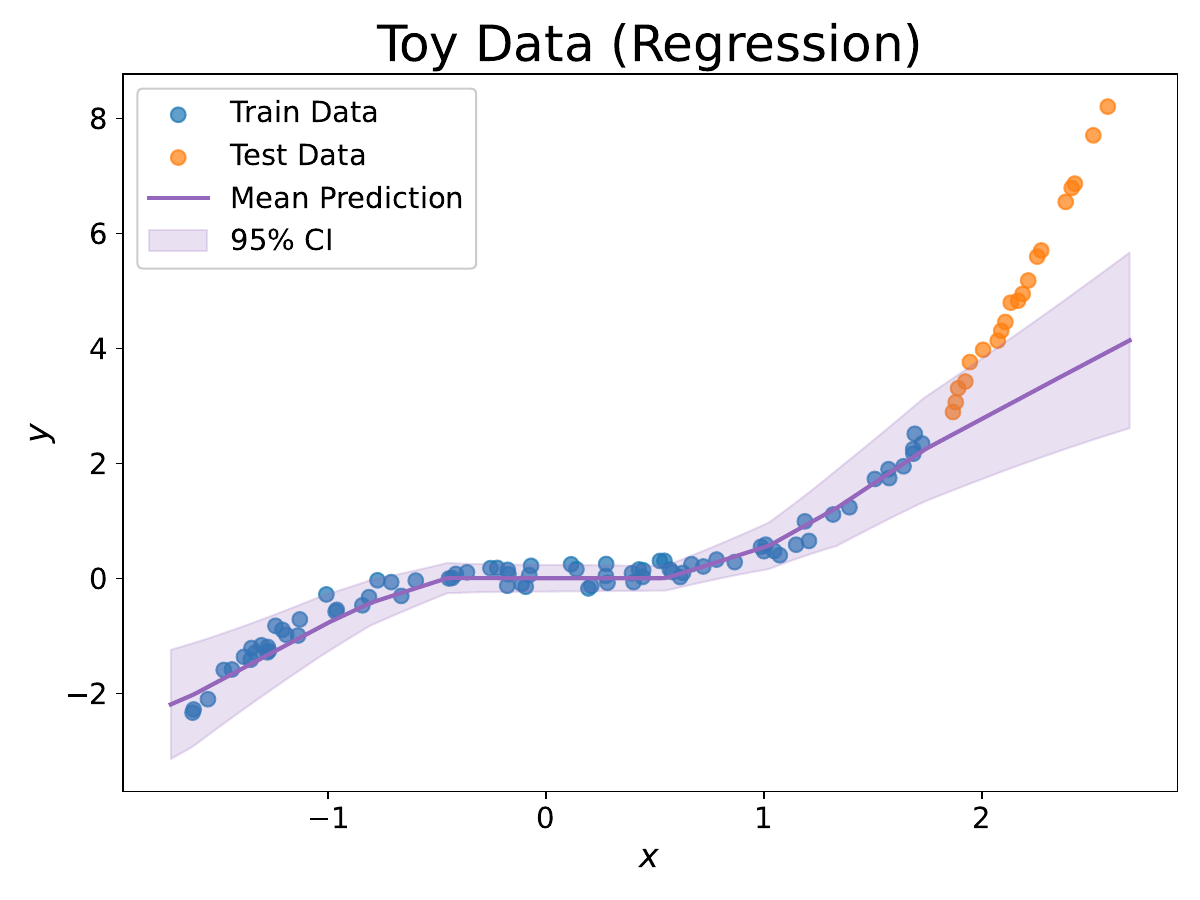}
        \caption{\textsf{Loc$^\mathsf{K}$M-ProbMix}.}
        \label{fig: app_erm_toy_regression_lockmprobmix}
    \end{subfigure}
    \caption{Visual example of different approaches for toy regression. \textsf{Loc$^\mathsf{K}$ProbMix}, and \textsf{Loc$^\mathsf{K}$M-ProbMix} have widest uncertainty bounds on out-of-sample inputs, indicaitng that these variants are best calibrated in terms of uncertainty.}
    \label{fig: all_regression_examples}
\end{figure*}

\begin{figure}[htbp]
    \centering
    \begin{subfigure}{\textwidth}
        \centering
        \includegraphics[width=0.8\textwidth]{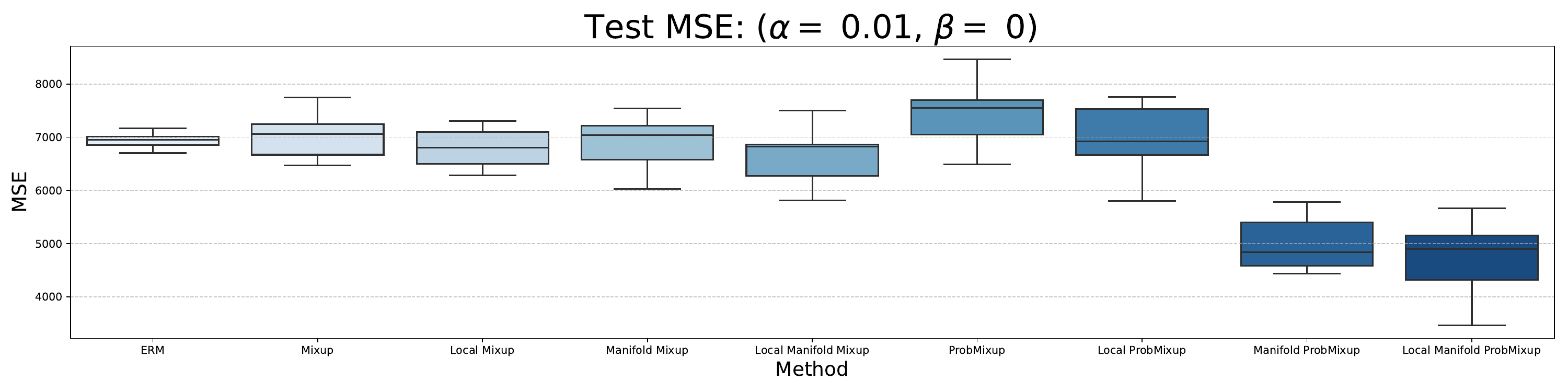}
        \caption{Average MSE.}
        \label{subfig: toy_regression_alpha_0p01_beta_0_mse}
    \end{subfigure}
    
    \vspace{1em} 

    \begin{subfigure}{\textwidth}
        \centering
        \includegraphics[width=0.8\textwidth]{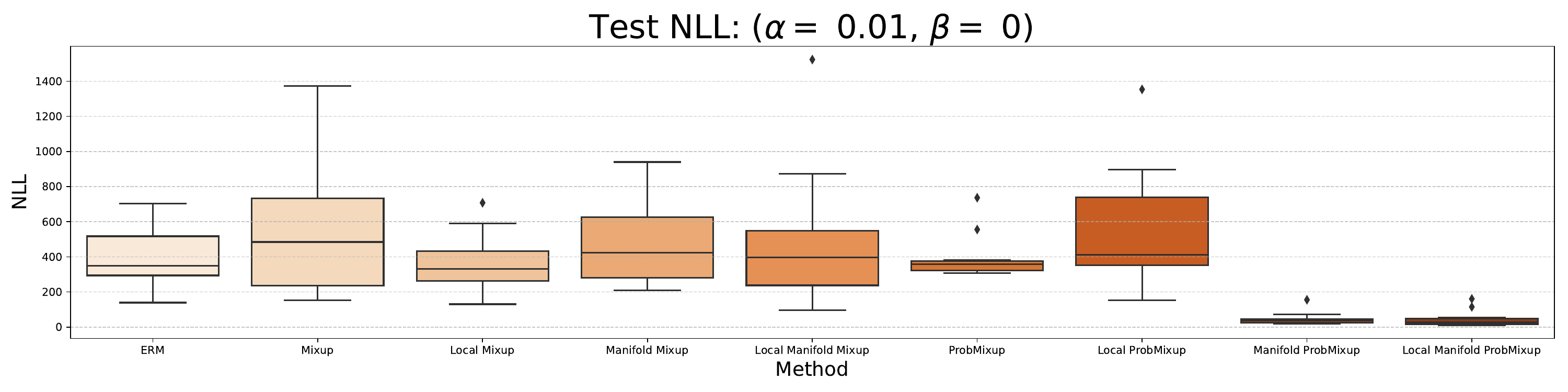}
        \caption{Average negative log-likelihood.}
        \label{subfig: toy_regression_alpha_0p01_beta_0_nll}
    \end{subfigure}
    
    \caption{Toy regression results for $\alpha=0.01$ and $\beta=0$.}
    \label{fig: toy_regression_alpha_0p01_beta_0}
\end{figure}

\begin{figure}[htbp]
    \centering
    \begin{subfigure}{\textwidth}
        \centering
        \includegraphics[width=0.8\textwidth]{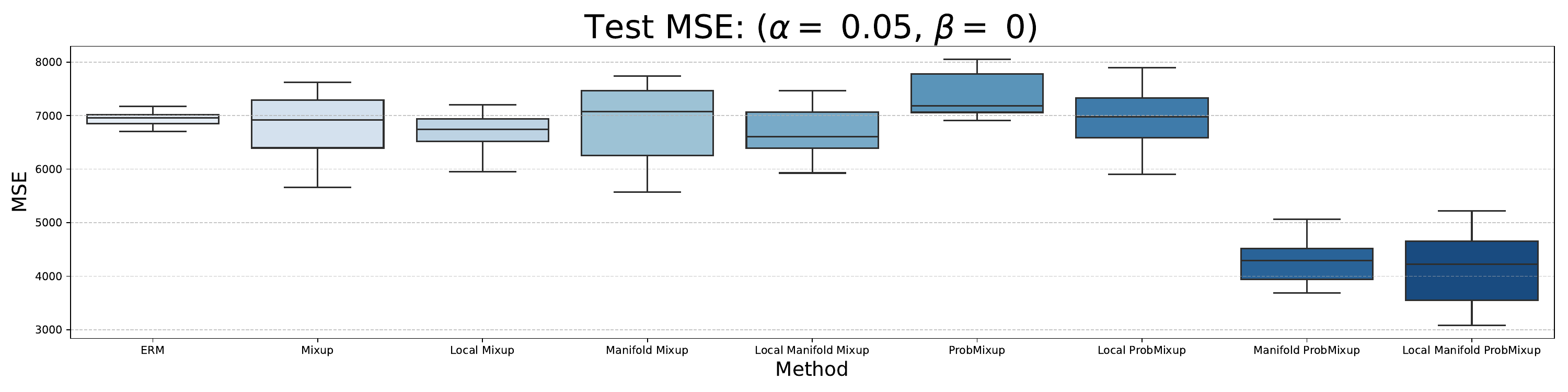}
        \caption{Average MSE.}
        \label{subfig: toy_regression_alpha_0p05_beta_0_mse}
    \end{subfigure}
    
    \vspace{1em} 

    \begin{subfigure}{\textwidth}
        \centering
        \includegraphics[width=0.8\textwidth]{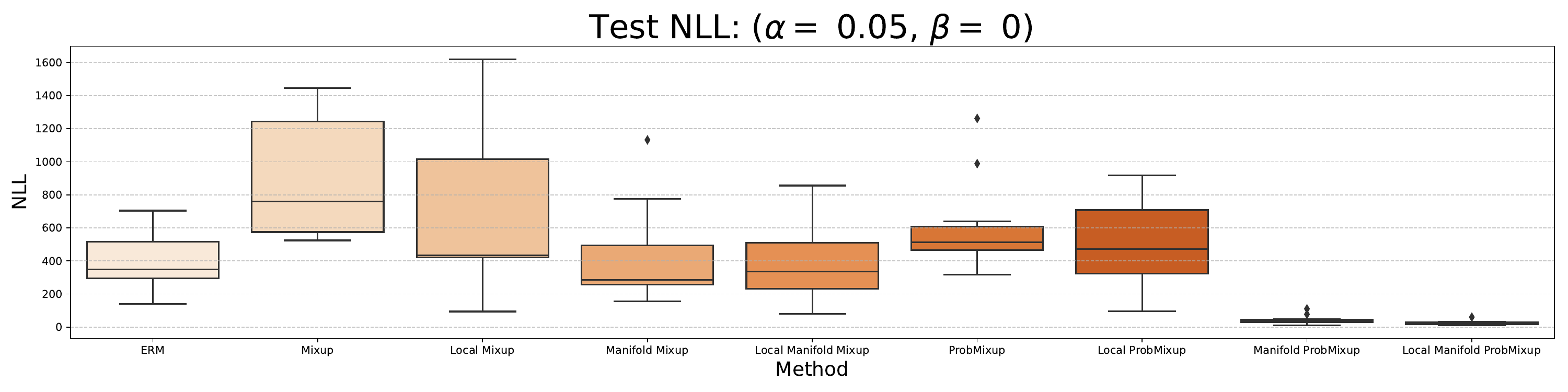}
        \caption{Average negative log-likelihood.}
        \label{subfig: toy_regression_alpha_0p05_beta_0_nll}
    \end{subfigure}
    
    \caption{Toy regression results for $\alpha=0.05$ and $\beta=0$.}
    \label{fig: toy_regression_alpha_0p05_beta_0}
\end{figure}

\begin{figure}[htbp]
    \centering
    \begin{subfigure}{\textwidth}
        \centering
        \includegraphics[width=0.8\textwidth]{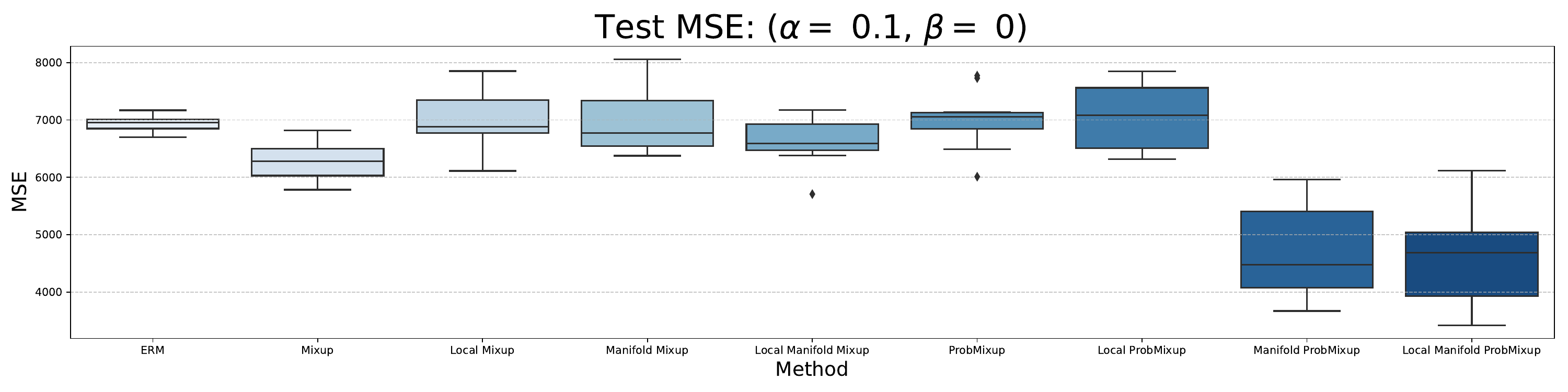}
        \caption{Average MSE.}
        \label{subfig: toy_regression_alpha_0p1_beta_0_mse}
    \end{subfigure}
    
    \vspace{1em} 

    \begin{subfigure}{\textwidth}
        \centering
        \includegraphics[width=0.8\textwidth]{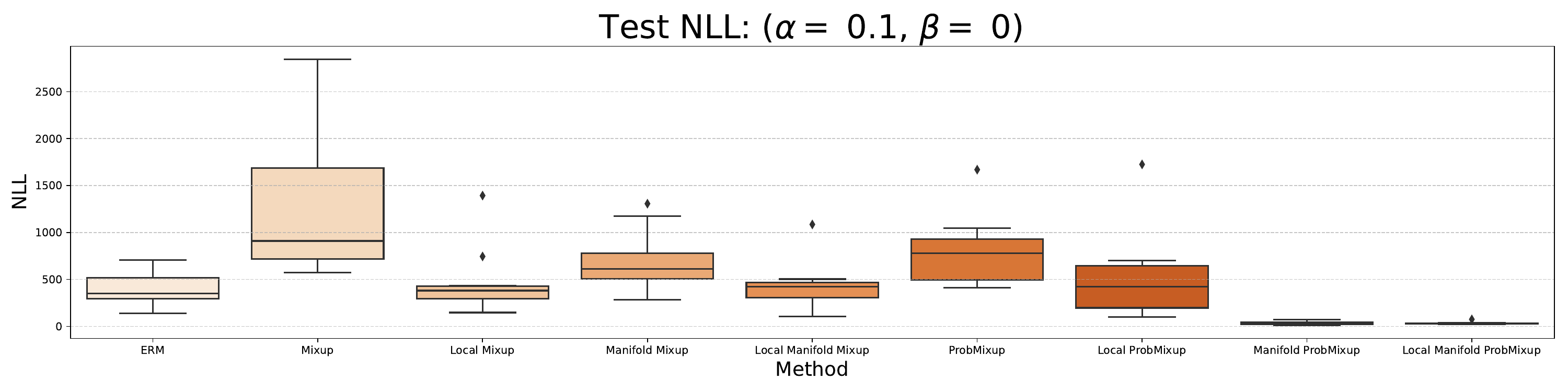}
        \caption{Average negative log-likelihood.}
        \label{subfig: toy_regression_alpha_0p1_beta_0_nll}
    \end{subfigure}
    
    \caption{Toy regression results for $\alpha=0.1$ and $\beta=0$.}
    \label{fig: toy_regression_alpha_0p1_beta_0}
\end{figure}

\begin{figure}[htbp]
    \centering
    \begin{subfigure}{\textwidth}
        \centering
        \includegraphics[width=0.8\textwidth]{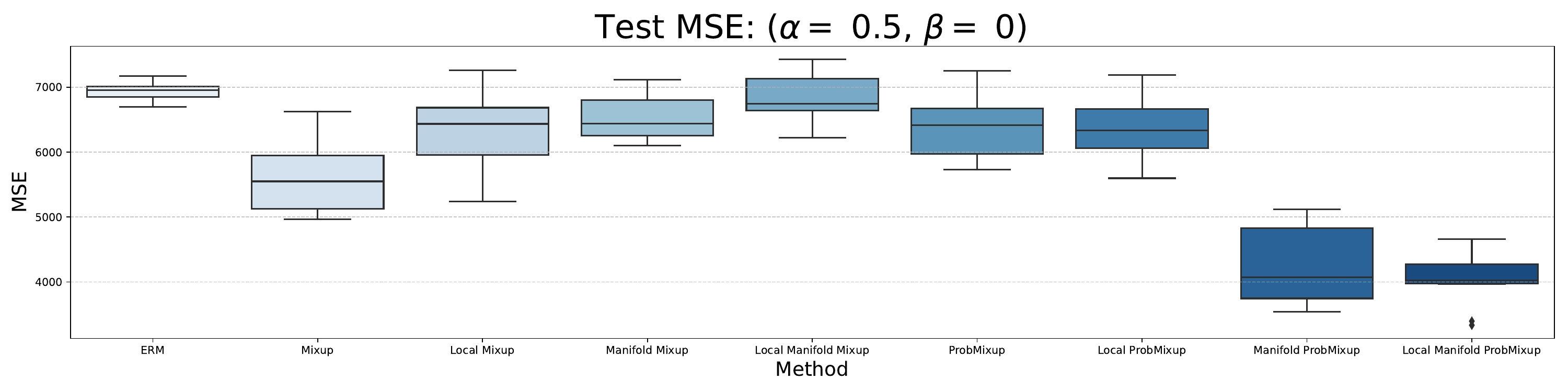}
        \caption{Average MSE.}
        \label{subfig: toy_regression_alpha_0p5_beta_0_mse}
    \end{subfigure}
    
    \vspace{1em} 

    \begin{subfigure}{\textwidth}
        \centering
        \includegraphics[width=0.8\textwidth]{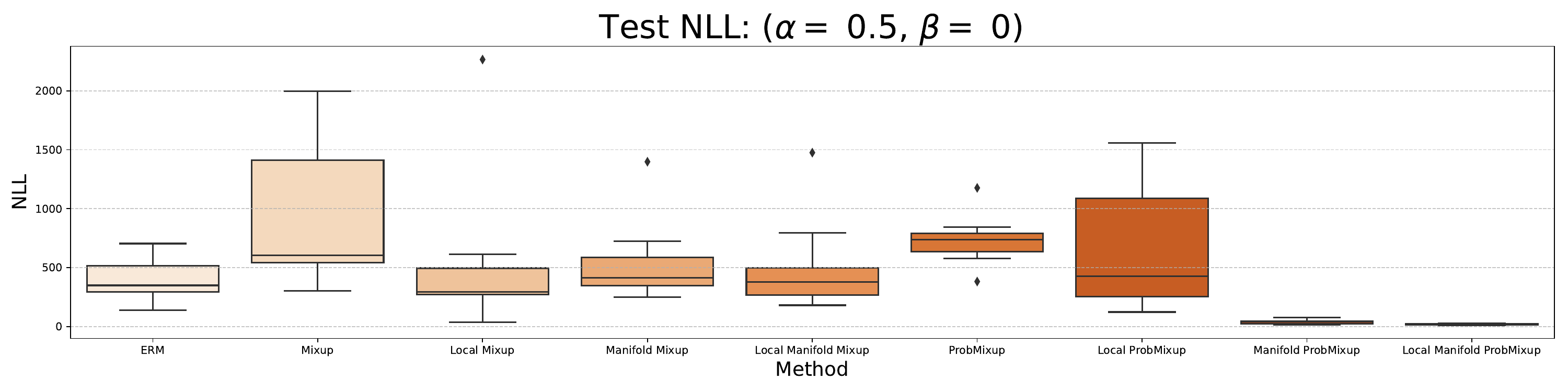}
        \caption{Average negative log-likelihood.}
        \label{subfig: toy_regression_alpha_0p5_beta_0_nll}
    \end{subfigure}
    
    \caption{Toy regression results for $\alpha=0.5$ and $\beta=0$.}
    \label{fig: toy_regression_alpha_0p5_beta_0}
\end{figure}

\begin{figure}[htbp]
    \centering
    \begin{subfigure}{\textwidth}
        \centering
        \includegraphics[width=0.8\textwidth]{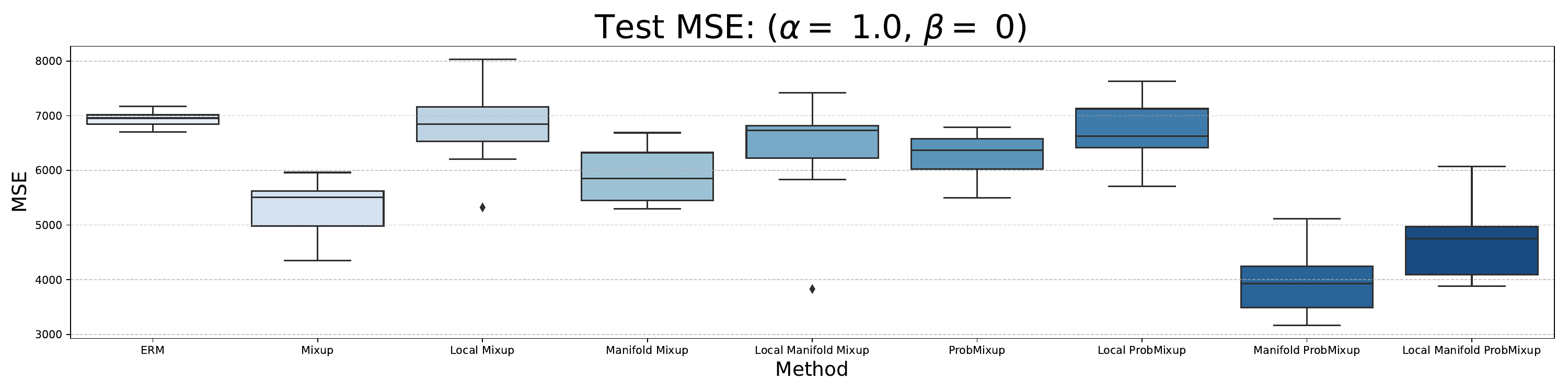}
        \caption{Average MSE.}
        \label{subfig: toy_regression_alpha_1p0_beta_0_mse}
    \end{subfigure}
    
    \vspace{1em} 

    \begin{subfigure}{\textwidth}
        \centering
        \includegraphics[width=0.8\textwidth]{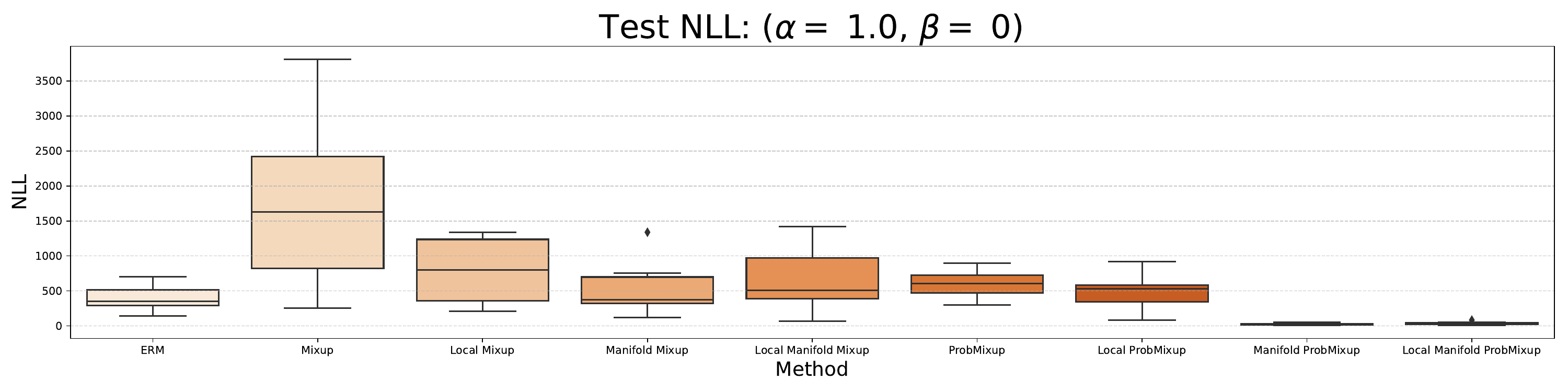}
        \caption{Average negative log-likelihood.}
        \label{subfig: toy_regression_alpha_1p0_beta_0_nll}
    \end{subfigure}
    
    \caption{Toy regression results for $\alpha=1$ and $\beta=0$.}
    \label{fig: toy_regression_alpha_1p0_beta_0}
\end{figure}

\begin{figure}[htbp]
    \centering
    \begin{subfigure}{\textwidth}
        \centering
        \includegraphics[width=0.8\textwidth]{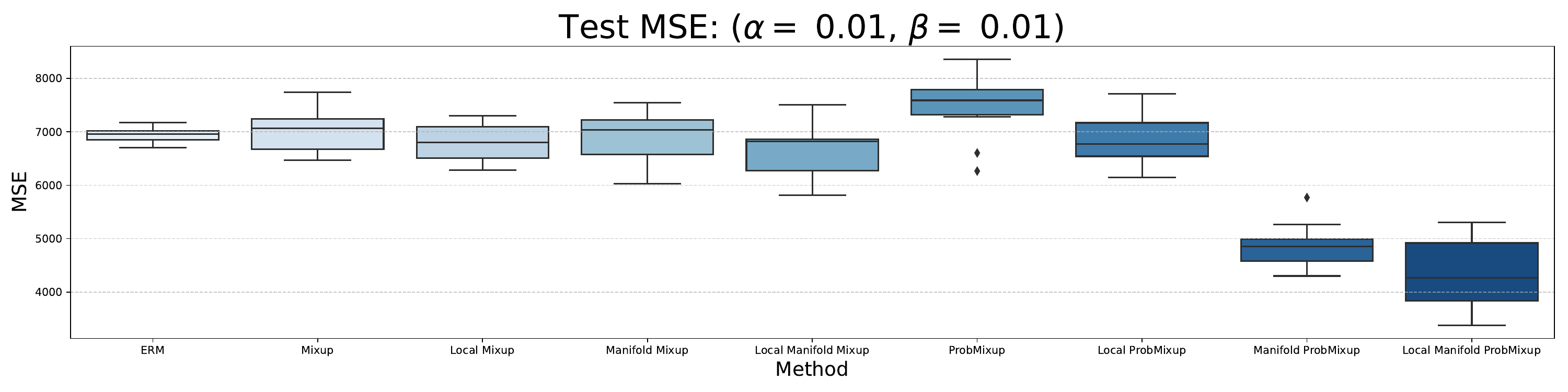}
        \caption{Average MSE.}
        \label{subfig: toy_regression_alpha_0p01_beta_0p01_mse}
    \end{subfigure}
    
    \vspace{1em} 

    \begin{subfigure}{\textwidth}
        \centering
        \includegraphics[width=0.8\textwidth]{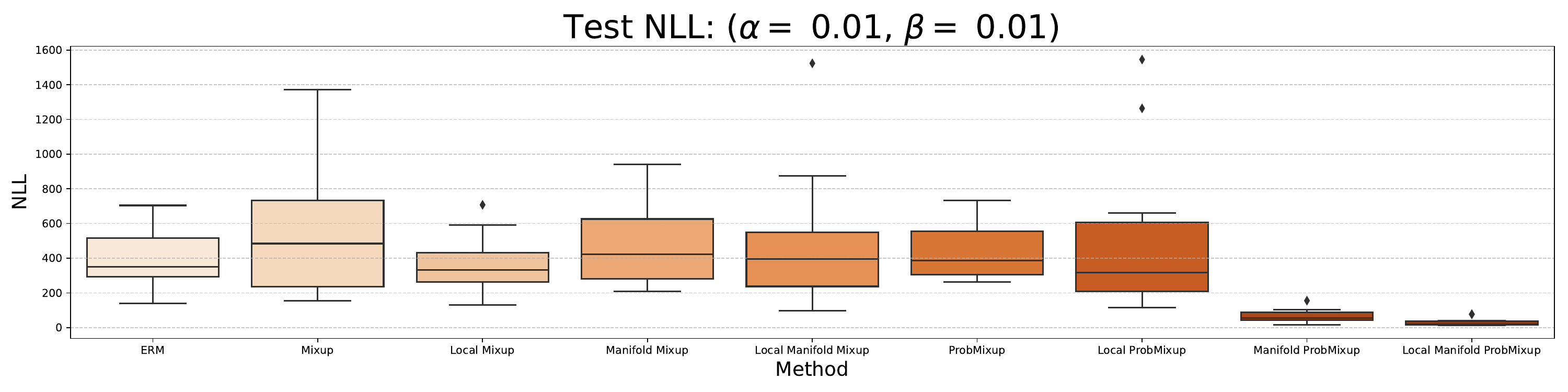}
        \caption{Average negative log-likelihood.}
        \label{subfig: toy_regression_alpha_0p01_beta_0p01_nll}
    \end{subfigure}
    
    \caption{Toy regression results for $\alpha=0.01$ and $\beta=0.01$.}
    \label{fig: toy_regression_alpha_0p01_beta_0p01}
\end{figure}

\begin{figure}[htbp]
    \centering
    \begin{subfigure}{\textwidth}
        \centering
        \includegraphics[width=0.8\textwidth]{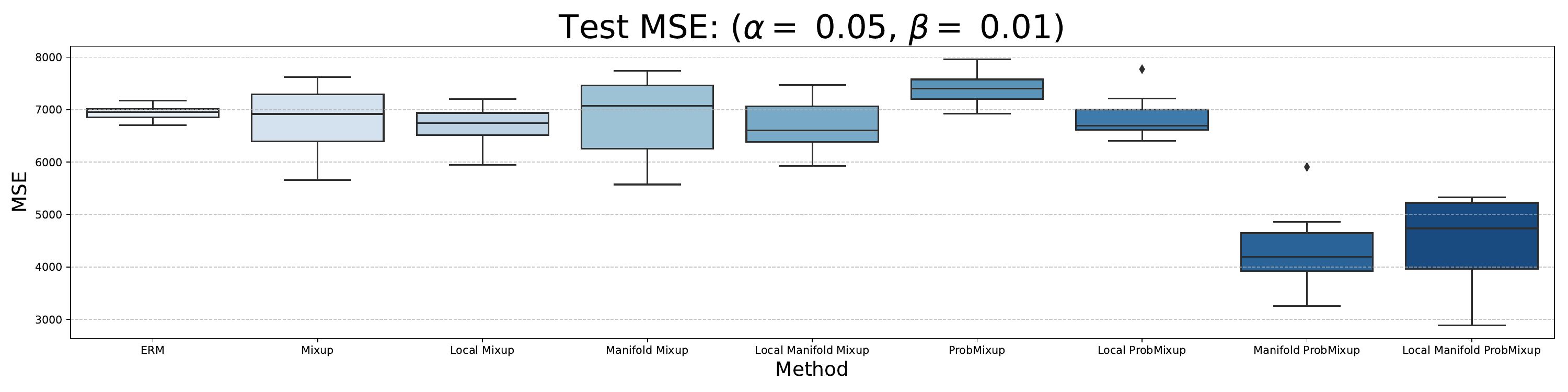}
        \caption{Average MSE.}
        \label{subfig: toy_regression_alpha_0p05_beta_0p01_mse}
    \end{subfigure}
    
    \vspace{1em} 

    \begin{subfigure}{\textwidth}
        \centering
        \includegraphics[width=0.8\textwidth]{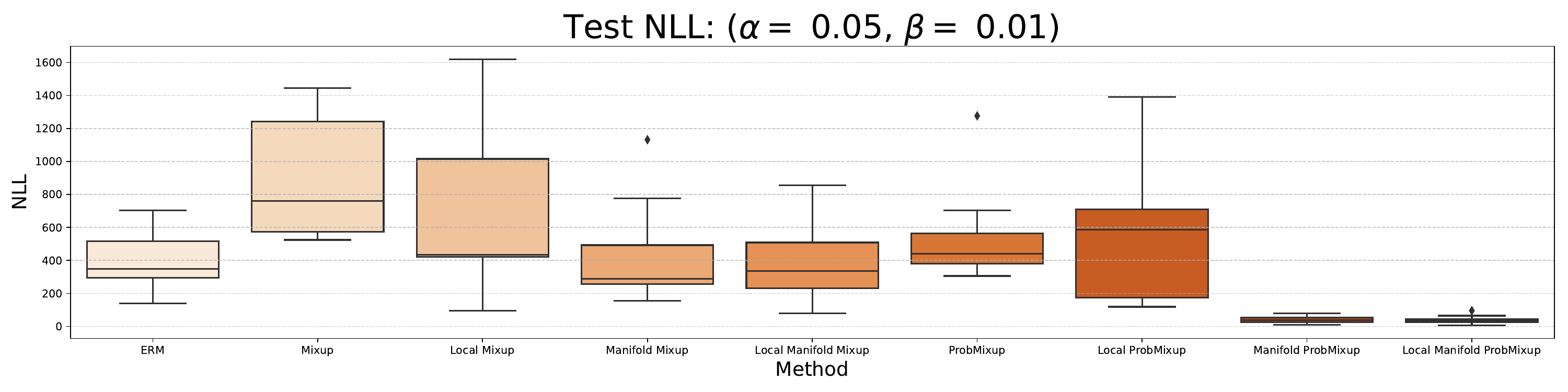}
        \caption{Average negative log-likelihood.}
        \label{subfig: toy_regression_alpha_0p05_beta_0p01_nll}
    \end{subfigure}
    
    \caption{Toy regression results for $\alpha=0.05$ and $\beta=0.01$.}
    \label{fig: toy_regression_alpha_0p05_beta_0p01}
\end{figure}

\begin{figure}[htbp]
    \centering
    \begin{subfigure}{\textwidth}
        \centering
        \includegraphics[width=0.8\textwidth]{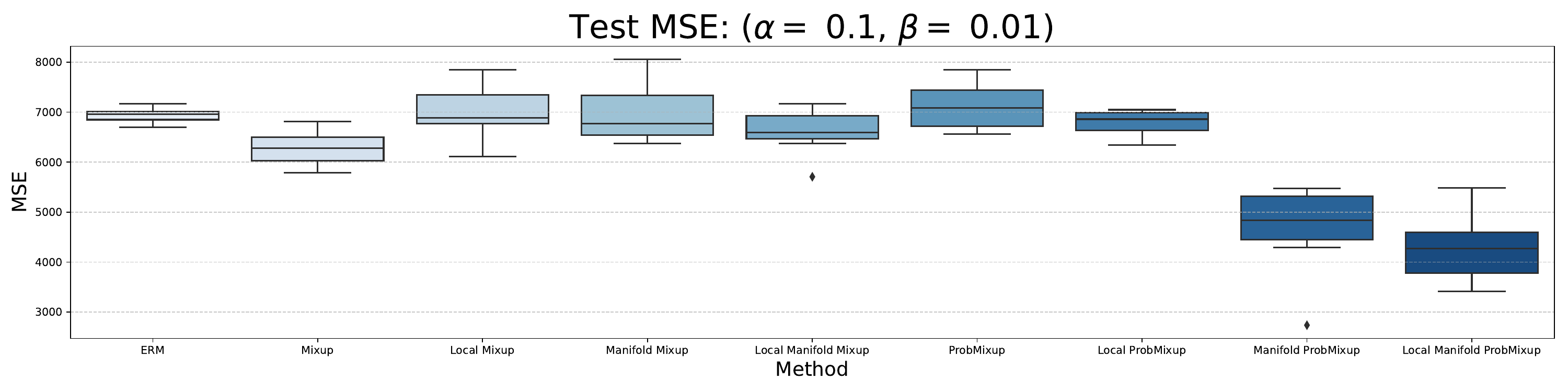}
        \caption{Average MSE.}
        \label{subfig: toy_regression_alpha_0p1_beta_0p01_mse}
    \end{subfigure}
    
    \vspace{1em} 

    \begin{subfigure}{\textwidth}
        \centering
        \includegraphics[width=0.8\textwidth]{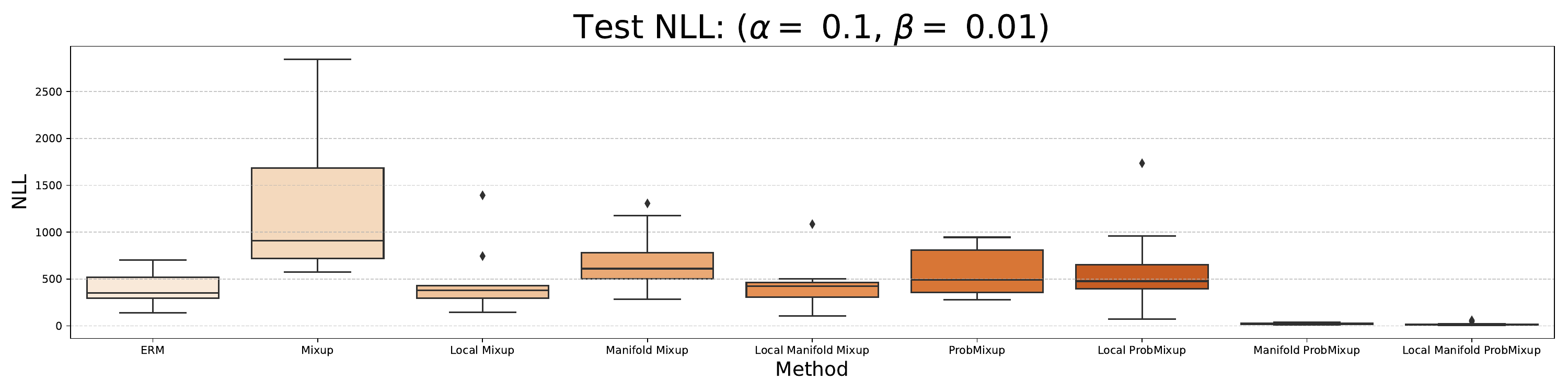}
        \caption{Average negative log-likelihood.}
        \label{subfig: toy_regression_alpha_0p1_beta_0p01_nll}
    \end{subfigure}
    
    \caption{Toy regression results for $\alpha=0.1$ and $\beta=0.01$.}
    \label{fig: toy_regression_alpha_0p1_beta_0p01}
\end{figure}

\begin{figure}[htbp]
    \centering
    \begin{subfigure}{\textwidth}
        \centering
        \includegraphics[width=0.8\textwidth]{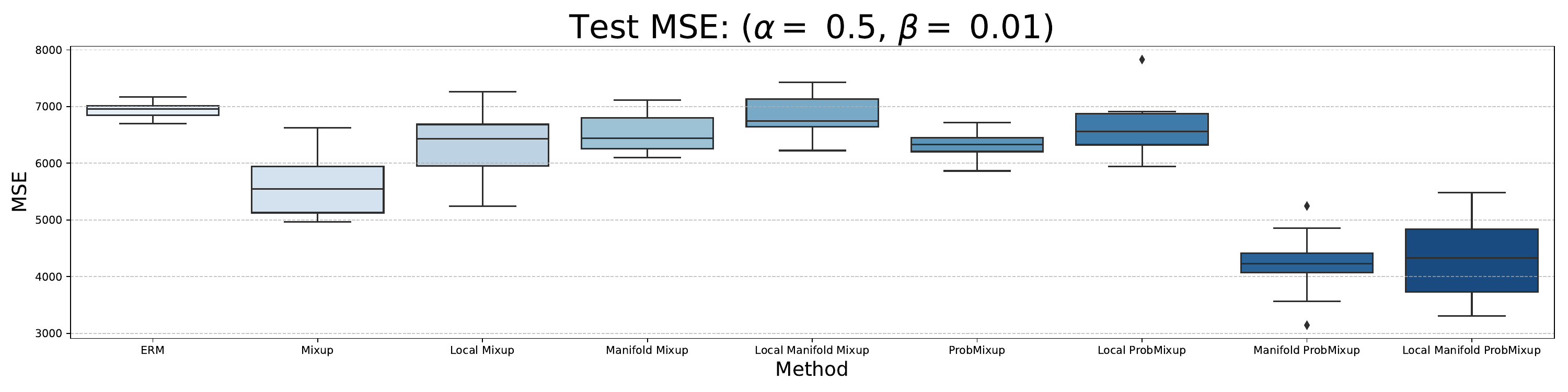}
        \caption{Average MSE.}
        \label{subfig: toy_regression_alpha_0p5_beta_0p01_mse}
    \end{subfigure}
    
    \vspace{1em} 

    \begin{subfigure}{\textwidth}
        \centering
        \includegraphics[width=0.8\textwidth]{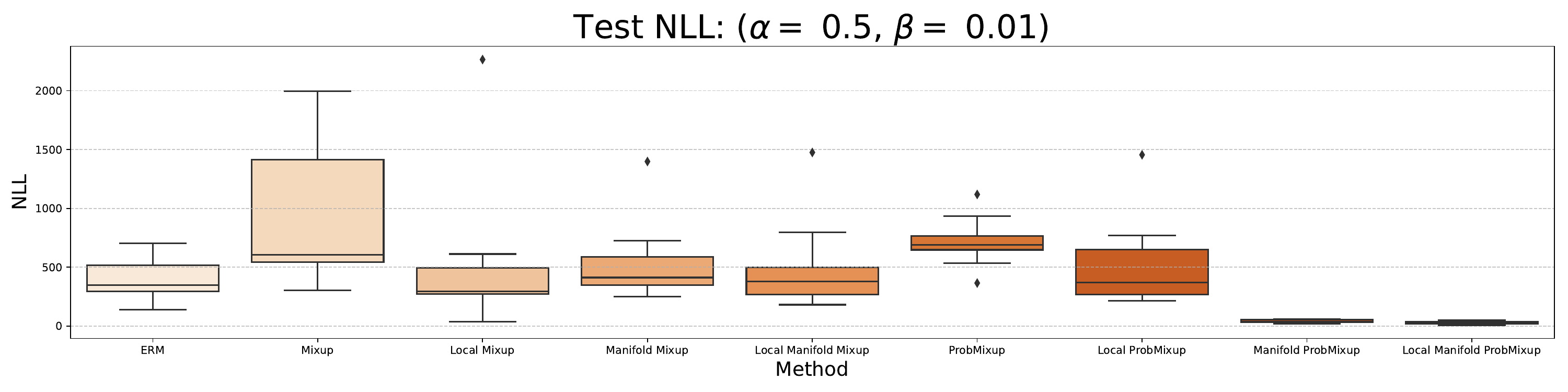}
        \caption{Average negative log-likelihood.}
        \label{subfig: toy_regression_alpha_0p5_beta_0p01_nll}
    \end{subfigure}
    
    \caption{Toy regression results for $\alpha=0.5$ and $\beta=0.01$.}
    \label{fig: toy_regression_alpha_0p5_beta_0p01}
\end{figure}

\begin{figure}[htbp]
    \centering
    \begin{subfigure}{\textwidth}
        \centering
        \includegraphics[width=0.8\textwidth]{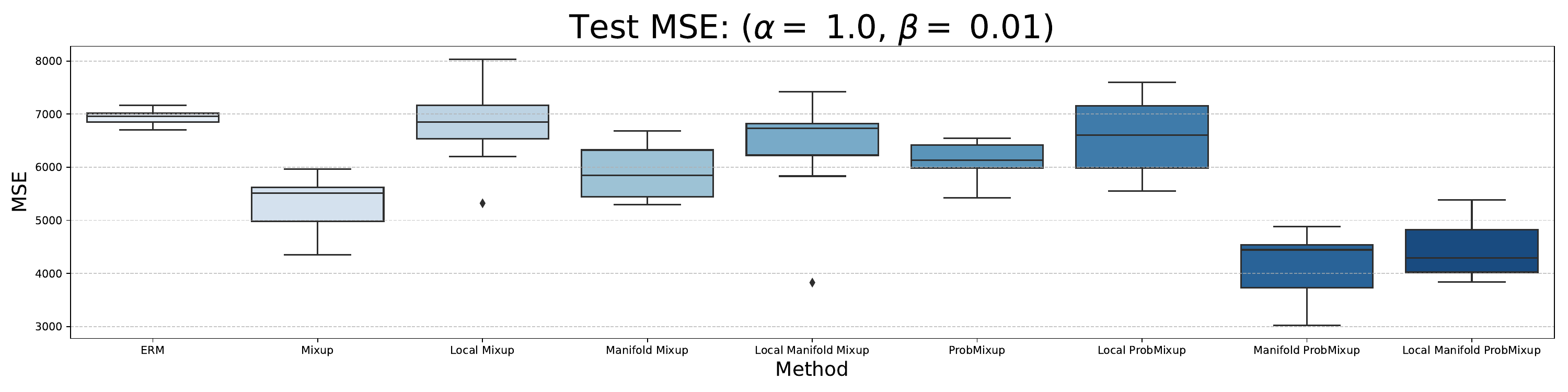}
        \caption{Average MSE.}
        \label{subfig: toy_regression_alpha_1p0_beta_0p01_mse}
    \end{subfigure}
    
    \vspace{1em} 

    \begin{subfigure}{\textwidth}
        \centering
        \includegraphics[width=0.8\textwidth]{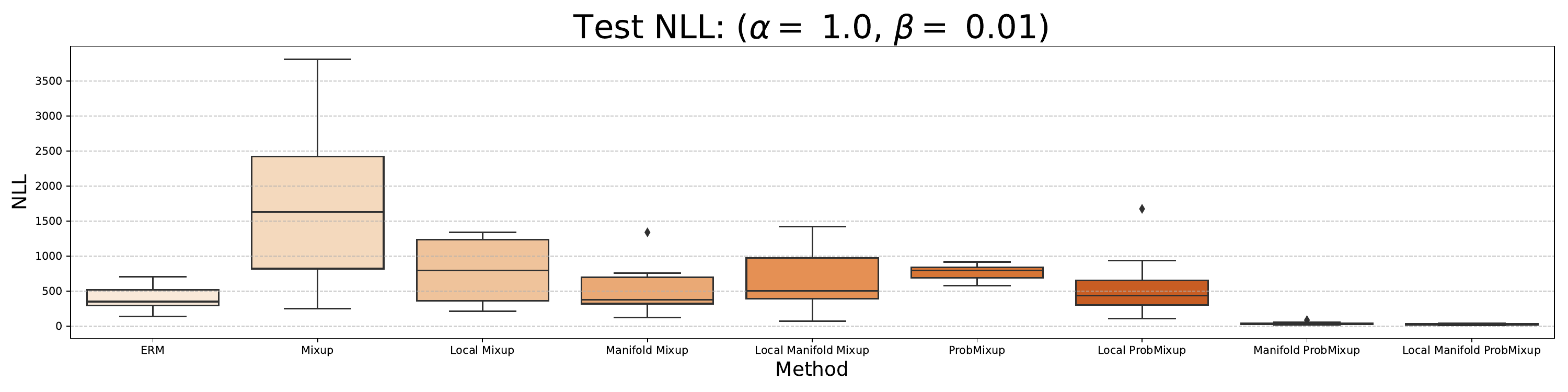}
        \caption{Average negative log-likelihood.}
        \label{subfig: toy_regression_alpha_1p0_beta_0p01_nll}
    \end{subfigure}
    
    \caption{Toy regression results for $\alpha=1.0$ and $\beta=0.01$.}
    \label{fig: toy_regression_alpha_1p0_beta_0p01}
\end{figure}

\subsubsection{Toy Regression: Linear vs. Log-Linear Pooling}
\label{app: exp_pooling_function}
Here, we provide an additional ablation study comparing the performance of linear and log-linear pooling in \textsf{ProbMix} in the context of the Gaussian regression toy example. For linear pooling, the fusion functions $g_\lambda^x$ and $g_\lambda^y$ are assumed to be the following:
\begin{align}
    \tilde{p}_\theta({y}|x_i, x_j, \lambda) = g_\lambda^x(p_\theta(y|x_i), p_\theta(y|x_j)) = \lambda p_\theta(y|x_i) + (1-\lambda)p_\theta(y|x_j)\label{eq: linear_fusion_x} \\
    p_\beta(\tilde{y}|y_i, y_j, \lambda)=g_\lambda^y(s_\beta(\tilde{y}|y_i), s_\beta(\tilde{y}|y_j)) = \lambda s_\beta(\tilde{y}|y_i) + (1-\lambda)s_\beta(\tilde{y}|y_j)\label{eq: linear_fusion_y}
\end{align}

We summarize the results in the ablation on the fusion function in terms of the average NLL for in distribution (ID) data and out-of-distribution (OOD) data in Tables \ref{tab: different_fusion_functions_ind} and \ref{tab: different_fusion_functions_ood}, respectively. Based on the results, our findings are as follows:
\begin{enumerate}
    \item For \textsf{M-ProbMix} and \textsf{Loc$^\mathsf{K}$M-ProbMix}, linear pooling actually achieves better OOD performance than log-linear pooling. We can see in Table \ref{tab: different_fusion_functions_ood}, for example with $\alpha=0.5$, that \textsf{M-ProbMix} with linear pooling achieves an average NLL of 6.1, while the best performing \textsf{ProbMix} variant with log-linear pooling (\textsf{Loc$^\mathsf{K}$M-ProbMix}) only achieves an average NLL of 16.9. 
    \item Improvement of the model in terms of OOD performance comes at a cost of ID performance. We can see that across all settings of $\alpha$, \textsf{M-ProbMix} with linear pooling achieves much higher NLL than its log-linear counterpart (e.g., for $\alpha=0.5$. 0.432 vs. -0.285) due to overcoverage of the quantified uncertainty. 
\end{enumerate}

\begin{table}[!ht]
\centering
\resizebox{\columnwidth}{!}{%
\begin{tabular}{c|cccc|cccc}
\midrule
$\alpha$ & \textsf{ProbMix.} & \textsf{Loc$^\mathsf{K}$ProbMix.} & \textsf{M-ProbMix.} & \textsf{Loc$^\mathsf{K}$M-ProbMix.} & \textsf{ProbMix.} & \textsf{Loc$^\mathsf{K}$ProbMix.} & \textsf{M-ProbMix.} & \textsf{Loc$^\mathsf{K}$M-ProbMix.} \\
& (Linear) & (Linear) & (Linear) & (Linear) & (Log-Linear) & (Log-Linear) & (Log-Linear) & (Log-Linear) \\
\midrule
$1e^{-32}$ & -0.76$\pm$0.01 & -0.66$\pm$0.02 & -0.38$\pm$0.01 & -0.30$\pm$0.01 & \textbf{-0.78$\pm$0.01} & -0.62$\pm$0.02 & -0.38$\pm$0.01 & -0.28$\pm$0.03  \\ 
0.01 & \textbf{-0.77$\pm$0.01} & -0.66$\pm$0.01 & -0.33$\pm$0.01 & -0.30$\pm$0.02 & -0.72$\pm$0.01 & -0.67$\pm$0.01 & -0.39$\pm$0.02 & -0.33$\pm$0.01  \\ 
0.05 & \textbf{-0.76$\pm$0.01} & -0.62$\pm$0.02 & -0.01$\pm$0.02 & -0.30$\pm$0.01 & -0.71$\pm$0.01 & -0.65$\pm$0.01 & -0.36$\pm$0.01 & -0.29$\pm$0.01  \\ 
0.10 & \textbf{-0.74$\pm$0.01} & -0.66$\pm$0.02 & 0.11$\pm$0.04 & -0.31$\pm$0.02 & -0.66$\pm$0.01 & -0.67$\pm$0.01 & -0.35$\pm$0.01 & -0.32$\pm$0.01  \\ 
0.50 & \textbf{-0.76$\pm$0.01} & -0.65$\pm$0.02 & 0.43$\pm$0.04 & -0.29$\pm$0.02 & -0.65$\pm$0.01 & -0.64$\pm$0.01 & -0.28$\pm$0.01 & -0.30$\pm$0.01  \\ 
1 & \textbf{-0.74$\pm$0.01} & -0.66$\pm$0.02 & 0.36$\pm$0.03 & -0.25$\pm$0.01 & -0.51$\pm$0.02 & -0.51$\pm$0.02 & -0.25$\pm$0.01 & -0.33$\pm$0.01  \\  
\bottomrule
\end{tabular}
}
\caption{Linear vs. Log-linear, changing hyper-parameter $\alpha$, in-distribution NLL.}
    \label{tab: different_fusion_functions_ind}
\end{table}


\begin{table}[!ht]
\centering
\resizebox{\columnwidth}{!}{%
\begin{tabular}{c|cccc|cccc}
\midrule
$\alpha$ & \textsf{ProbMix.} & \textsf{Loc$^\mathsf{K}$ProbMix.} & \textsf{M-ProbMix.} & \textsf{Loc$^\mathsf{K}$M-ProbMix.} & \textsf{ProbMix.} & \textsf{Loc$^\mathsf{K}$ProbMix.} & \textsf{M-ProbMix.} & \textsf{Loc$^\mathsf{K}$M-ProbMix.} \\
& (Linear) & (Linear) & (Linear) & (Linear) & (Log-Linear) & (Log-Linear) & (Log-Linear) & (Log-Linear) \\
\midrule
$1e^{-32}$ & 94.9$\pm$11.8 & 157.1$\pm$40.8 & 21.3$\pm$3.3 & \textbf{16.0$\pm$3.2} & 92.6$\pm$12.9 & 229.4$\pm$133.6 & 29.5$\pm$6.6 & 78.6$\pm$62.5  \\ 
0.01 & 103.5$\pm$15.0 & 171.1$\pm$41.0 & 28.4$\pm$8.8 & 22.7$\pm$4.3 & 78.1$\pm$8.1 & 132.7$\pm$38.7 & 27.5$\pm$6.2 & \textbf{18.6$\pm$6.3}  \\ 
0.1 & 90.2$\pm$12.7 & 180.7$\pm$30.0 & \textbf{6.6$\pm$0.9} & 17.4$\pm$3.0 & 117.8$\pm$27.0 & 157.2$\pm$18.1 & 31.0$\pm$5.4 & 17.0$\pm$2.2  \\ 
0.1 & 85.3$\pm$13.5 & 206.9$\pm$30.6 & \textbf{12.3$\pm$3.6} & 24.4$\pm$6.4 & 173.1$\pm$27.3 & 258.4$\pm$52.7 & 25.9$\pm$6.1 & 32.8$\pm$8.1  \\ 
0.5 & 83.4$\pm$10.17 & 149.2$\pm$35.5 & \textbf{6.1$\pm$1.4} & 22.0$\pm$3.9 & 313.6$\pm$59.6 & 176.2$\pm$55.0 & 47.7$\pm$12.2 & 16.9$\pm$3.0  \\ 
1 & 91.8$\pm$7.1 & 104.5$\pm$22.9 & \textbf{7.8$\pm$0.7} & 14.8$\pm$4.8 & 370.2$\pm$54.3 & 117.4$\pm$18.6 & 37.2$\pm$13.5 & 23.8$\pm$5.6  \\ 
\bottomrule
\end{tabular}
}
\caption{Linear vs. Log-linear, changing hyper-parameter $\alpha$, OOD NLL.}
    \label{tab: different_fusion_functions_ood}
\end{table}


\subsubsection{Toy Regression: Latent Variable Modeling}
\label{app: exp_latent_variable_modeling}
We further present results comparing mixup methods and \textsf{ProbMix} variants across different values of $\alpha$. The goal of this ablation is to understand whether or not latent variable modeling  (i.e., mapping to a statistical manifold in the latent space) is the main cause of the strong performance of \textsf{M-ProbMix} and \textsf{Loc$^\mathsf{K}$M-ProbMix} in the toy regression experiment. To that end, we fix $\beta=10^{-32}$ and run all baselines across different values of $\alpha\in\{10^{-32}, 0.01, 0.05, 0.1, 0.25, 0.5, 1\}$. By varying the value of $\alpha$, we can understand whether or not mixing is providing additional boost in performance on the OOD distribution NLL. We provide the OOD NLL for all baselines considered in Table \ref{tab: changing_alpha_param}. From the table, we observe that latent variable modeling alone ($\alpha=10^{-32}$) provides a huge benefit on this toy experiment, improving the negative log-likelihood by nearly an order of magnitude (in comparison to mixup variants, \textsf{ProbMix} and \textsf{Loc$^\mathsf{K}$ProbMix}). Interestingly enough, we can see that in addition to the benefits gained from latent variable modeling, performance can be further improved by incorporating \textsf{ProbMix}, as we get the best performance in the case of \textsf{M-ProbMix} when $\alpha=0.25$ ($22.9\pm 6.6$) and \textsf{Loc$^\mathsf{K}$M-ProbMix} when $\alpha=0.5$ ($16.9\pm 3.0$).

\begin{table}[!ht]
\centering
\resizebox{\columnwidth}{!}{%
\begin{tabular}{c|c|c|c|c|c|c|c|c}
\midrule
\multicolumn{1}{c|}{} & \multicolumn{4}{c|}{Mixup Methods} & \multicolumn{4}{c}{ProbMixup Methods} \\ \midrule
$\alpha$ & \textsf{Mix.} & \textsf{Loc$^\mathsf{K}$Mix.} & \textsf{M-Mix.} & Loc$^\mathsf{K}$M-Mix. & \textsf{ProbMix.} & \textsf{Loc$^\mathsf{K}$ProbMix.} & \textsf{M-ProbMix.} & \textsf{Loc$^\mathsf{K}$M-ProbMix.} \\
\midrule
$1e^{-32}$ & \textbf{87.1$\pm$10.5}	&  143.2$\pm$31.7 & 	\textbf{87.1$\pm$10.5} & 	\textbf{143.2$\pm$31.7} & 	92.6$\pm$13.0	 & 229.4$\pm$133.6	 & 29.5$\pm$6.5	 & 78.6$\pm$62.4 \\
\midrule
0.01 &	98.6$\pm$24.2 &	\textbf{110.7$\pm$11.0} &	162.5$\pm$47.9 &	152.3$\pm$45.4 &	\textbf{78.1$\pm$8.1} &	132.7$\pm$38.7 &	27.5$\pm$6.2 &	18.6$\pm$6.3 \\
0.05 &	176.3$\pm$45.5	& 167.1 $\pm$ 52.8	 & 136.2$\pm$33.5 & 147.1$\pm$25.8 &	117.8$\pm$27.0 & 157.2$\pm$18.1	& 31.0$\pm$5.4 &	\textbf{16.9$\pm$2.2} \\
0.1 &	258.92$\pm$51.1 &	202.1$\pm$41.7 &	148.5 $\pm$29.9 &	177.8$\pm$26.7 &	173.1$\pm$27.3	& 258.4$\pm$52.7 &	25.9 $\pm$6.0 &	32.8$\pm$8.1 \\
0.25 &	270.3$\pm$62.2	& 207.5$\pm$44.5 &	140.3$\pm$33.5 &	204.3$\pm$58.1 &	302.0$\pm$30.8 &	120.2$\pm$18.4 &	\textbf{22.9 $\pm$ 6.7} &	19.1$\pm$3.9 \\
0.5 & 306.0$\pm$38.9 &	219.1$\pm$57.7 &	136.4$\pm$23.8 &	202.4$\pm$34.3	& 313.6$\pm$59.6 &	176.2$\pm$55.0 &	47.7$\pm$12.2 &	\textbf{16.9 $\pm$ 3.0} \\
1 &	310.3$\pm$52.3 &	256.7$\pm$85.0 &	134.1$\pm$34.0 &	186.3$\pm$42.7	& 370.1$\pm$54.3 &	\textbf{117.4$\pm$18.6} &	37.1$\pm$13.4 &	23.8$\pm$5.6 \\
\bottomrule
\end{tabular}
}
\caption{Changing hyper-parameter $\alpha$, latent variable modeling.}
    \label{tab: changing_alpha_param}
\end{table}

\subsubsection{Toy Classification}
\label{app: toy_classification_data}
Figure \ref{fig: all_classification_examples} shows the a plot of the decision boundaries of each network obtain via each of the baseline methods (\textsf{Mix}, \textsf{M-Mix}, and \textsf{Loc$^\mathsf{K}$Mix}, and \textsf{Loc$^\mathsf{K}$M-Mix}) as compared to the proposed approaches (\textsf{ProbMix}, \textsf{M-ProbMix}, and \textsf{Loc$^\mathsf{K}$ProbMix}, and \textsf{Loc$^\mathsf{K}$M-ProbMix}). Base on the plot of the decision boundaries, we can see that for this example \textsf{ProbMix} and  \textsf{Loc$^\mathsf{K}$ProbMix} yield the most reasonable decision boundaries, where  \textsf{Loc$^\mathsf{K}$Mix} and \textsf{Loc$^\mathsf{K}$M-Mix} are close (but less noisy competitor). We can see that \textsf{Mix} and \textsf{M-Mix} suffer from the manifold intrusion issue, since the decision boundaries for the red and yellow class are mixed and deviate more from the ground truth data generating process. We observe a similar problem in the \textsf{M-ProbMix} and \textsf{Loc$^\mathsf{K}$M-ProbMix} methods, showing that in the case of this dataset fusing at the logit level gives better generalization results than fusing on some embedding. 

Similar to the case of the toy regression dataset, we conduct an ablation study for the $\alpha$ and $\beta$ parameters for the toy classification dataset. We tested a grid of hyperparameter values defined by $\alpha\in\{0.01, 0.05, 0.10, 0.5, 1.0\}$ and $\beta\in\{0, 0.01\}$. We show a series of box plots comparing the test set accuracy and the NLL across different methods. Results show that in the case of the classification experiment, probabilistic mixup approaches without manifold augmentation (\textsf{ProbMix} and \textsf{Loc$^\mathsf{K}$ProbMix}) achieve best performance in terms of both accuracy and NLL as compared to their manifold-based counterparts. Please refer to the results as shown in Figures \ref{fig: toy_classification_alpha_0p01_beta_0}-\ref{fig: toy_classification_alpha_1p0_beta_0p01} for more details on the results and the ablation studies

\begin{figure*}[t]
    \centering
    \begin{subfigure}{0.24\textwidth}
        \centering
        \includegraphics[trim={0, 0, 0, 0.8cm}, clip, width=\textwidth]{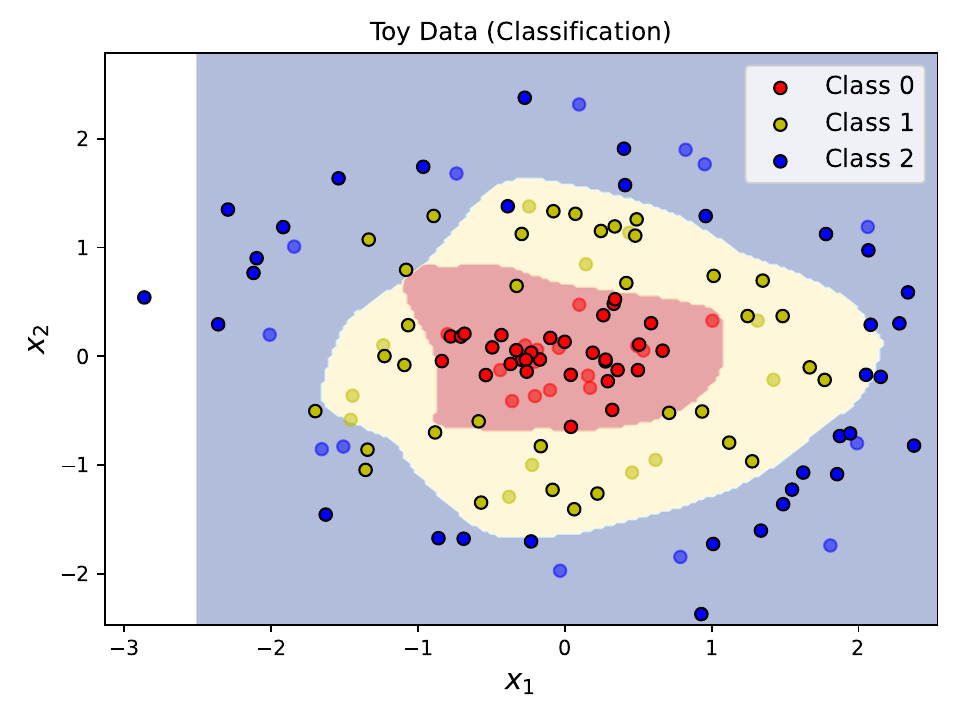}
        \caption{\textsf{Mix}.}
    \end{subfigure}%
    \hfill
    \begin{subfigure}{0.24\textwidth}
        \centering
         \includegraphics[trim={0, 0, 0, 0.8cm}, clip, width=\textwidth]{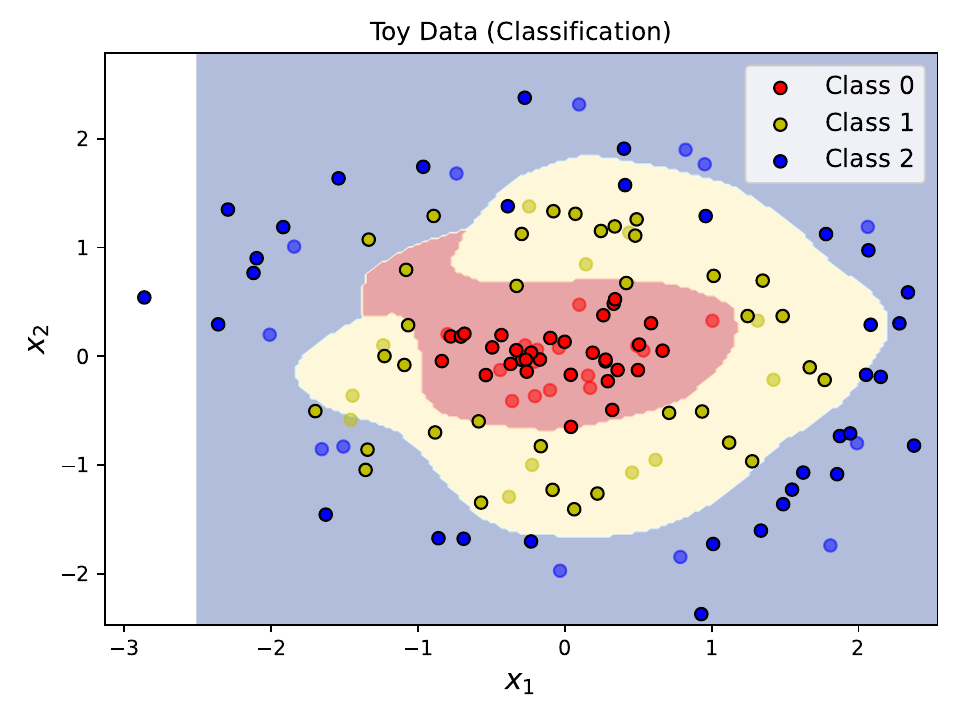}
        \caption{\textsf{M-Mix}.}
    \end{subfigure}%
    \hfill
    \begin{subfigure}{0.24\textwidth}
        \centering
        \includegraphics[trim={0, 0, 0, 0.8cm}, clip, width=\textwidth]{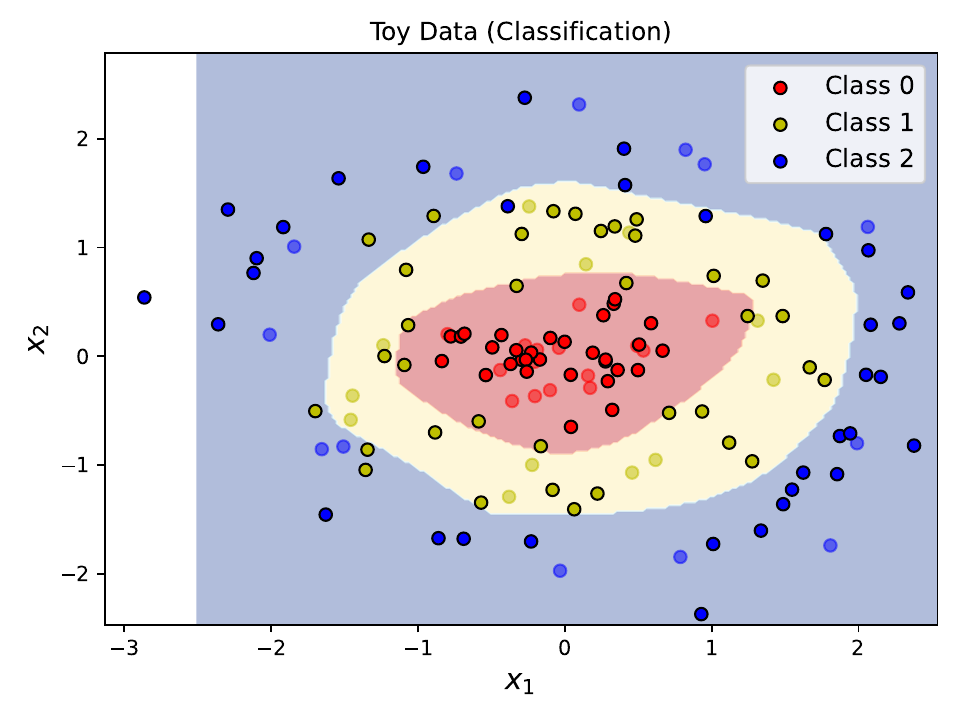}
        \caption{\textsf{Loc$^\mathsf{K}$Mix}.}
    \end{subfigure}%
    \hfill
    \begin{subfigure}{0.24\textwidth}
        \centering
        \includegraphics[trim={0, 0, 0, 0.8cm}, clip, width=\textwidth]{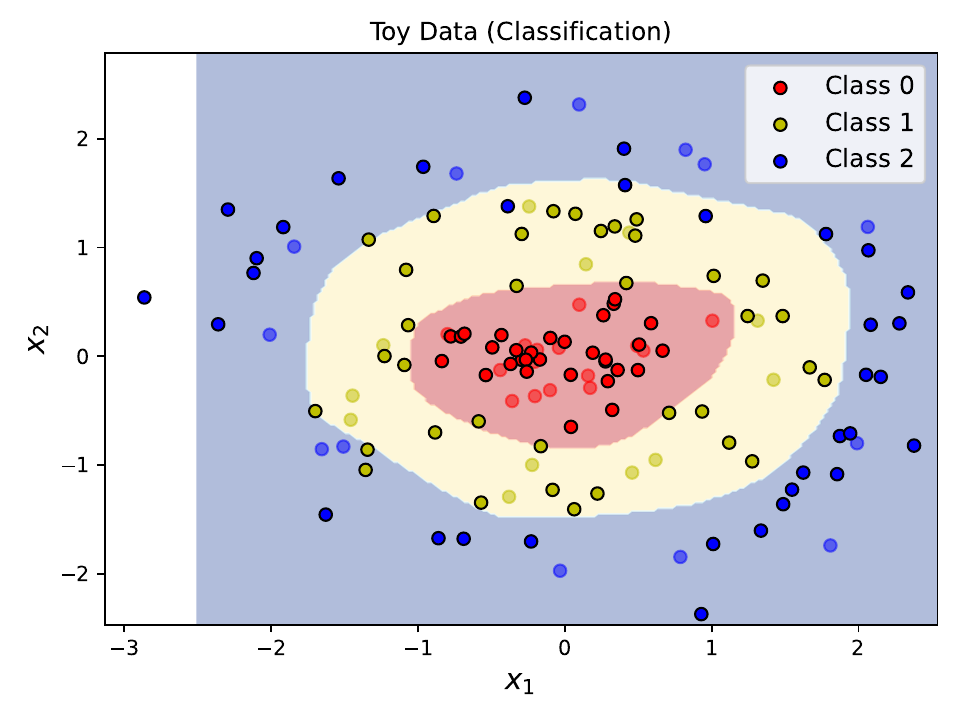}
        \caption{\textsf{Loc$^\mathsf{K}$M-Mix}.}
    \end{subfigure}%
    \vspace{0.1cm}
    \begin{subfigure}{0.24\textwidth}
        \centering
        \includegraphics[trim={0, 0, 0, 0.8cm}, clip, width=\textwidth]{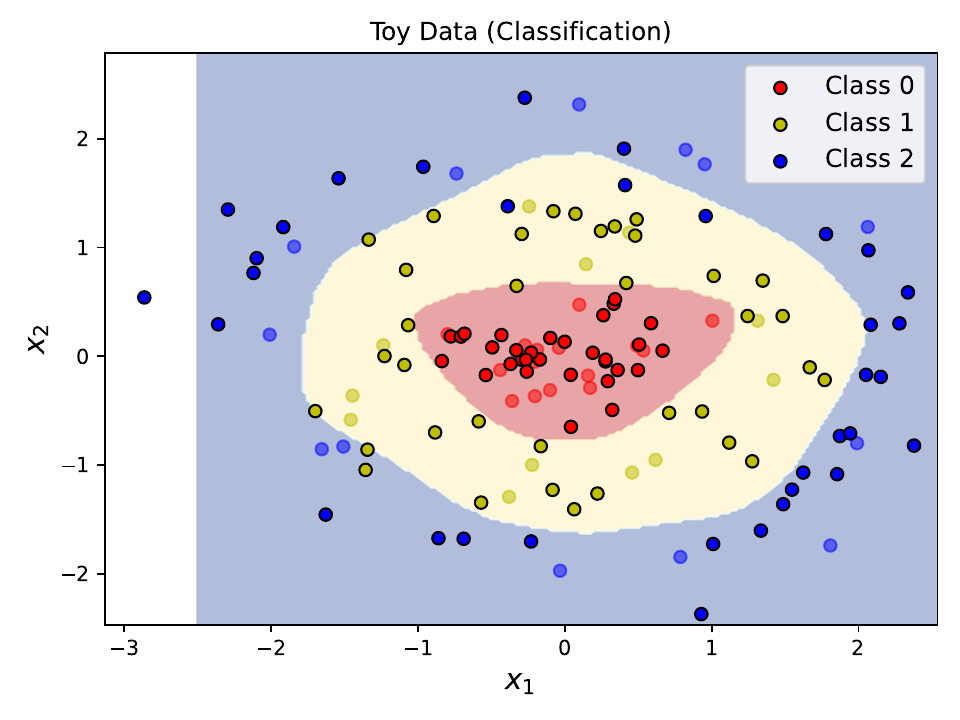}
        \caption{\textsf{ProbMix}.}
    \end{subfigure}%
    \hfill
    \begin{subfigure}{0.24\textwidth}
        \centering
        \includegraphics[trim={0, 0, 0, 0.8cm}, clip, width=\textwidth]{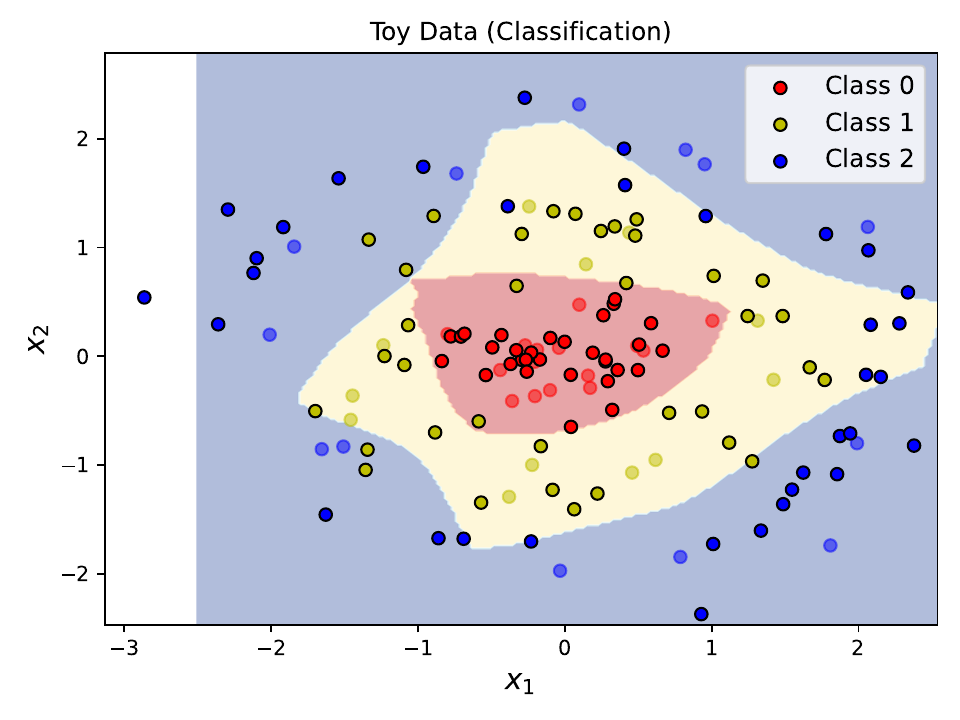}
        \caption{\textsf{M-ProbMix}.}
    \end{subfigure}%
    \hfill
    \begin{subfigure}{0.24\textwidth}
        \centering
        \includegraphics[trim={0, 0, 0, 0.8cm}, clip, width=\textwidth]{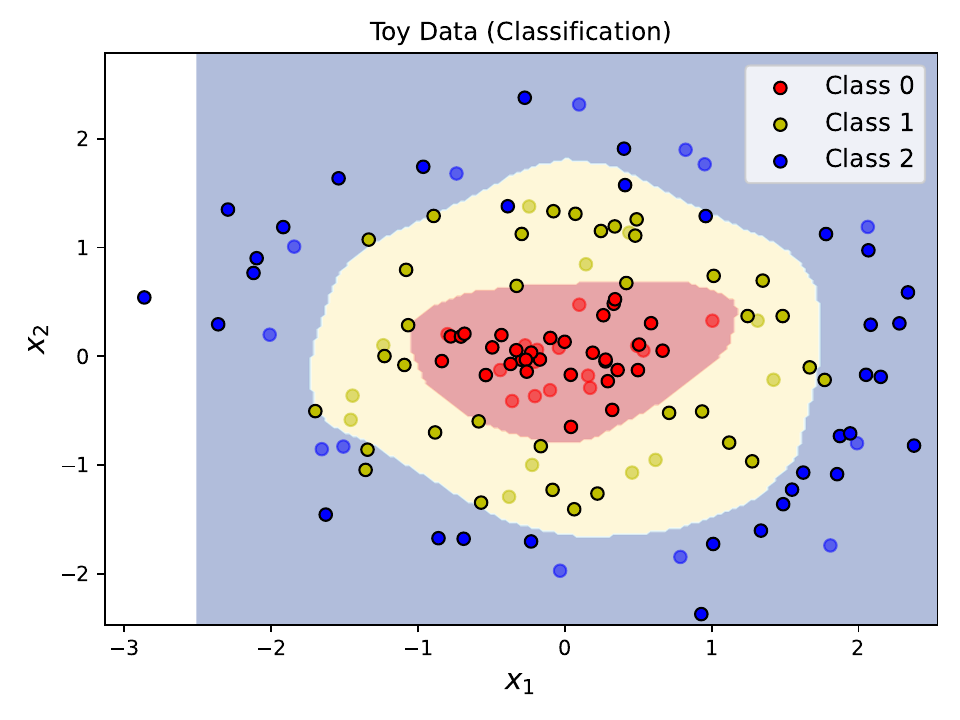}
        \caption{\textsf{Loc$^\mathsf{K}$ProbMix}.}
    \end{subfigure}%
    \hfill
    \begin{subfigure}{0.24\textwidth}
        \centering
        \includegraphics[trim={0, 0, 0, 0.8cm}, clip, width=\textwidth]{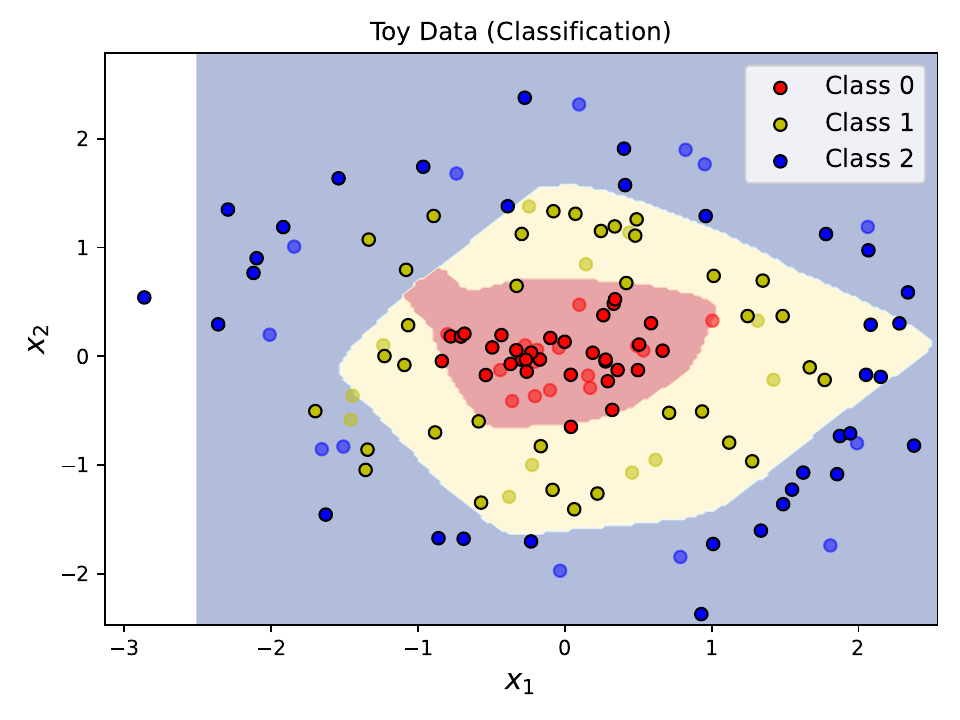}
        \caption{\textsf{Loc$^\mathsf{K}$M-ProbMix}.}
    \end{subfigure}%
    \caption{Visual example of different approaches for toy classification.}
    \label{fig: all_classification_examples}
\end{figure*}

\begin{figure}[htbp]
    \centering
    \begin{subfigure}{\textwidth}
        \centering
        \includegraphics[width=0.8\textwidth]{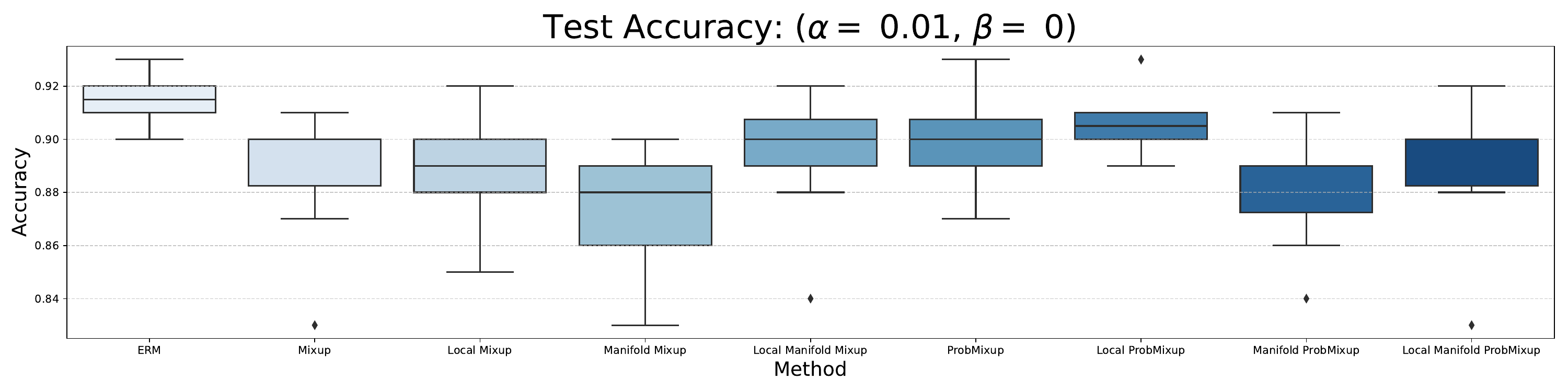}
        \caption{Average accuracy.}
        \label{subfig: toy_classification_alpha_0p01_beta_0_acc}
    \end{subfigure}
    
    \vspace{1em} 

    \begin{subfigure}{\textwidth}
        \centering
        \includegraphics[width=0.8\textwidth]{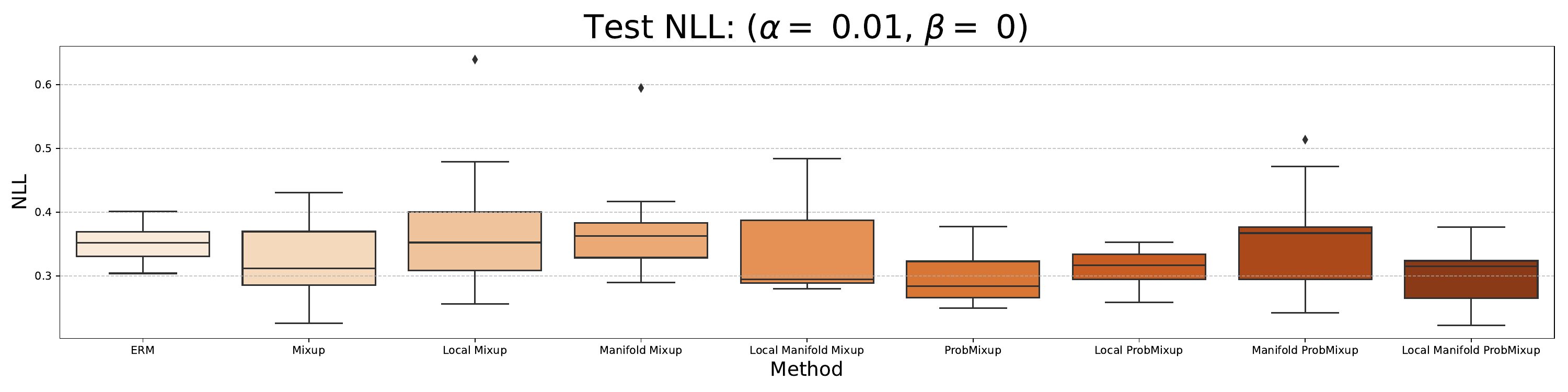}
        \caption{Average negative log-likelihood.}
        \label{subfig: toy_classification_alpha_0p01_beta_0_nll}
    \end{subfigure}
    
    \caption{Toy classification results for $\alpha=0.01$ and $\beta=0$.}
    \label{fig: toy_classification_alpha_0p01_beta_0}
\end{figure}

\begin{figure}[htbp]
    \centering
    \begin{subfigure}{\textwidth}
        \centering
        \includegraphics[width=0.8\textwidth]{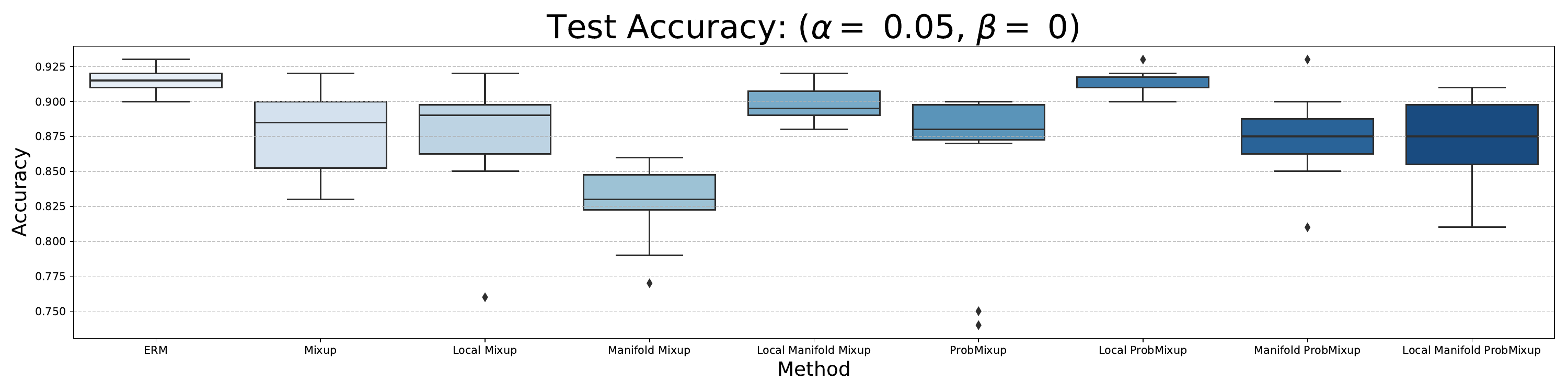}
        \caption{Average accuracy.}
        \label{subfig: toy_classification_alpha_0p05_beta_0_acc}
    \end{subfigure}
    
    \vspace{1em} 

    \begin{subfigure}{\textwidth}
        \centering
        \includegraphics[width=0.8\textwidth]{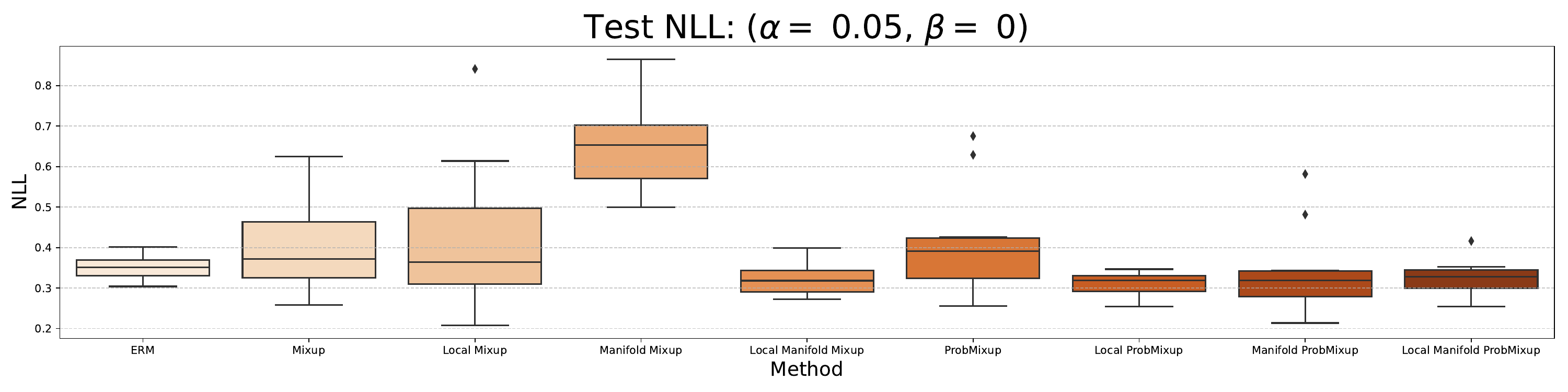}
        \caption{Average negative log-likelihood.}
        \label{subfig: toy_classification_alpha_0p05_beta_0_nll}
    \end{subfigure}
    
    \caption{Toy classification results for $\alpha=0.05$ and $\beta=0$.}
    \label{fig: toy_classification_alpha_0p05_beta_0}
\end{figure}

\begin{figure}[htbp]
    \centering
    \begin{subfigure}{\textwidth}
        \centering
        \includegraphics[width=0.8\textwidth]{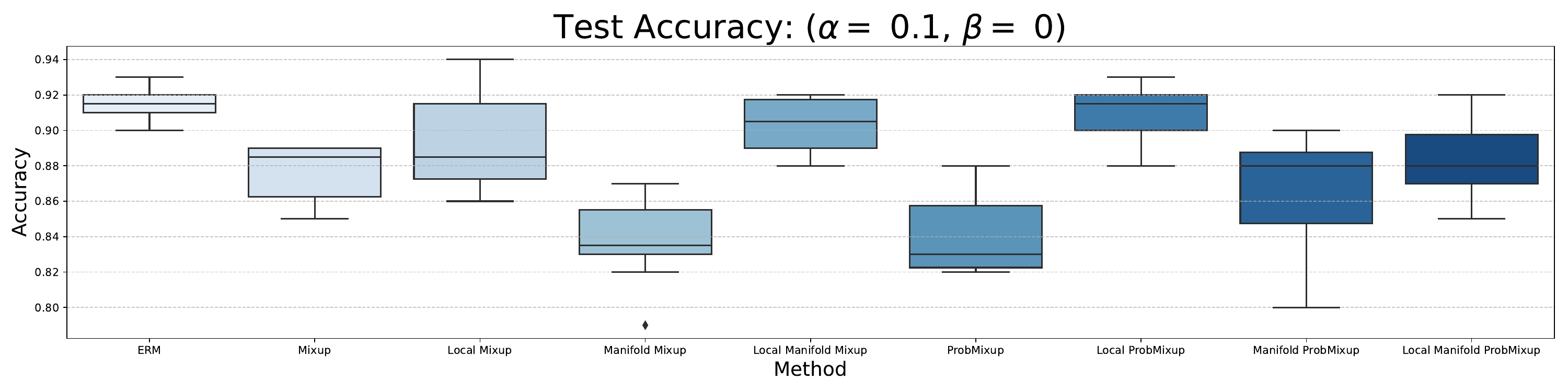}
        \caption{Average accuracy.}
        \label{subfig: toy_classification_alpha_0p1_beta_0_acc}
    \end{subfigure}
    
    \vspace{1em} 

    \begin{subfigure}{\textwidth}
        \centering
        \includegraphics[width=0.8\textwidth]{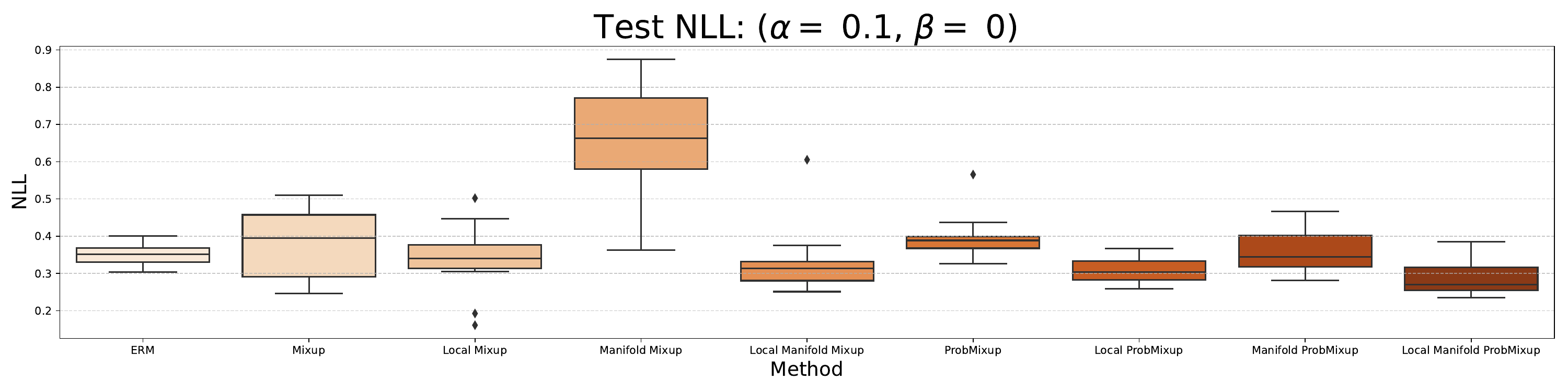}
        \caption{Average negative log-likelihood.}
        \label{subfig: toy_classification_alpha_0p1_beta_0_nll}
    \end{subfigure}
    
    \caption{Toy classification results for $\alpha=0.1$ and $\beta=0$.}
    \label{fig: toy_classification_alpha_0p1_beta_0}
\end{figure}

\begin{figure}[htbp]
    \centering
    \begin{subfigure}{\textwidth}
        \centering
        \includegraphics[width=0.8\textwidth]{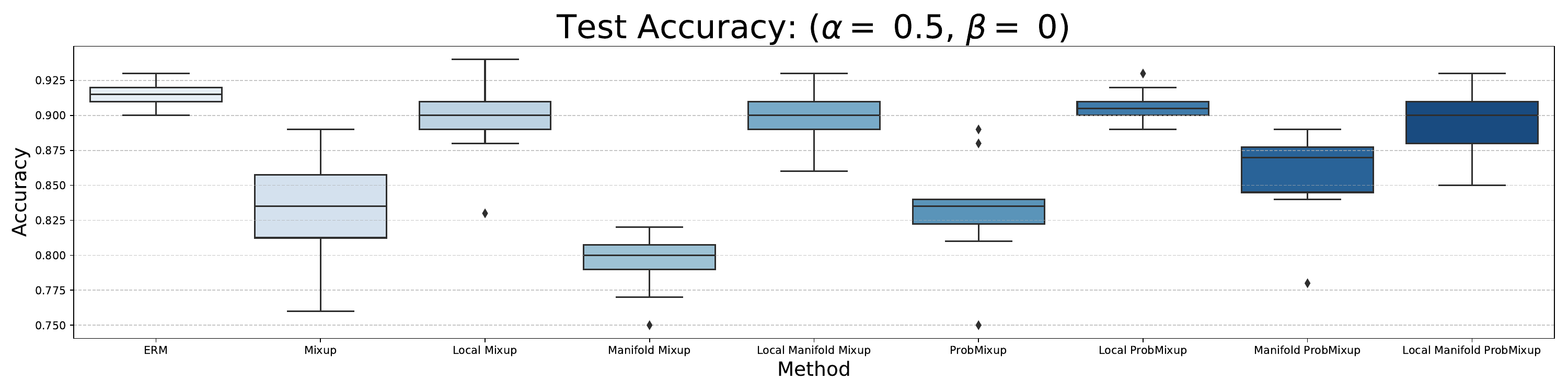}
        \caption{Average accuracy.}
        \label{subfig: toy_classification_alpha_0p5_beta_0_acc}
    \end{subfigure}
    
    \vspace{1em} 

    \begin{subfigure}{\textwidth}
        \centering
        \includegraphics[width=0.8\textwidth]{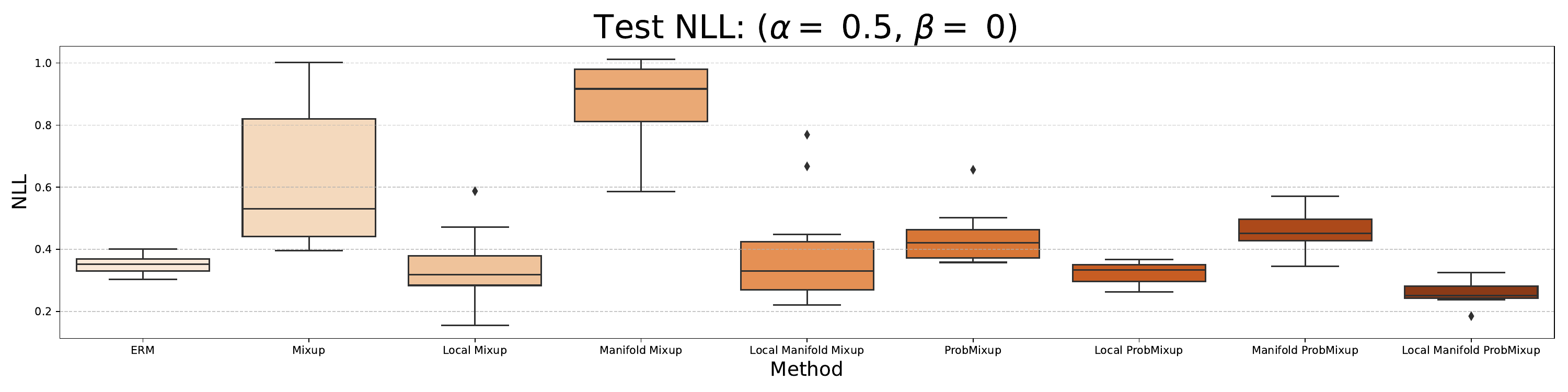}
        \caption{Average negative log-likelihood.}
        \label{subfig: toy_classification_alpha_0p5_beta_0_nll}
    \end{subfigure}
    
    \caption{Toy classification results for $\alpha=0.5$ and $\beta=0$.}
    \label{fig: toy_classification_alpha_0p5_beta_0}
\end{figure}

\begin{figure}[htbp]
    \centering
    \begin{subfigure}{\textwidth}
        \centering
        \includegraphics[width=0.8\textwidth]{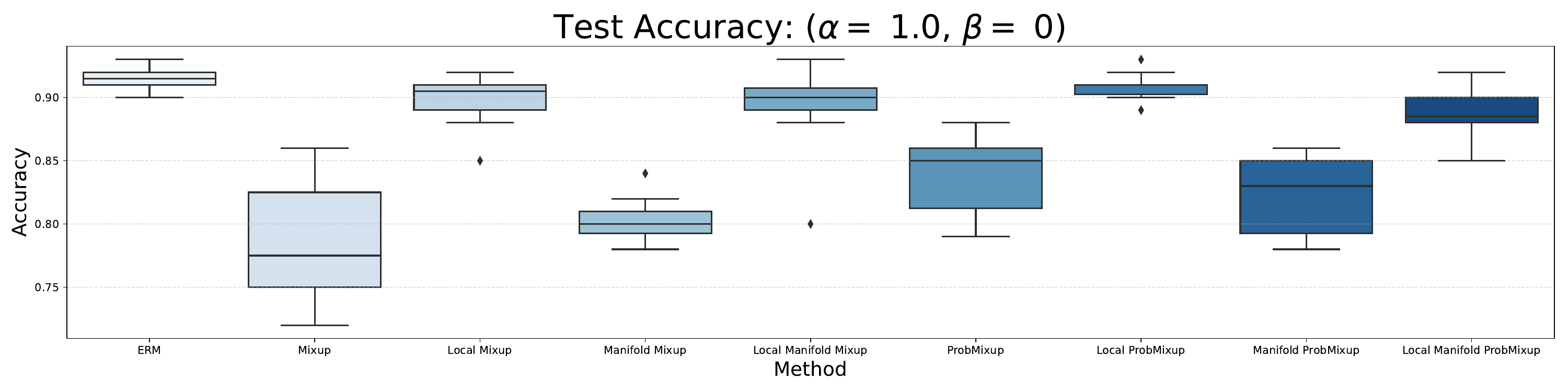}
        \caption{Average accuracy.}
        \label{subfig: toy_classification_alpha_1p0_beta_0_acc}
    \end{subfigure}
    
    \vspace{1em} 

    \begin{subfigure}{\textwidth}
        \centering
        \includegraphics[width=0.8\textwidth]{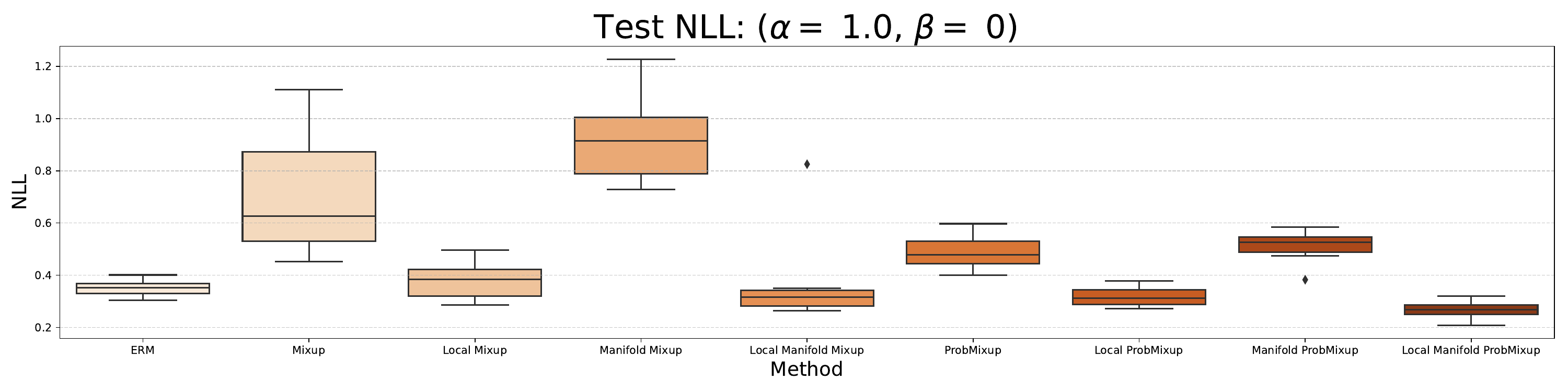}
        \caption{Average negative log-likelihood.}
        \label{subfig: toy_classification_alpha_1p0_beta_0_nll}
    \end{subfigure}
    
    \caption{Toy classification results for $\alpha=1$ and $\beta=0$.}
    \label{fig: toy_classification_alpha_1p0_beta_0}
\end{figure}

\begin{figure}[htbp]
    \centering
    \begin{subfigure}{\textwidth}
        \centering
        \includegraphics[width=0.8\textwidth]{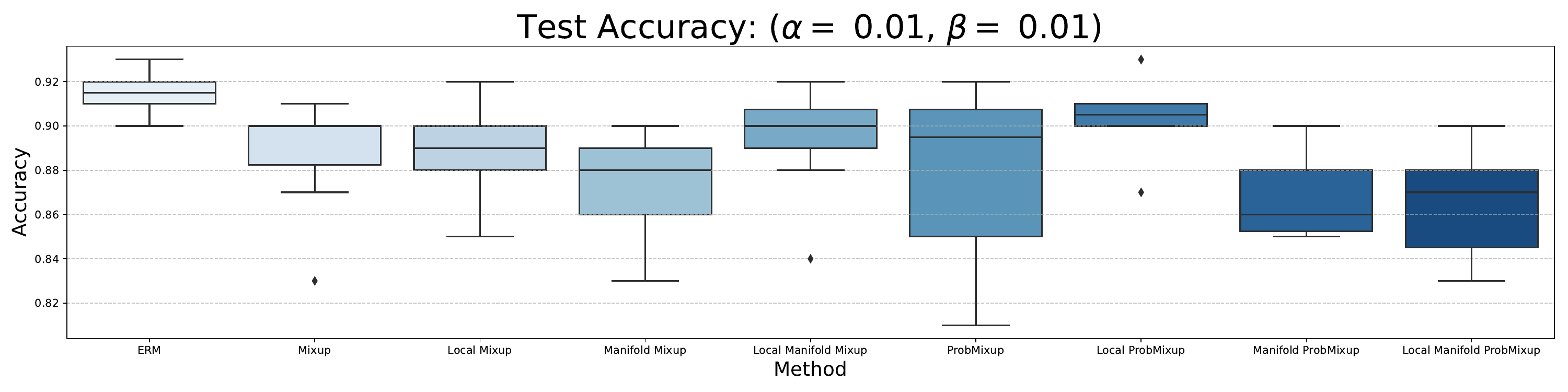}
        \caption{Average accuracy.}
        \label{subfig: toy_classification_alpha_0p01_beta_0p01_acc}
    \end{subfigure}
    
    \vspace{1em} 

    \begin{subfigure}{\textwidth}
        \centering
        \includegraphics[width=0.8\textwidth]{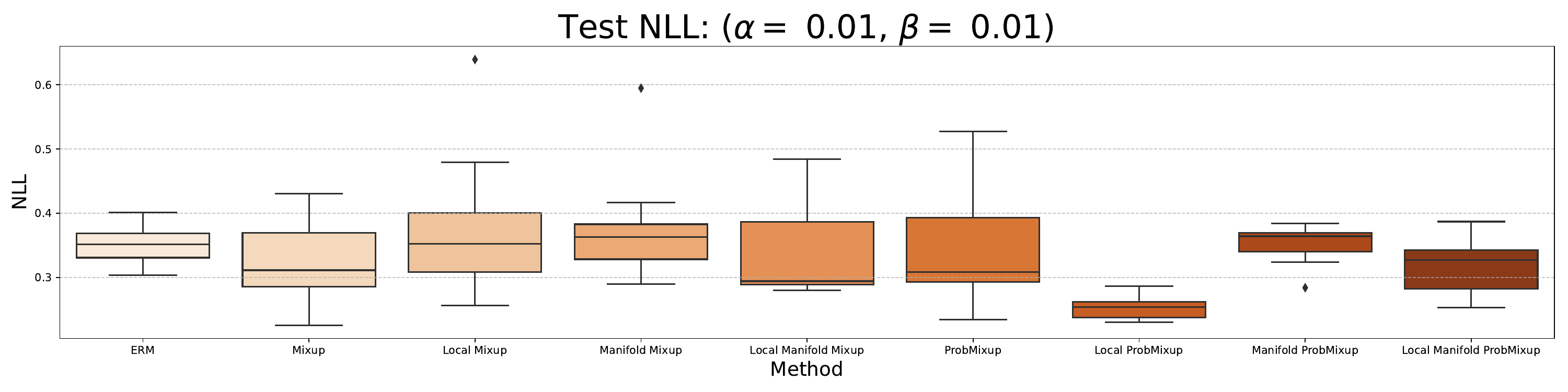}
        \caption{Average negative log-likelihood.}
        \label{subfig: toy_classification_alpha_0p01_beta_0p01_nll}
    \end{subfigure}
    
    \caption{Toy classification results for $\alpha=0.01$ and $\beta=0.01$.}
    \label{fig: toy_classification_alpha_0p01_beta_0p01}
\end{figure}

\begin{figure}[htbp]
    \centering
    \begin{subfigure}{\textwidth}
        \centering
        \includegraphics[width=0.8\textwidth]{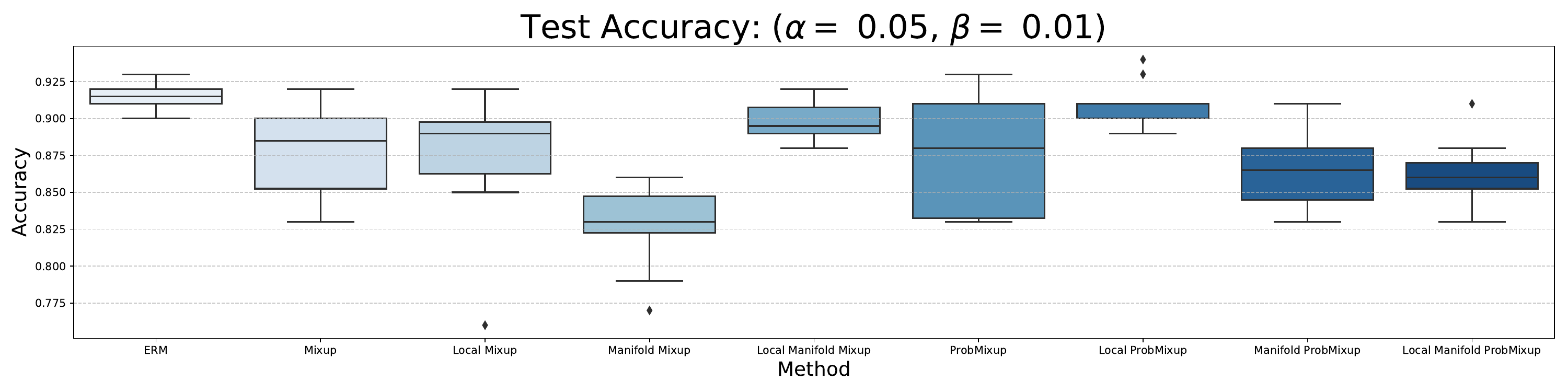}
        \caption{Average accuracy.}
        \label{subfig: toy_classification_alpha_0p05_beta_0p01_acc}
    \end{subfigure}
    
    \vspace{1em} 

    \begin{subfigure}{\textwidth}
        \centering
        \includegraphics[width=0.8\textwidth]{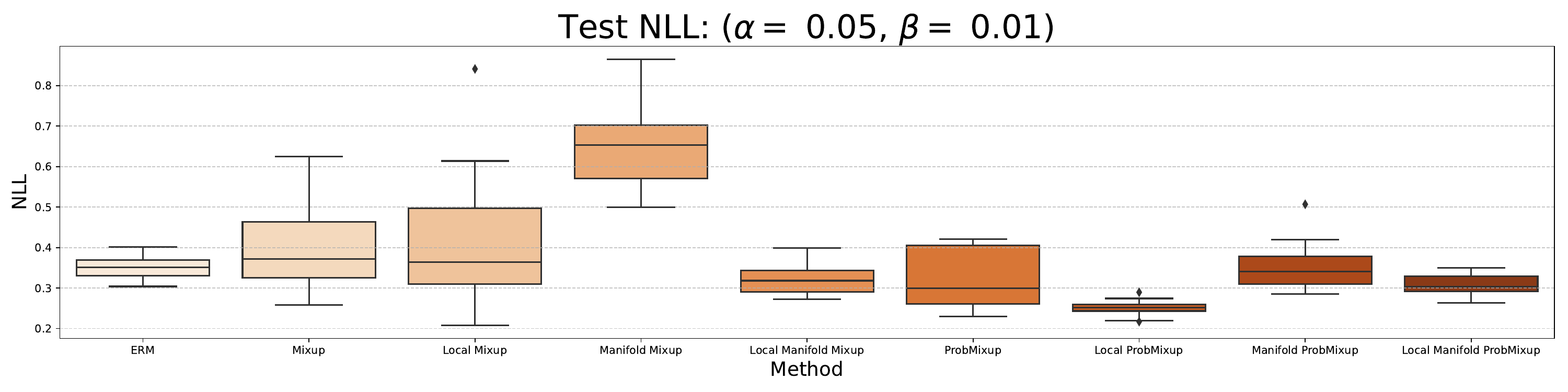}
        \caption{Average negative log-likelihood.}
        \label{subfig: toy_classification_alpha_0p05_beta_0p01_nll}
    \end{subfigure}
    
    \caption{Toy classification results for $\alpha=0.05$ and $\beta=0.01$.}
    \label{fig: toy_classification_alpha_0p05_beta_0p01}
\end{figure}

\begin{figure}[htbp]
    \centering
    \begin{subfigure}{\textwidth}
        \centering
        \includegraphics[width=0.8\textwidth]{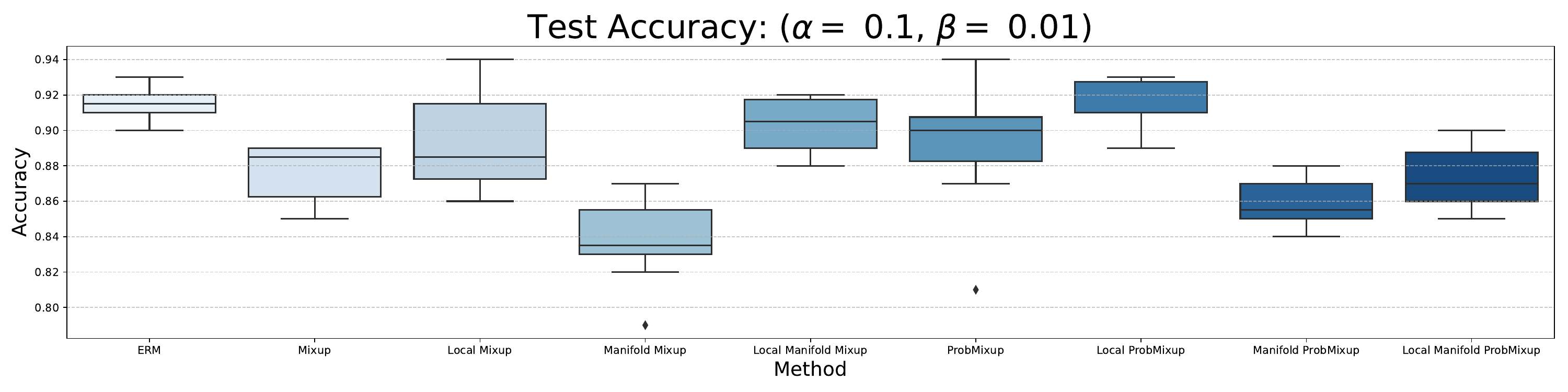}
        \caption{Average accuracy.}
        \label{subfig: toy_classification_alpha_0p1_beta_0p01_acc}
    \end{subfigure}
    
    \vspace{1em} 

    \begin{subfigure}{\textwidth}
        \centering
        \includegraphics[width=0.8\textwidth]{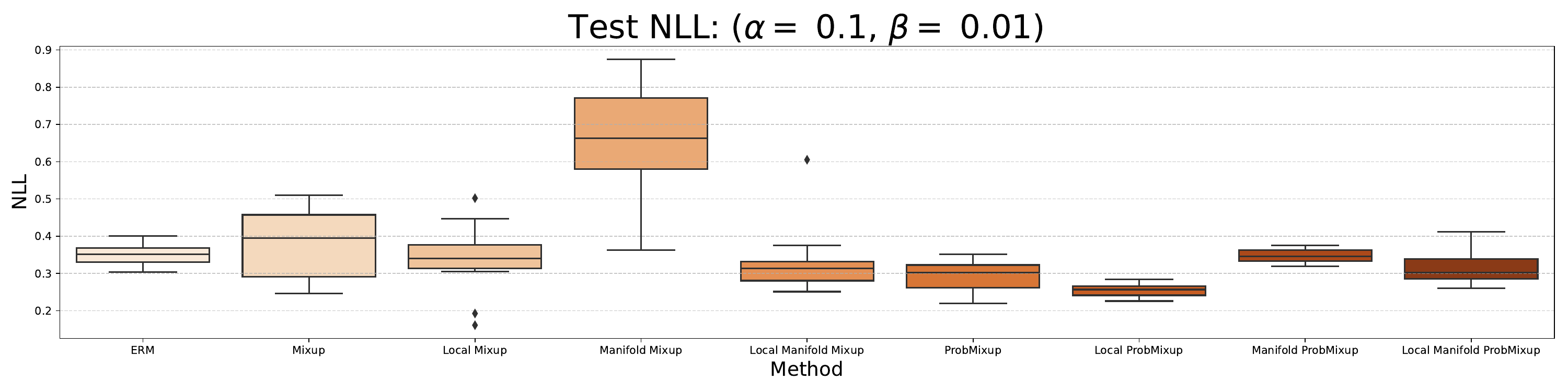}
        \caption{Average negative log-likelihood.}
        \label{subfig: toy_classification_alpha_0p1_beta_0p01_nll}
    \end{subfigure}
    
    \caption{Toy classification results for $\alpha=0.1$ and $\beta=0.01$.}
    \label{fig: toy_classification_alpha_0p1_beta_0p01}
\end{figure}

\begin{figure}[htbp]
    \centering
    \begin{subfigure}{\textwidth}
        \centering
        \includegraphics[width=0.8\textwidth]{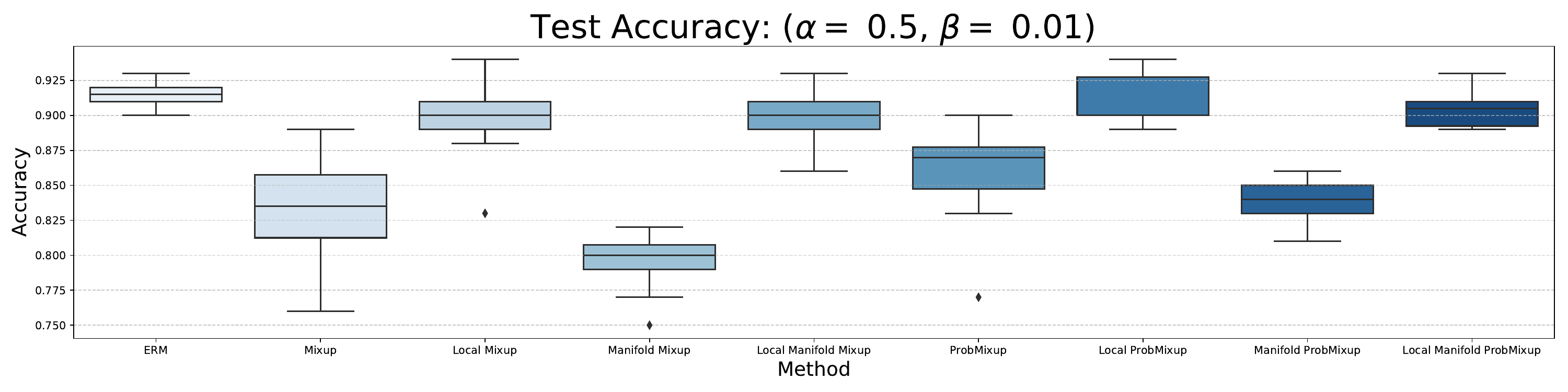}
        \caption{Average accuracy.}
        \label{subfig: toy_classification_alpha_0p5_beta_0p01_acc}
    \end{subfigure}
    
    \vspace{1em} 

    \begin{subfigure}{\textwidth}
        \centering
        \includegraphics[width=0.8\textwidth]{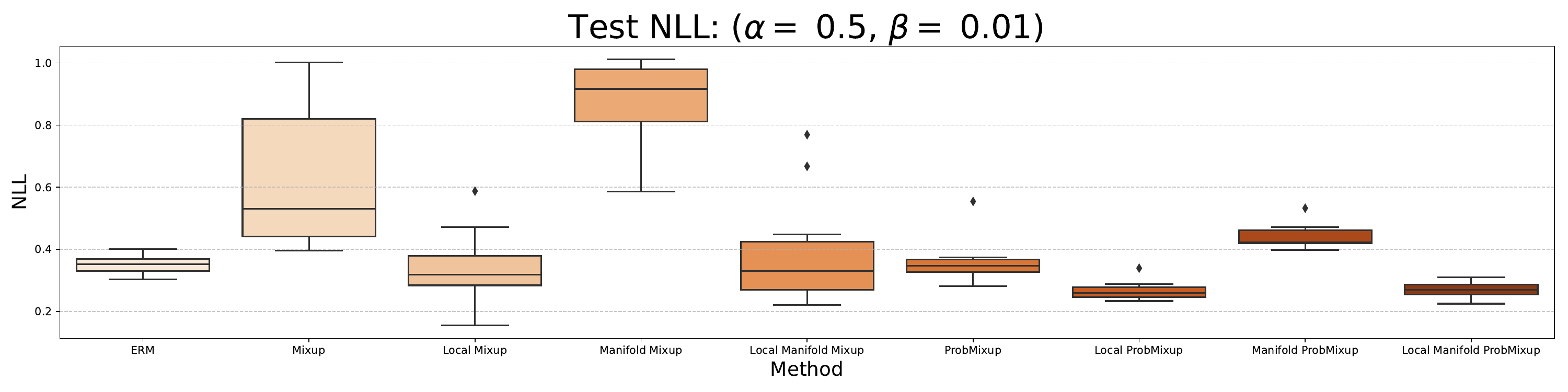}
        \caption{Average negative log-likelihood.}
        \label{subfig: toy_classification_alpha_0p5_beta_0p01_nll}
    \end{subfigure}
    
    \caption{Toy classification results for $\alpha=0.5$ and $\beta=0.01$.}
    \label{fig: toy_classification_alpha_0p5_beta_0p01}
\end{figure}

\begin{figure}[htbp]
    \centering
    \begin{subfigure}{\textwidth}
        \centering
        \includegraphics[width=0.8\textwidth]{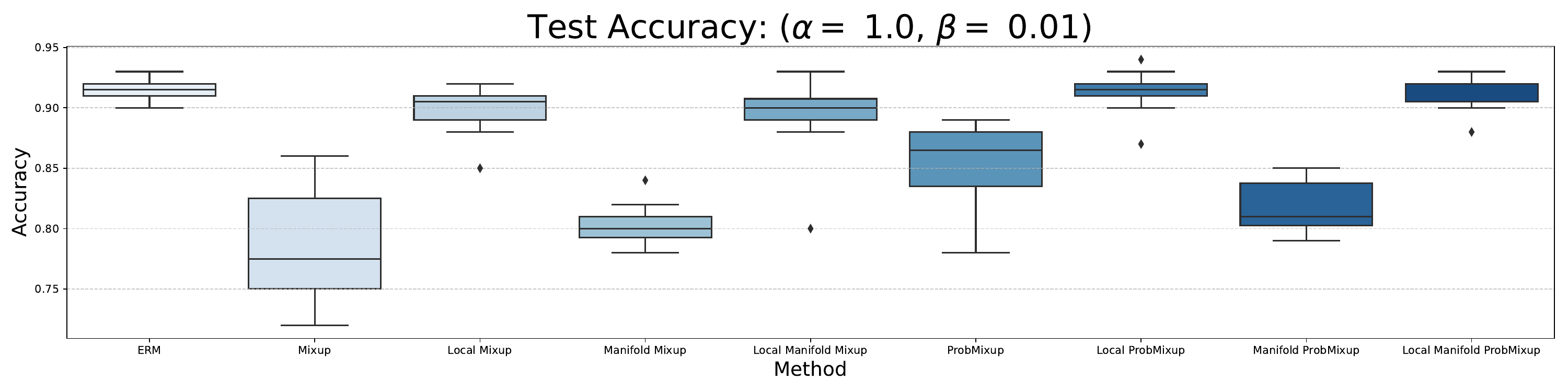}
        \caption{Average accuracy.}
        \label{subfig: toy_classification_alpha_1p0_beta_0p01_acc}
    \end{subfigure}
    
    \vspace{1em} 

    \begin{subfigure}{\textwidth}
        \centering
        \includegraphics[width=0.8\textwidth]{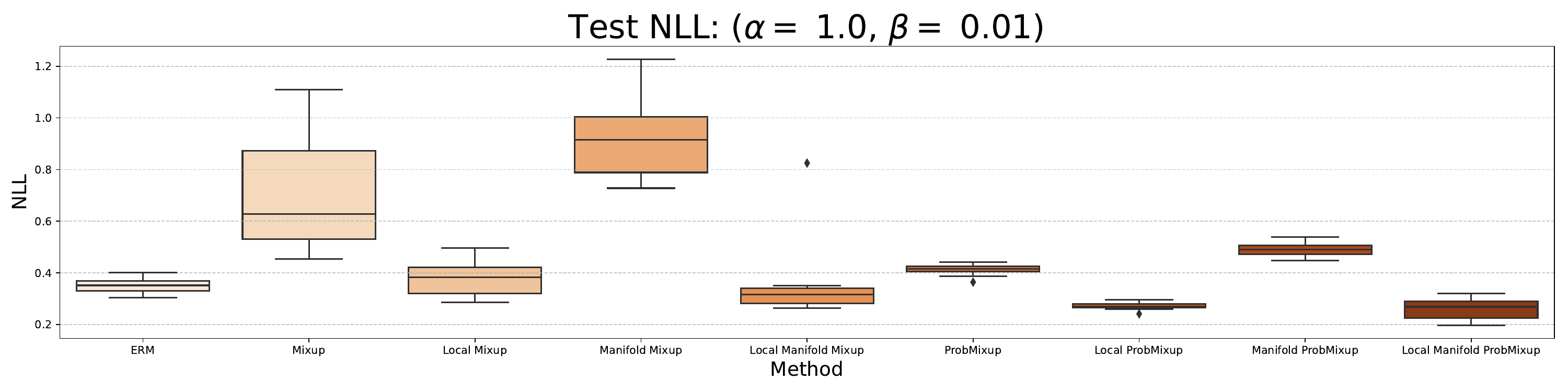}
        \caption{Average negative log-likelihood.}
        \label{subfig: toy_classification_alpha_1p0_beta_0p01_nll}
    \end{subfigure}
    
    \caption{Toy classification results for $\alpha=1$ and $\beta=0.01$.}
    \label{fig: toy_classification_alpha_1p0_beta_0p01}
\end{figure}

\subsection{UCI} \label{sec: app-UCI}

We include details of dataset and feature sizes below:

\begin{itemize}
    \item \textit{bostonHousing}: $n=506, d=13$;
    \item \textit{concrete}: $n=1030, d=8$;
    \item \textit{energy}: $n=768, d=8$;
    \item \textit{kin8nm}: $n=8192, d=8$;
    \item \textit{naval-propulsion-plant}: $n=11934, d=16$;
    \item \textit{power-plant}: $n=9568, d=4$;
    \item \textit{power-plant}: $n=1599, d=11$;
    \item \textit{yacht}: $n=308, d=6$.
\end{itemize}

\begin{table*}[!ht]
\centering
\resizebox{\textwidth}{!}{%
\begin{tabular}{@{}c|c|cccc|cccccc@{}}
\cmidrule{3-12}
\multicolumn{2}{c|}{} & \multicolumn{4}{c|}{Mixup Methods} & \multicolumn{6}{c}{ProbMixup Methods} \\ \midrule
Dataset & ERM & \textsf{Mix.} & \textsf{Loc$^\mathsf{K}$Mix.} & \textsf{M-Mix.} & Loc$^\mathsf{K}$M-Mix. & \textsf{ProbMix.} & \textsf{Loc$^\mathsf{K}$ProbMix.} & \textsf{M-ProbMix.} & \textsf{Loc$^\mathsf{K}$M-ProbMix.} & \textsf{M-ProbMix.}$^\star$ & \textsf{Loc$^\mathsf{K}$M-ProbMix.}$^\star$ \\ \midrule
bostonHousing & 5.77 $\pm$ 3.23 & 4.68 $\pm$ 3.00 & 5.25 $\pm$ 3.33 & 4.65 $\pm$ 2.67 & 4.72 $\pm$ 2.86 & 4.27 $\pm$ 2.75 & 5.00 $\pm$ 3.18 & 4.54 $\pm$ 2.84 & 3.93 $\pm$ 2.52 & 3.27 $\pm$ 0.97 & \textbf{3.23 $\pm$ 1.03} \\
energy & \textbf{0.44 $\pm$ 0.09} & 0.51 $\pm$ 0.10 & 0.45 $\pm$ 0.07 & 0.46 $\pm$ 0.08 & 0.45 $\pm$ 0.08 & 0.47 $\pm$ 0.09 & 0.48 $\pm$ 0.08 & 0.48 $\pm$ 0.07 & 0.49 $\pm$ 0.08 & 1.04 $\pm$ 0.46 & 0.76 $\pm$ 0.16 \\
wine-quality-red & 0.76 $\pm$ 0.10 & 0.72 $\pm$ 0.10 & 0.72 $\pm$ 0.10 & 0.77 $\pm$ 0.09 & 0.71 $\pm$ 0.09 & 0.75 $\pm$ 0.10 & 0.74 $\pm$ 0.10 & 0.77 $\pm$ 0.10 & 0.72 $\pm$ 0.10 & \textbf{0.66 $\pm$ 0.08} & \textbf{0.66 $\pm$ 0.08} \\
concrete & 5.88 $\pm$ 2.67 & 5.10 $\pm$ 0.58 & 5.19 $\pm$ 0.59 & 5.76 $\pm$ 2.73 & 5.73 $\pm$ 2.45 & 5.47 $\pm$ 2.76 & \textbf{5.08 $\pm$ 0.75} & 5.90 $\pm$ 2.50 & 5.88 $\pm$ 2.41 & 5.48 $\pm$ 0.67 & 5.43 $\pm$ 0.54 \\
power-plant & 3.91 $\pm$ 0.18 & 3.93 $\pm$ 0.17 & 3.96 $\pm$ 0.15 & 3.93 $\pm$ 0.17 & 3.97 $\pm$ 0.15 & \textbf{3.85 $\pm$ 0.14} & 3.96 $\pm$ 0.19 & 4.09 $\pm$ 0.22 & 4.13 $\pm$ 0.21 & 4.21 $\pm$ 0.20 & 4.24 $\pm$ 0.20 \\
yacht & 1.26 $\pm$ 2.71 & 1.96 $\pm$ 0.59 & 1.37 $\pm$ 0.37 & 0.70 $\pm$ 0.26 & 0.73 $\pm$ 0.30 & 0.92 $\pm$ 0.35 & 1.21 $\pm$ 0.58 & \textbf{0.68 $\pm$ 0.29} & 0.70 $\pm$ 0.31 & 1.41 $\pm$ 0.65 & 1.73 $\pm$ 0.74 \\
kin8nm$^\dagger$ & 7.41 $\pm$ 0.34 & 7.73 $\pm$ 0.34 & 7.53 $\pm$ 0.37 & \textbf{7.28 $\pm$ 0.26} & 7.62 $\pm$ 0.42 & 7.52 $\pm$ 0.36 & 7.54 $\pm$ 0.42 & 7.66 $\pm$ 0.52 & 7.73 $\pm$ 0.35 & 8.01 $\pm$ 0.43 & 8.12 $\pm$ 0.51 \\
naval-propulsion-plant$^\dagger$ & 0.21 $\pm$ 0.33 & 0.06 $\pm$ 0.01 & 0.09 $\pm$ 0.21 & 0.05 $\pm$ 0.03 & \textbf{0.04 $\pm$ 0.03} & \textbf{0.04 $\pm$ 0.07} & 0.09 $\pm$ 0.13 & 0.06 $\pm$ 0.02 & 0.17 $\pm$ 0.16 & 0.12 $\pm$ 0.01 & 0.13 $\pm$ 0.02 \\
\bottomrule
\end{tabular}
}
\caption{RMSE for UCI regression datasets. $^\dagger$: normalized to a single integer digit. $^\star$: implements separate variance networks, for details see text. In a majority of the datasets, probabilistic mixup methods outperform ERM and mixup methods in terms of RMSE.}
\label{tab: RMSE_uci_full}
\end{table*}

\subsection{Financial Forecasting} \label{app: stocks}

Here, we provide additional tables showing the performance of each method on the stock datasets in terms of average RMSE on the test set (please see Table \ref{tab: RMSE_stock_full}). We also provide the average NLL and average RMSE on the training set (please see Tables \ref{tab: NLL_stock_full_training} and \ref{tab: RMSE_stock_full_training}). Finally, to understand the performance of the regularization techniques across the GME and NVDA stocks, we provide plots of the conditional density estimates corresponding to the best and worst NLL on both the training and test set. In particular, Figure \ref{fig: gme_lstm_predictions} shows the conditional density estimates for the GME dataset using the LSTM predictor for the following regularization techniques: \textsf{Mix}, \textsf{M-Mix}, and \textsf{Loc$^\mathsf{K}$M-ProbMix}. Figure \ref{fig: gme_transformer_predictions} shows the conditional density estimates for the GME dataset using the Transformer predictor for the following regularization techniques: \textsf{Mix} and \textsf{M-ProbMix}. Finally, Figure \ref{fig: nvda_lstm_predictions} shows the conditional density estimates for the NVDA dataset using the LSTM predictor for the following techniques: \textsf{ERM}, \textsf{ProbMix}, and \textsf{M-ProbMix}. All in all, we find that methods that perform well in terms of NLL tend to have more conservative estimates of the variance of the conditional density, while models that performed poorly had both inaccurate means and variances.

\begin{table}[!ht]
\centering
\resizebox{\columnwidth}{!}{%
\begin{tabular}{@{}clllllllll@{}}
\toprule
\multicolumn{2}{c}{LSTM}                                            & \multicolumn{4}{c}{Mixup Methods}                                                                                                                                              & \multicolumn{4}{c}{ProbMixup Methods}                                                                                                                                                         \\ \midrule
\multicolumn{1}{c|}{Dataset} & \multicolumn{1}{c|}{\textsf{ERM}}    & \multicolumn{1}{c}{\textsf{Mix}} & \multicolumn{1}{c}{\textsf{Loc$^\mathsf{K}$Mix}} & \multicolumn{1}{c}{\textsf{M-Mix}} & \multicolumn{1}{c|}{\textsf{Loc$^\mathsf{K}$M-Mix}} & \multicolumn{1}{c}{\textsf{ProbMix}} & \multicolumn{1}{c}{\textsf{Loc$^\mathsf{K}$ProbMix}} & \multicolumn{1}{c}{\textsf{M-ProbMix}} & \multicolumn{1}{c}{\textsf{Loc$^\mathsf{K}$M-ProbMix}} \\ \midrule
\multicolumn{1}{c|}{GME}     & \multicolumn{1}{l|}{$1.64 \pm 0.02$} & $\mathbf{1.48 \pm 0.02}$                  & $1.60 \pm 0.02$                                  & $1.65 \pm 0.01$                    & \multicolumn{1}{l|}{$1.62 \pm 0.03$}                & $1.71 \pm 0.03$                      & $1.65 \pm 0.04$                                      & $1.57 \pm 0.01$                        & $1.57 \pm 0.02$                                        \\
\multicolumn{1}{c|}{GOOG}    & \multicolumn{1}{l|}{$2.06 \pm 0.10$} & $2.13 \pm 0.05$                  & $1.87 \pm 0.11$                                  & $2.12 \pm 0.09$                    & \multicolumn{1}{l|}{$1.98 \pm 0.13$}                & $1.72 \pm 0.16$                      & $\mathbf{1.68 \pm 0.12}$                                      & $1.76 \pm 0.10$                        & $1.81 \pm 0.08$                                        \\
\multicolumn{1}{c|}{NVDA}    & \multicolumn{1}{l|}{$0.25 \pm 0.02$} & $0.33 \pm 0.04$                  & $0.42 \pm 0.04$                                  & $0.31 \pm 0.03$                    & \multicolumn{1}{l|}{$0.44 \pm 0.04$}                & $\mathbf{0.24 \pm 0.02}$                      & $0.39 \pm 0.04$                                      & $0.55 \pm 0.02$                        & $0.40 \pm 0.01$                                        \\
\multicolumn{1}{c|}{RCL}     & \multicolumn{1}{l|}{$0.59 \pm 0.02$} & $\mathbf{0.31 \pm 0.01}$                  & $0.58 \pm 0.02$                                  & $0.65 \pm 0.02$                    & \multicolumn{1}{l|}{$0.54 \pm 0.01$}                & $0.62 \pm 0.01$                      & $0.57 \pm 0.02$                                      & $0.49 \pm 0.02$                        & $0.59 \pm 0.02$                                        \\ \midrule
\multicolumn{2}{c|}{Transformer}                                    & \multicolumn{4}{c|}{Mixup Methods}                                                                                                                                             & \multicolumn{4}{c}{ProbMixup Methods}                                                                                                                                                         \\ \midrule
\multicolumn{1}{c|}{Dataset} & \multicolumn{1}{c|}{\textsf{ERM}}    & \multicolumn{1}{c}{\textsf{Mix}} & \multicolumn{1}{c}{\textsf{Loc$^\mathsf{K}$Mix}} & \multicolumn{1}{c}{\textsf{M-Mix}} & \multicolumn{1}{c|}{\textsf{Loc$^\mathsf{K}$M-Mix}} & \multicolumn{1}{c}{\textsf{ProbMix}} & \multicolumn{1}{c}{\textsf{Loc$^\mathsf{K}$ProbMix}} & \multicolumn{1}{c}{\textsf{M-ProbMix}} & \multicolumn{1}{c}{\textsf{Loc$^\mathsf{K}$M-ProbMix}} \\ \midrule
\multicolumn{1}{c|}{GME}     & \multicolumn{1}{l|}{$1.82 \pm 0.03$} & $1.86 \pm 0.01$                  & $1.85 \pm 0.01$                                  & $1.86 \pm 0.01$                    & \multicolumn{1}{l|}{$1.79 \pm 0.02$}                & $\mathbf{1.78 \pm 0.02}$                      & $1.84 \pm 0.01$                                      & $1.81 \pm 0.01$                        & $1.82 \pm 0.02$                                        \\
\multicolumn{1}{c|}{GOOG}    & \multicolumn{1}{l|}{$2.24 \pm 0.06$} & $2.46 \pm 0.03$                  & $\mathbf{1.68 \pm 0.10}$                                  & $1.99 \pm 0.09$                    & \multicolumn{1}{l|}{$2.06 \pm 0.07$}                & $2.14 \pm 0.10$                      & $1.79 \pm 0.09$                                      & $1.83 \pm 0.13$                        & $2.00 \pm 0.12$                                        \\
\multicolumn{1}{c|}{NVDA}    & \multicolumn{1}{l|}{$0.36 \pm 0.02$} & $\mathbf{0.28 \pm 0.01}$                  & $0.42 \pm 0.02$                                  & $0.36 \pm 0.01$                    & \multicolumn{1}{l|}{$0.35 \pm 0.01$}                & $0.29 \pm 0.01$                      & $0.36 \pm 0.01$                                      & $0.35 \pm 0.02$                        & $0.37 \pm 0.01$                                        \\
\multicolumn{1}{c|}{RCL}     & \multicolumn{1}{l|}{$0.43 \pm 0.01$} & $\mathbf{0.23 \pm 0.01}$                  & $0.45 \pm 0.01$                                  & $0.42 \pm 0.01$                    & \multicolumn{1}{l|}{$0.44 \pm 0.01$}                & $0.42 \pm 0.01$                      & $0.41 \pm 0.01$                                      & $0.40 \pm 0.01$                        & $0.41 \pm 0.01$                                        \\ \bottomrule
\end{tabular}

}
\caption{RMSE for LSTM and Transformer models in the time series datasets. Results show that probabilistic mixup techniques do not lead to improvement in performance in terms of RMSE, as \textsf{ERM} and \textsf{Mix} outperform it on most of the datasets. This implies that while \textsf{ProbMix} may offer benefits in terms of calibrated uncertainty, its performance on uncertainty agnostic metrics like RMSE may not be up to par to ERM and other regularization schemes.}
    \label{tab: RMSE_stock_full}
\end{table}

\begin{table}[!ht]
\centering
\resizebox{\columnwidth}{!}{%
\begin{tabular}{@{}clllllllll@{}}
\toprule
\multicolumn{2}{c}{LSTM}                                             & \multicolumn{4}{c}{Mixup Methods}                                                                                                                                              & \multicolumn{4}{c}{ProbMixup Methods}                                                                                                                                                         \\ \midrule
\multicolumn{1}{c|}{Dataset} & \multicolumn{1}{c|}{\textsf{ERM}}     & \multicolumn{1}{c}{\textsf{Mix}} & \multicolumn{1}{c}{\textsf{Loc$^\mathsf{K}$Mix}} & \multicolumn{1}{c}{\textsf{M-Mix}} & \multicolumn{1}{c|}{\textsf{Loc$^\mathsf{K}$M-Mix}} & \multicolumn{1}{c}{\textsf{ProbMix}} & \multicolumn{1}{c}{\textsf{Loc$^\mathsf{K}$ProbMix}} & \multicolumn{1}{c}{\textsf{M-ProbMix}} & \multicolumn{1}{c}{\textsf{Loc$^\mathsf{K}$M-ProbMix}} \\ \midrule
\multicolumn{1}{c|}{GME}     & \multicolumn{1}{l|}{$-1.39 \pm 0.06$} & $-1.25 \pm 0.03$                 & $-1.74 \pm 0.03$                                 & $-1.42 \pm 0.03$                   & \multicolumn{1}{l|}{$\mathbf{-1.79 \pm 0.02}$}               & $-1.46 \pm 0.03$                     & $-1.70 \pm 0.04$                                     & $-0.94 \pm 0.03$                       & $-1.26 \pm 0.06$                                       \\
\multicolumn{1}{c|}{GOOG}    & \multicolumn{1}{l|}{$-0.80 \pm 0.06$} & $-0.81 \pm 0.02$                 & $\mathbf{-1.19 \pm 0.03}$                                 & $-0.97 \pm 0.04$                   & \multicolumn{1}{l|}{$-0.91 \pm 0.04$}               & $-0.88 \pm 0.04$                     & $-1.02 \pm 0.03$                                     & $-0.51 \pm 0.03$                       & $-0.91 \pm 0.04$                                       \\
\multicolumn{1}{c|}{NVDA}    & \multicolumn{1}{l|}{$-1.56 \pm 0.04$} & $-1.23 \pm 0.04$                 & $-1.76 \pm 0.07$                                 & $-1.57 \pm 0.03$                   & \multicolumn{1}{l|}{$-1.84 \pm 0.03$}               & $-1.50 \pm 0.02$                     & $\mathbf{-1.92 \pm 0.07}$                                     & $-0.94 \pm 0.03$                       & $-1.30 \pm 0.03$                                       \\
\multicolumn{1}{c|}{RCL}     & \multicolumn{1}{l|}{$-1.29 \pm 0.03$} & $-0.98 \pm 0.01$                 & $\mathbf{-1.54 \pm 0.01}$                                 & $-1.31 \pm 0.02$                   & \multicolumn{1}{l|}{$-1.47 \pm 0.03$}               & $-1.11 \pm 0.01$                     & $-1.52 \pm 0.03$                                     & $-0.56 \pm 0.06$                       & $-1.08 \pm 0.04$                                       \\ \midrule
\multicolumn{2}{c|}{Transformer}                                     & \multicolumn{4}{c|}{Mixup Methods}                                                                                                                                             & \multicolumn{4}{c}{ProbMixup Methods}                                                                                                                                                         \\ \midrule
\multicolumn{1}{c|}{Dataset} & \multicolumn{1}{c|}{\textsf{ERM}}     & \multicolumn{1}{c}{\textsf{Mix}} & \multicolumn{1}{c}{\textsf{Loc$^\mathsf{K}$Mix}} & \multicolumn{1}{c}{\textsf{M-Mix}} & \multicolumn{1}{c|}{\textsf{Loc$^\mathsf{K}$M-Mix}} & \multicolumn{1}{c}{\textsf{ProbMix}} & \multicolumn{1}{c}{\textsf{Loc$^\mathsf{K}$ProbMix}} & \multicolumn{1}{c}{\textsf{M-ProbMix}} & \multicolumn{1}{c}{\textsf{Loc$^\mathsf{K}$M-ProbMix}} \\ \midrule
\multicolumn{1}{c|}{GME}     & \multicolumn{1}{l|}{$-1.41 \pm 0.02$} & $-1.28 \pm 0.02$                 & $\mathbf{-1.59 \pm 0.01}$                                 & $-1.44 \pm 0.02$                   & \multicolumn{1}{l|}{$-1.51 \pm 0.03$}               & $-1.34 \pm 0.03$                     & $-1.52 \pm 0.03$                                     & $-1.42 \pm 0.02$                       & $-1.49 \pm 0.01$                                       \\
\multicolumn{1}{c|}{GOOG}    & \multicolumn{1}{l|}{$-0.99 \pm 0.04$} & $-0.85 \pm 0.02$                 & $-1.07 \pm 0.02$                                 & $-1.11 \pm 0.02$                   & \multicolumn{1}{l|}{$-1.05 \pm 0.01$}               & $-0.96 \pm 0.03$                     & $-1.11 \pm 0.01$                                     & $-1.08 \pm 0.01$                       & $\mathbf{-1.15 \pm 0.01}$                                       \\
\multicolumn{1}{c|}{NVDA}    & \multicolumn{1}{l|}{$-1.62 \pm 0.01$} & $-1.52 \pm 0.02$                 & $-1.72 \pm 0.01$                                 & $-1.47 \pm 0.04$                   & \multicolumn{1}{l|}{$-1.69 \pm 0.02$}               & $-1.48 \pm 0.03$                     & $\mathbf{-1.73 \pm 0.03}$                                     & $-1.49 \pm 0.05$                       & $-1.71 \pm 0.01$                                       \\
\multicolumn{1}{c|}{RCL}     & \multicolumn{1}{l|}{$-1.24 \pm 0.02$} & $-0.88 \pm 0.01$                 & $\mathbf{-1.39 \pm 0.03}$                                 & $-1.20 \pm 0.01$                   & \multicolumn{1}{l|}{$-1.31 \pm 0.02$}               & $-1.00 \pm 0.02$                     & $-1.32 \pm 0.02$                                     & $-1.19 \pm 0.01$                       & $-1.26 \pm 0.02$                                       \\ \bottomrule
\end{tabular}
}
\caption{NLL for LSTM and Transformer models in the time series datasets (in-sample performance).}
    \label{tab: NLL_stock_full_training}
\end{table}

\begin{table}[!ht]
\centering
\resizebox{\columnwidth}{!}{%
\begin{tabular}{@{}clllllllll@{}}
\toprule
\multicolumn{2}{c}{LSTM}                                            & \multicolumn{4}{c}{Mixup Methods}                                                                                                                                              & \multicolumn{4}{c}{ProbMixup Methods}                                                                                                                                                         \\ \midrule
\multicolumn{1}{c|}{Dataset} & \multicolumn{1}{c|}{\textsf{ERM}}    & \multicolumn{1}{c}{\textsf{Mix}} & \multicolumn{1}{c}{\textsf{Loc$^\mathsf{K}$Mix}} & \multicolumn{1}{c}{\textsf{M-Mix}} & \multicolumn{1}{c|}{\textsf{Loc$^\mathsf{K}$M-Mix}} & \multicolumn{1}{c}{\textsf{ProbMix}} & \multicolumn{1}{c}{\textsf{Loc$^\mathsf{K}$ProbMix}} & \multicolumn{1}{c}{\textsf{M-ProbMix}} & \multicolumn{1}{c}{\textsf{Loc$^\mathsf{K}$M-ProbMix}} \\ \midrule
\multicolumn{1}{c|}{GME}     & \multicolumn{1}{l|}{$0.09 \pm 0.01$} & $0.08 \pm 0.00$                  & $0.07 \pm 0.00$                                  & $0.09 \pm 0.01$                    & \multicolumn{1}{l|}{$\mathbf{0.05 \pm 0.00}$}                & $0.07 \pm 0.00$                      & $0.06 \pm 0.00$                                      & $0.12 \pm 0.00$                        & $0.08 \pm 0.01$                                        \\
\multicolumn{1}{c|}{GOOG}    & \multicolumn{1}{l|}{$0.18 \pm 0.01$} & $0.11 \pm 0.00$                  & $\mathbf{0.10 \pm 0.00}$                                  & $0.19 \pm 0.02$                    & \multicolumn{1}{l|}{$0.13 \pm 0.00$}                & $0.16 \pm 0.02$                      & $0.11 \pm 0.00$                                      & $0.31 \pm 0.02$                        & $0.22 \pm 0.02$                                        \\
\multicolumn{1}{c|}{NVDA}    & \multicolumn{1}{l|}{$0.07 \pm 0.00$} & $0.10 \pm 0.01$                  & $0.06 \pm 0.00$                                  & $0.06 \pm 0.00$                    & \multicolumn{1}{l|}{$\mathbf{0.05 \pm 0.00}$}                & $0.06 \pm 0.00$                      & $\mathbf{0.05 \pm 0.00}$                                      & $0.10 \pm 0.00$                        & $0.08 \pm 0.00$                                        \\
\multicolumn{1}{c|}{RCL}     & \multicolumn{1}{l|}{$0.09 \pm 0.00$} & $0.09 \pm 0.00$                  & $\mathbf{0.06 \pm 0.00}$                                  & $0.08 \pm 0.00$                    & \multicolumn{1}{l|}{$0.07 \pm 0.00$}                & $0.09 \pm 0.00$                      & $0.07 \pm 0.00$                                      & $0.15 \pm 0.01$                        & $0.10 \pm 0.00$                                        \\ \midrule
\multicolumn{2}{c|}{Transformer}                                    & \multicolumn{4}{c|}{Mixup Methods}                                                                                                                                             & \multicolumn{4}{c}{ProbMixup Methods}                                                                                                                                                         \\ \midrule
\multicolumn{1}{c|}{Dataset} & \multicolumn{1}{c|}{\textsf{ERM}}    & \multicolumn{1}{c}{\textsf{Mix}} & \multicolumn{1}{c}{\textsf{Loc$^\mathsf{K}$Mix}} & \multicolumn{1}{c}{\textsf{M-Mix}} & \multicolumn{1}{c|}{\textsf{Loc$^\mathsf{K}$M-Mix}} & \multicolumn{1}{c}{\textsf{ProbMix}} & \multicolumn{1}{c}{\textsf{Loc$^\mathsf{K}$ProbMix}} & \multicolumn{1}{c}{\textsf{M-ProbMix}} & \multicolumn{1}{c}{\textsf{Loc$^\mathsf{K}$M-ProbMix}} \\ \midrule
\multicolumn{1}{c|}{GME}     & \multicolumn{1}{l|}{$0.08 \pm 0.00$} & $0.08 \pm 0.00$                  & $\mathbf{0.07 \pm 0.00}$                                  & $\mathbf{0.07 \pm 0.00}$                    & \multicolumn{1}{l|}{$\mathbf{0.07 \pm 0.00}$}                & $\mathbf{0.07 \pm 0.00}$                      & $\mathbf{0.07 \pm 0.00}$                                      & $\mathbf{0.07 \pm 0.00}$                        & $\mathbf{0.07 \pm 0.00}$                                        \\
\multicolumn{1}{c|}{GOOG}    & \multicolumn{1}{l|}{$0.15 \pm 0.02$} & $0.12 \pm 0.00$                  & $0.14 \pm 0.02$                                  & $\mathbf{0.10 \pm 0.00}$                    & \multicolumn{1}{l|}{$0.16 \pm 0.01$}                & $0.14 \pm 0.02$                      & $0.12 \pm 0.01$                                      & $\mathbf{0.10 \pm 0.00}$                        & $0.11 \pm 0.00$                                        \\
\multicolumn{1}{c|}{NVDA}    & \multicolumn{1}{l|}{$0.06 \pm 0.00$} & $0.06 \pm 0.00$                  & $\mathbf{0.05 \pm 0.00}$                                  & $0.07 \pm 0.00$                    & \multicolumn{1}{l|}{$\mathbf{0.05 \pm 0.00}$}                & $0.06 \pm 0.00$                      & $0.06 \pm 0.00$                                      & $0.06 \pm 0.00$                        & $\mathbf{0.05 \pm 0.00}$                                        \\
\multicolumn{1}{c|}{RCL}     & \multicolumn{1}{l|}{$0.10 \pm 0.00$} & $0.12 \pm 0.00$                  & $\mathbf{0.08 \pm 0.00}$                                  & $0.10 \pm 0.00$                    & \multicolumn{1}{l|}{$0.09 \pm 0.00$}                & $0.10 \pm 0.00$                      & $0.09 \pm 0.00$                                      & $0.10 \pm 0.00$                        & $0.10 \pm 0.00$                                        \\ \bottomrule
\end{tabular}
}
\caption{RMSE for LSTM and Transformer models in the time series datasets (in-sample performance).}
    \label{tab: RMSE_stock_full_training}
\end{table}

\begin{figure}[htbp]
    \centering
    \begin{subfigure}{\textwidth}
        \centering
        \includegraphics[width=0.5\textwidth]{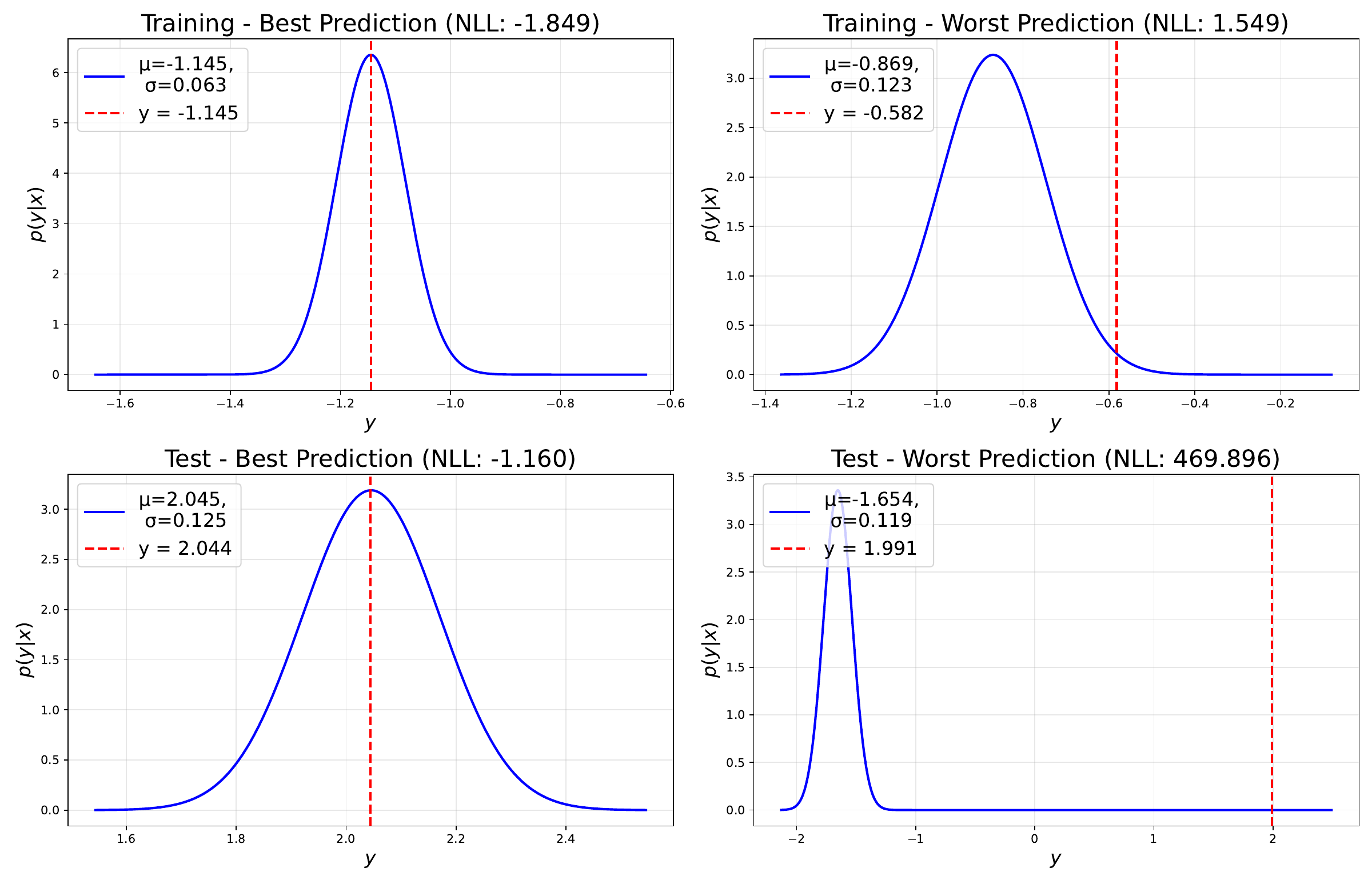}
        \caption{\textsf{Mix}.}
        \label{subfig: gme_lstm_mixup}
    \end{subfigure}
    
    \vspace{1em} 

    \begin{subfigure}{\textwidth}
        \centering
        \includegraphics[width=0.5\textwidth]{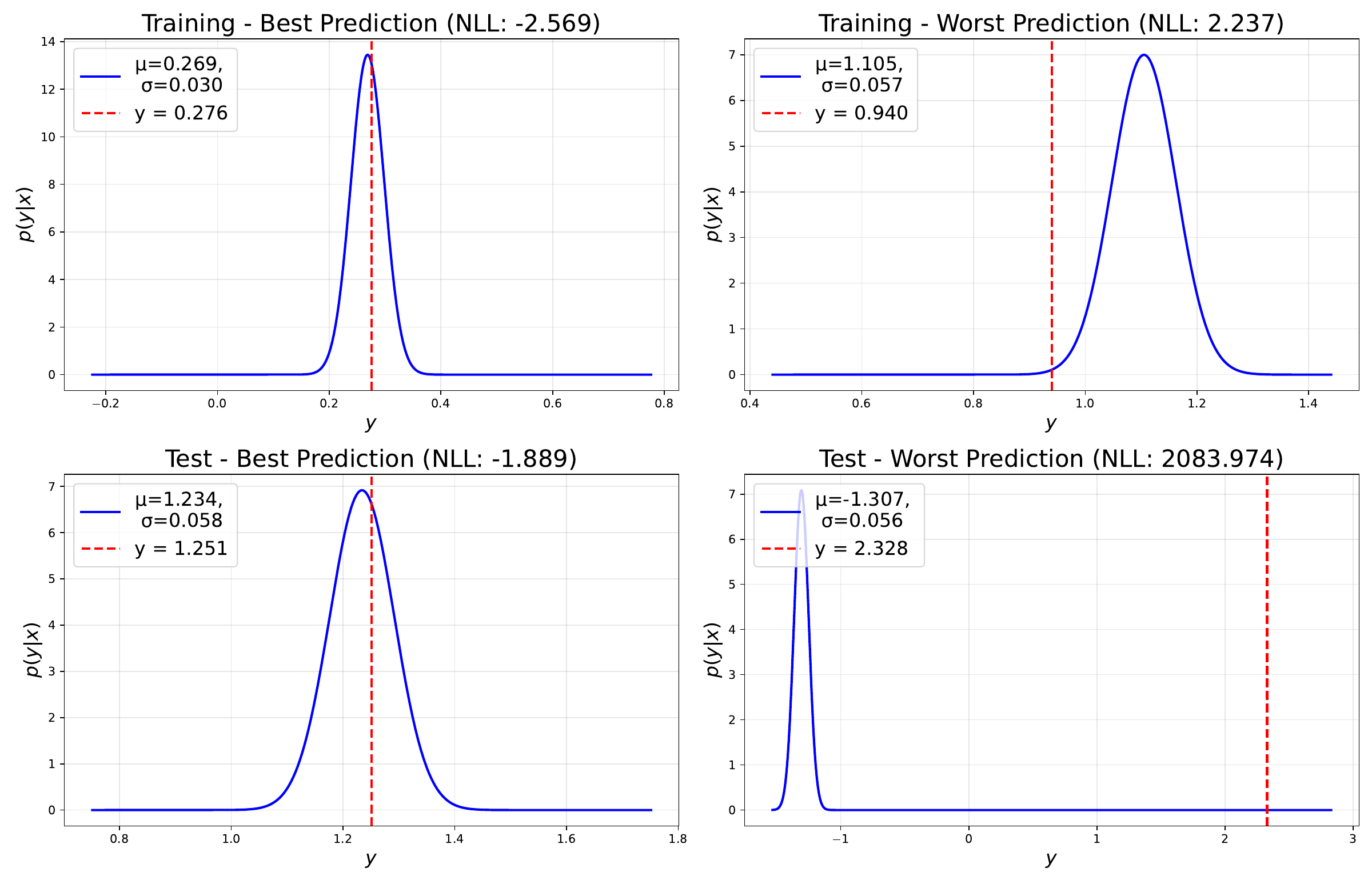}
        \caption{\textsf{M-Mix}.}
        \label{subfig: gme_lstm_manifold_mixup}
    \end{subfigure}

    \vspace{1em} 

    \begin{subfigure}{\textwidth}
        \centering
        \includegraphics[width=0.5\textwidth]{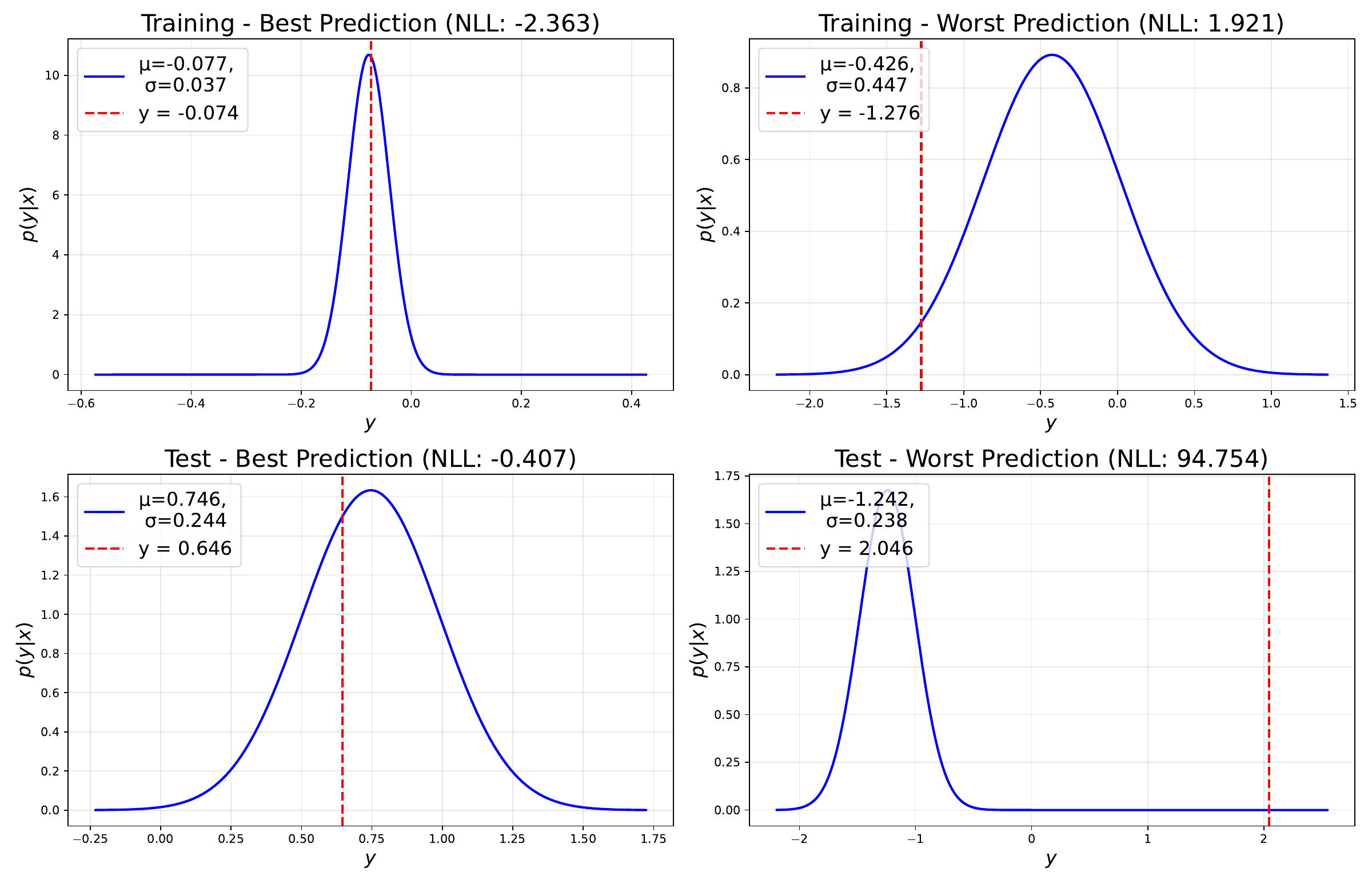}
        \caption{\textsf{Loc$^\mathsf{K}$M-ProbMix}.}
        \label{subfig: gme_lstm_local_manifold_probmix}
    \end{subfigure}
    
    \caption{Predicted conditional densities with best and worst NLL values for LSTM model on GME dataset}
    \label{fig: gme_lstm_predictions}
\end{figure}

\begin{figure}[htbp]
    \centering
    \begin{subfigure}{\textwidth}
        \centering
        \includegraphics[width=0.5\textwidth]{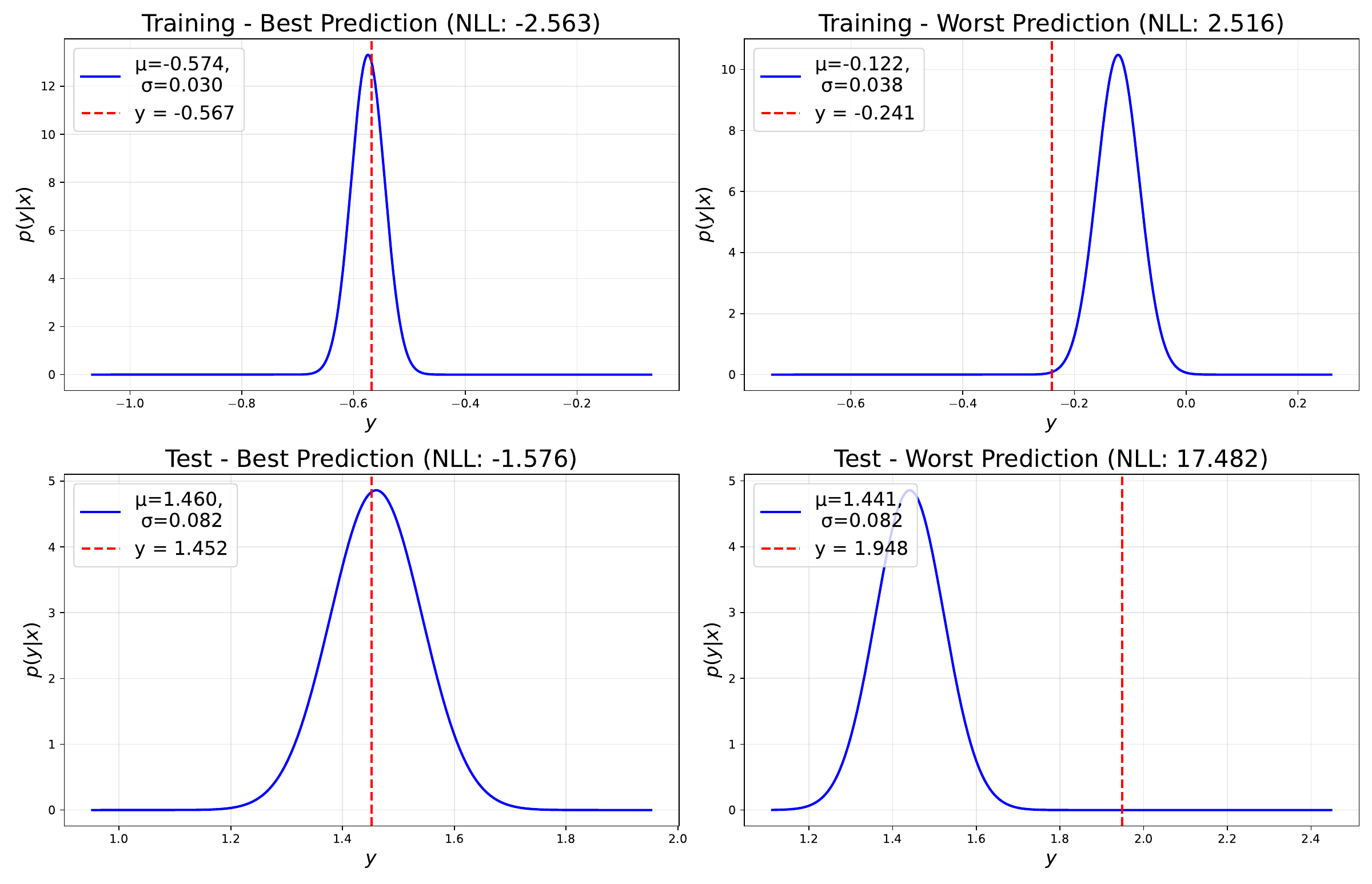}
        \caption{\textsf{ERM}.}
        \label{subfig: nvda_lstm_erm}
    \end{subfigure}
    
    \vspace{1em} 

    \begin{subfigure}{\textwidth}
        \centering
        \includegraphics[width=0.5\textwidth]{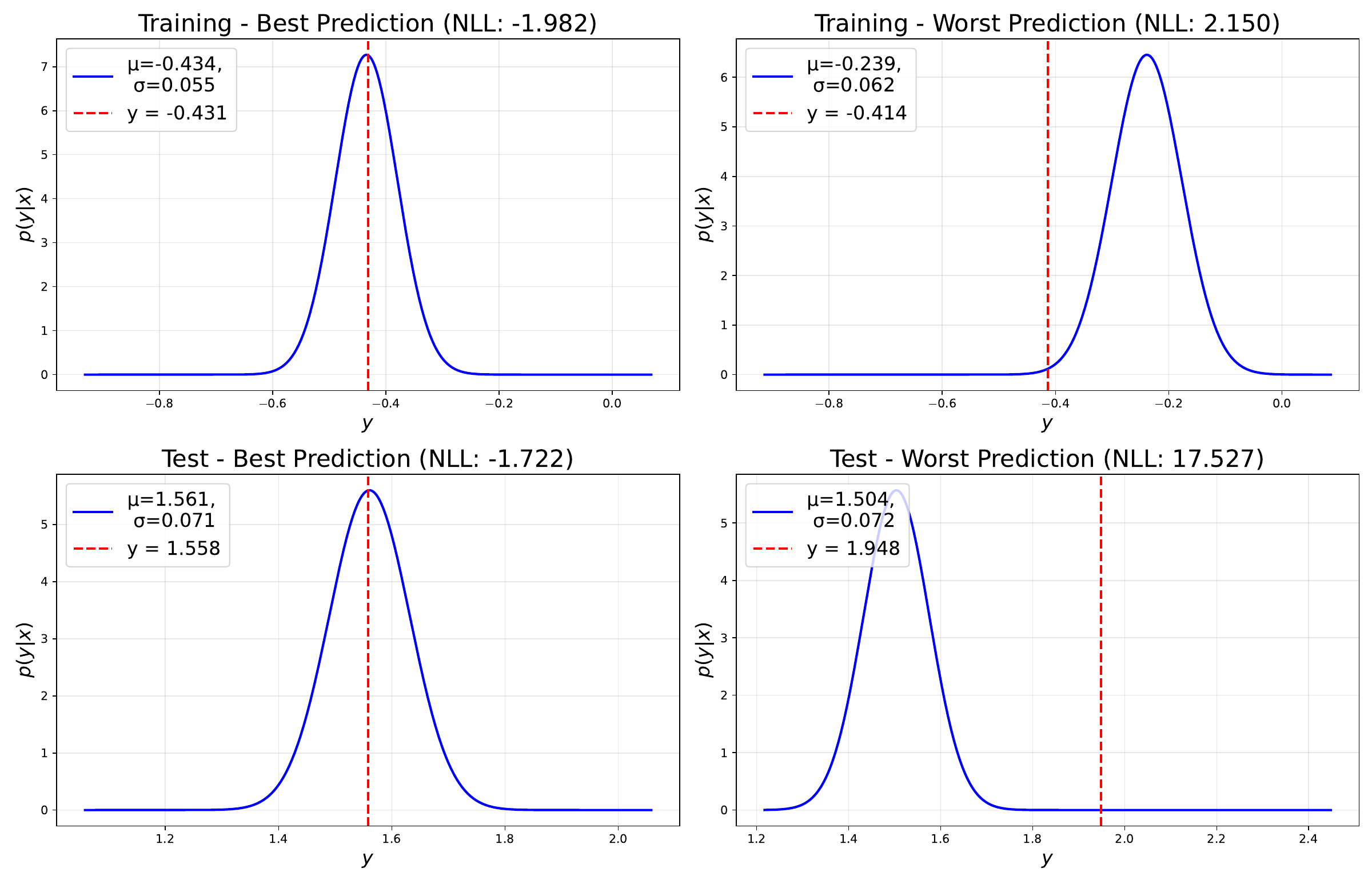}
        \caption{\textsf{ProbMix}.}
        \label{subfig: nvda_lstm_probmix}
    \end{subfigure}

    \vspace{1em} 

    \begin{subfigure}{\textwidth}
        \centering
        \includegraphics[width=0.5\textwidth]{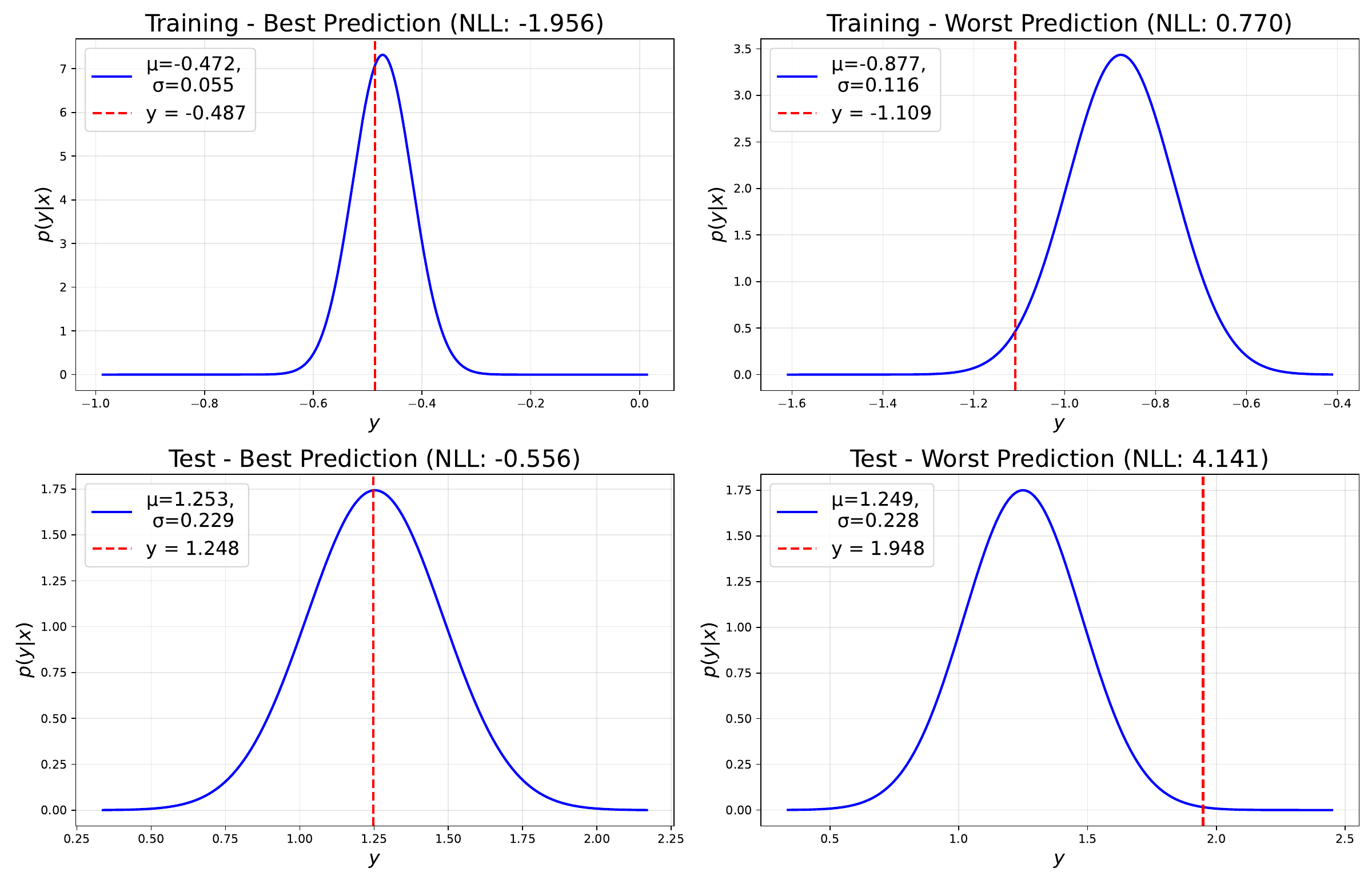}
        \caption{Manifold probabilistic mixup.}
        \label{subfig: nvda_lstm_manifold_probmix}
    \end{subfigure}
    
    \caption{Predicted conditional densities with best and worst NLL values for LSTM model on NVDA dataset}
    \label{fig: nvda_lstm_predictions}
\end{figure}

\begin{figure}[htbp]
    \centering
    \begin{subfigure}{\textwidth}
        \centering
        \includegraphics[width=0.5\textwidth]{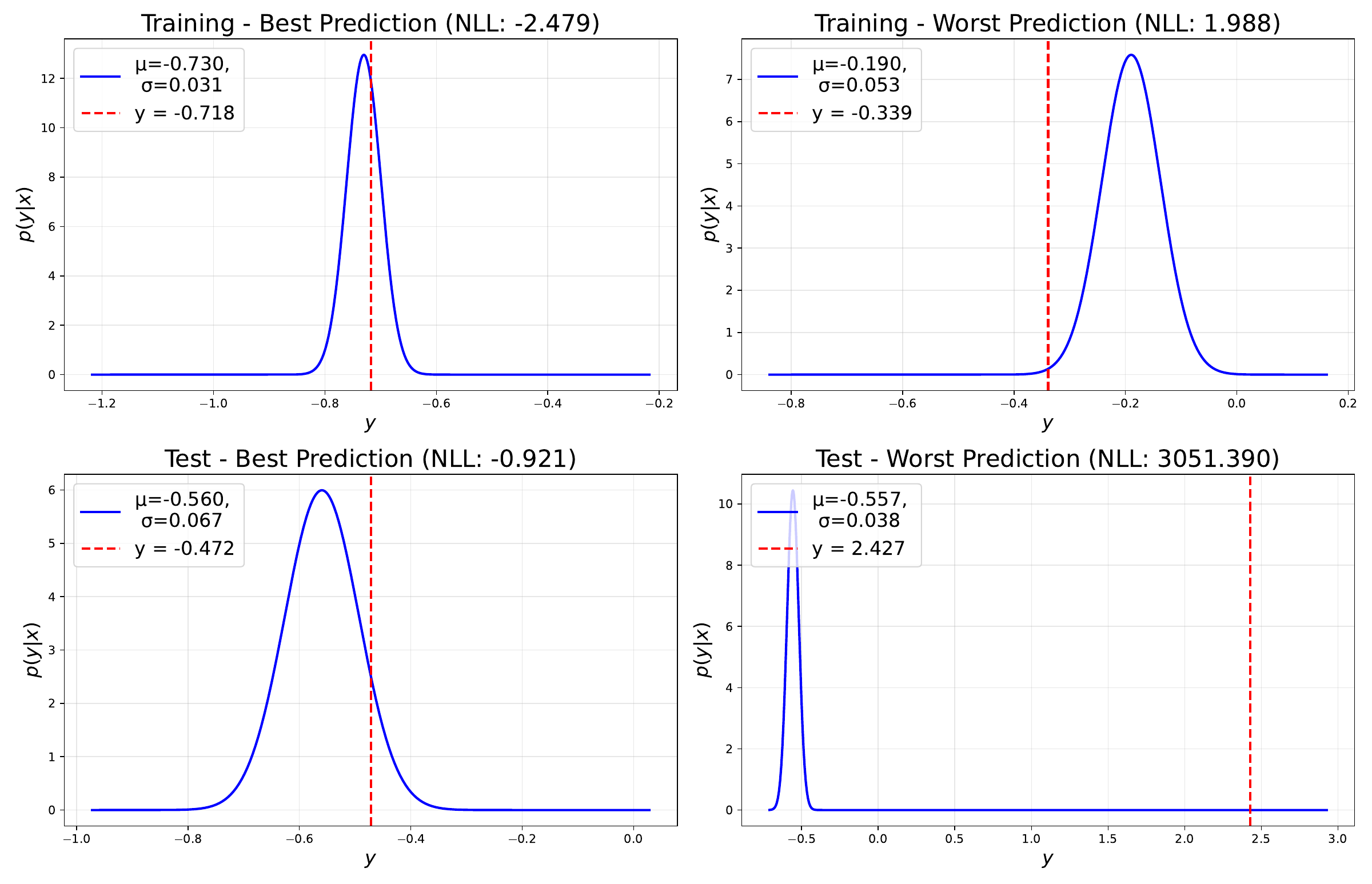}
        \caption{\textsf{Mix}.}
        \label{subfig: gme_transformer_mixup}
    \end{subfigure}
    
    \vspace{1em} 

    \begin{subfigure}{\textwidth}
        \centering
        \includegraphics[width=0.5\textwidth]{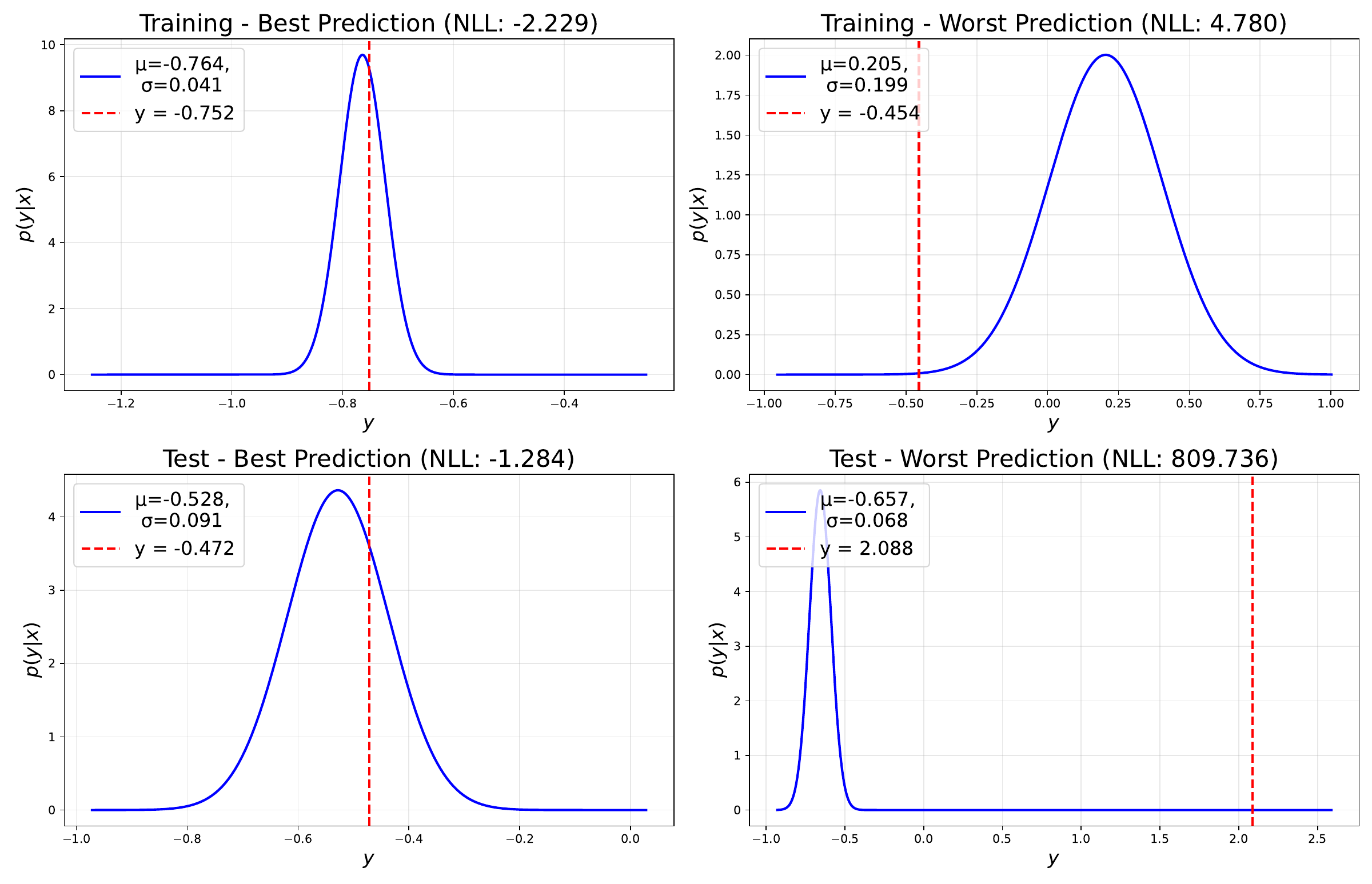}
        \caption{\textsf{M-ProbMix}.}
        \label{subfig: gme_transformer_manifold_probmixup}
    \end{subfigure}
    
    \caption{Predicted conditional densities with best and worst NLL values for Transformer model on GME dataset}
    \label{fig: gme_transformer_predictions}
\end{figure}

\end{document}